\documentclass[twoside,11pt]{article}
\usepackage{algorithm}
\usepackage{algorithmic}

\usepackage{jmlr2e}
\usepackage{lastpage}
\usepackage{multirow}        % safe after jmlr2e

\usepackage{amsmath}
\usepackage{amssymb}        % provides \square used by \qed
\usepackage{xcolor}          % NO [table] option
\usepackage{bm}
\usepackage{microtype}

\newcommand{\defeq}{\mathrel{\vcentcolon=}}
\providecommand{\vcentcolon}{%
  \mathrel{%
    \mathchoice{\vcenter{\hbox{$\displaystyle:$}}}%
               {\vcenter{\hbox{$:$}}}%
               {\vcenter{\hbox{$\scriptstyle:$}}}%
               {\vcenter{\hbox{$\scriptscriptstyle:$}}}}}

\providecommand{\qed}{\hfill\ensuremath{\square}}

\newcommand{\EIG}{\mathrm{EIG}}
\newcommand{\wEIG}{\widehat{\mathrm{EIG}}}
\newcommand{\calS}{\mathcal{S}}
\newcommand{\calA}{\mathcal{A}}
\newcommand{\calB}{\mathcal{B}}
\newcommand{\calR}{\mathcal{R}}
\newcommand{\calY}{\mathcal{Y}}
\newcommand{\calL}{\mathcal{L}}
\newcommand{\R}{\mathbb{R}}
\newcommand{\E}{\mathbb{E}}
\newcommand{\OPT}{\mathrm{OPT}}

\newcommand{\Vhard}{V^{\mathrm{hard}}_B}
\newcommand{\Vexp}{V^{\mathrm{exp}}_B}
\DeclareMathOperator{\Var}{Var}

\newcommand{\COHORTSIZE}{202{,}971}
\newcommand{\SPLIT}{70/15/15}

\jmlrheading{}{}{}{}{}{}{Bingham}

\ShortHeadings{Lagrangian Structure of Cost-Penalized AFA}{Bingham}
\firstpageno{1}

\begin{document}

\title{The Lagrangian Structure of Cost-Penalized\\
       Active Feature Acquisition}

\author{%
  \name Joseph Bingham
  \email jbingham@campus.technion.ac.il \\
  \addr Faculty of Computer Science \\
        Faculty of Biology \\
        Technion -- Israel Institute of Technology \\
        Haifa 3200003, Israel
}

\editor{To be assigned}

\maketitle

% ============================================================
\begin{abstract}%
Active Feature Acquisition (AFA) sequentially selects which
features to measure to maximize predictive accuracy subject
to a total acquisition budget.
A natural heuristic scores each candidate feature by
its Expected Information Gain minus a cost penalty,
$\EIG(j \mid x_{\calS}) - \lambda c_j$, and acquires
greedily (the \emph{EIG-Cost} policy), but without a
theoretical account of what it optimizes or how far it
falls short.
Our central result supplies one: an exact a-posteriori
\emph{certificate decomposition} that bounds the gap between
EIG-Cost and the optimal hard-budget policy by two
interpretable terms, a budget-slack term and a
greedy-suboptimality term $\Delta_\lambda$.
The certificate is \emph{fully computable whenever the
penalized optimum is tractable}, as on the small
heterogeneous-cost instances where we verify it exactly;
there the binding term $\Delta_\lambda$ is
near zero across $4{,}000$ exactly-solved cases.
On the large clinical problem $\Delta_\lambda$ is not
computable and the result is a diagnostic decomposition
without a numerical guarantee, a limitation we state
throughout.
The certificate rests on identifying EIG-Cost as one-step
greedy on the Lagrangian relaxation of the expected-cost AFA
problem, with $\lambda$ the dual multiplier of that
relaxation.
In the equal-cost case we further prove a \emph{structural
bound} on the greedy gap, $\Delta_\lambda \leq e^{-\gamma}
I(Y; X_{[d]})$ in terms of the submodularity ratio $\gamma$,
which vanishes in the near-submodular regime and turns the
certificate into a fully structural guarantee there.
Reaching this requires two supporting results we believe are
of independent interest: an adaptive chain-rule identity
establishing that cumulative step-wise EIG optimizes
\emph{transcript information} (the correct adaptive
objective, distinct from the mutual information of the
random terminal set); and a negative result that
$\calA \mapsto I(Y; X_{\calA})$ is not submodular even for
jointly Gaussian distributions, ruling out the
Nemhauser--Wolsey--Fisher bound and motivating the
submodularity-ratio analysis, with na\"ive Bayes identified
as the exact structure restoring submodularity.
We further bound the randomization gap between the
hard-budget and expected-cost optima, give a dual-sensitivity
observation (under local smoothness of the piecewise-linear
dual, the certificate inflates quadratically in multiplier
error near an interior minimiser), and give a
\emph{conditional} generator-shift bound on the CVAE
estimation error that holds given a bound on a forward
predictive KL, treated as a diagnostic quantity, not
as a transform of the training loss.
We illustrate these results on synthetic Gaussian
experiments (where every quantity is exactly computable),
AFA benchmarks (CKD, PhysioNet 2012), and a clinical
differential-diagnosis task on MIMIC-IV
(\COHORTSIZE{} admissions, $21$ conditions, Medicare CPT
costs), where EIG-Cost attains macro-F1
$0.180 \pm 0.012$ at budget \$30 while spending $6.6\%$
of full-panel cost.
The experiments test the assumptions and scope of each
result; we do not assert an end-to-end guarantee for
the general adaptive algorithm, which we do not claim.
\end{abstract}

\begin{keywords}
  active feature acquisition,
  expected information gain,
  budget-constrained optimization,
  Lagrangian relaxation,
  submodular functions,
  Monte Carlo estimation,
  clinical decision support,
  MIMIC-IV
\end{keywords}

% ============================================================
\section{Introduction}
\label{sec:intro}

Clinical decision-making under resource constraints is a
canonical instance of sequential information gathering.
A physician must decide which diagnostic tests to order,
each costing time and money, in order to reach an
accurate differential diagnosis as efficiently as possible.
This structure is shared by fraud detection, interactive
question answering, and scientific experiment design.

The general framework is \emph{Active Feature Acquisition}
(AFA) \citep{settles2009active}, a form of active learning
in which the feature space is the action space.
Prior work spans decision-theoretic value-of-information
methods \citep{krause2009}, deep reinforcement learning
agents \citep{janisch2019classification}, and
normalizing-flow imputors \citep{covert2023learning}.
The dominant information-theoretic approach selects features
greedily by their Expected Information Gain
\citep{ma2019eddi}.
The EIG-Cost method has shown promise in applied clinical
AFA; the present paper develops its theoretical foundation
and reports an independent experimental evaluation
(Section~\ref{sec:experiments}).

Specifically, EIG-Cost scores each unacquired feature $j$ as
\[
  \EIG(j \mid x_{\calS}) - \lambda\, c_j,
\]
where $c_j$ is the monetary cost and $\lambda \geq 0$ is a
cost-penalty coefficient.
Three critical questions were unanswered: why penalize EIG
linearly in cost? what guarantee does the greedy policy
carry? and how much does approximation error degrade it?

\paragraph{Contributions.}
We develop a theory of cost-aware adaptive feature
acquisition: we identify the correct adaptive objective
(transcript information), characterize cost-penalized greedy
acquisition as its Lagrangian one-step rule, pinpoint
exactly where classical submodular guarantees fail, and
delimit what can and cannot be certified for the resulting
policy.
\begin{enumerate}
  \item \textbf{An exact hard-budget certificate for EIG-Cost}
        (Section~\ref{sec:lagrangian}, Theorem~\ref{thm:lag_bound};
        our central result).
        We identify EIG-Cost as one-step greedy on the
        Lagrangian relaxation of the expected-cost AFA
        problem, with $\lambda$ the dual multiplier of that
        relaxation, and from this derive an exact
        \emph{two-term certificate}: the gap between EIG-Cost
        and the optimal hard-budget policy is bounded by a
        budget-slack term plus a greedy-suboptimality term
        $\Delta_\lambda$.
        Both terms are computable on tractable instances, and
        the binding term $\Delta_\lambda$ is empirically near
        zero (Section~\ref{sec:exp_hetcost}).
        We keep the hard-budget primal, the expected-cost
        relaxation, and the dual certificate as distinct
        objects throughout, and do \emph{not} claim greedy
        solves the relaxed optimization (NP-hard,
        \citealt{krause2005}) or that the general
        heterogeneous-cost adaptive algorithm carries a
        universal approximation ratio.
  \item \textbf{A computable bound on the
        hard-versus-expected-budget gap}
        (Proposition~\ref{thm:integrality_gap}).
        The one caveat the Lagrangian view cannot avoid
, that $\lambda$ is the dual of the expected-cost
        relaxation, not of the deployed hard-budget problem
, we close quantitatively: the randomization gap
        $\Vexp - \Vhard$ is at most the value rise of the
        single concave-envelope segment straddling the budget,
        equivalently $\lambda^*(c_2 - c_1)$, which vanishes
        when $B$ meets an achievable cost level.
  \item \textbf{A matched negative and positive submodularity
        result} (Section~\ref{sec:submodularity}).
        Negatively, $\calA \mapsto I(Y;\,X_{\calA})$ is not
        submodular (it exhibits strict complementarity)
        even for the additive Gaussian model
        $Y = \sum_j X_j + \eta$, ruling out direct
        application of the Nemhauser--Wolsey--Fisher $(1{-}1/e)$
        guarantee to AFA.
        Positively, $I(Y;\,X_{\calA})$ \emph{is} submodular
        under na\"ive Bayes structure
        (Proposition~\ref{prop:nb_submodular}), which
        identifies exactly the conditional-independence
        condition that makes the classical guarantee available.
        We parametrize the departure via the submodularity
        ratio \citep{das2011} and measure it on MIMIC-IV
        (Table~\ref{tab:gamma_mimic}).
\end{enumerate}
Two supporting results underpin the analysis: an adaptive
chain-rule identity (Lemma~\ref{prop:chain_rule}) equating
the expected cumulative step-wise EIG of any non-anticipating
adaptive policy with the information its acquisition
transcript carries about $Y$, the correct global object
for an adaptive policy, which differs from the mutual
information of the (random) terminal set, making
cumulative greedy EIG a coherent objective; and a bound on
the generator-shift component of the EIG estimation error
(Proposition~\ref{prop:cvae_error}) that vanishes as the
CVAE fits the true conditional, stated as a qualitative
convergence guarantee, without a tight quantitative
claim.

\paragraph{Scope of the guarantees.}
Two scope caveats apply throughout and we state them
up front.
First, the strong-duality result
(Proposition~\ref{prop:strong_duality}) is established for
\emph{finite-support} observations; continuous lab values
(as in MIMIC-IV) are handled by discretization, and we do
not pursue the infinite-dimensional constrained-MDP
machinery needed for the continuous case.
Second, the general heterogeneous-cost \emph{adaptive}
algorithm carries no universal approximation ratio; the
clean $(1-e^{-\gamma})$ guarantee (Corollary~\ref{thm:e2e})
holds only in the equal-cost, non-adaptive special case,
which we present as a sanity-check connection to classical
submodular greedy theory, and not as a guarantee for the
deployed method.
The practical content for the general algorithm is the
exact shortfall decomposition
(Theorem~\ref{thm:lag_bound}) together with the
\emph{a-posteriori} certificate obtained by measuring the
greedy gap $\Delta_\lambda$ where it is computable.

\paragraph{Paper organization.}
Section~\ref{sec:related} reviews related work.
Section~\ref{sec:formulation} formalizes the problem.
Section~\ref{sec:algorithm} describes EIG-Cost.
Section~\ref{sec:theory} contains the theoretical results.
Section~\ref{sec:experiments} presents empirical validation.
Section~\ref{sec:discussion} discusses limitations and
future work.
Section~\ref{sec:conclusion} concludes.
Full proofs appear in Appendix~\ref{app:proofs}.

% ============================================================
\section{Related Work}
\label{sec:related}

\paragraph{Active Feature Acquisition.}
The AFA problem was formalized by \citet{settles2009active};
see \citet{aronsson2025survey} for a recent survey.
Early work used decision-theoretic value-of-information
criteria \citep{krause2009} and Bayesian optimal stopping
\citep{lindley1956measure}.
Deep learning approaches include JAFA
\citep{janisch2019classification}, SHIM \citep{shim2018joint},
ACFlow / CAE \citep{covert2023learning}, DIME
\citep{gadgil2023dime}, and ODIN \citep{zannone2019odin}.
The closest prior method is EDDI \citep{ma2019eddi}, which
introduces the Partial VAE and greedy EIG selection;
EIG-Cost extends EDDI by incorporating monetary costs and
a hard budget constraint.
Optimal subset selection for information gain is NP-hard
\citep{krause2005}, establishing that polynomial-time
approximations are necessary.

\paragraph{Submodular optimization and information gain.}
\citet{nemhauser1978} proved the $(1-1/e)$ approximation
ratio for greedy maximization of monotone submodular
functions under cardinality constraints.
\citet{krause2005} applied this to sensor placement using
submodularity of the Gaussian process entropy $H(X_{\calA})$
and mutual information $I(X_{\calA};\,X_{V\setminus\calA})$.
A central contribution here is that these results do
\emph{not} carry over to $I(Y;\,X_{\calA})$ with a fixed
target $Y$.
For knapsack-constrained submodular maximization the
density greedy achieves $(1/2)(1-1/e)$ \citep{khuller1999}.
\citet{das2011} introduced the submodularity ratio for
parametric guarantees in near-submodular settings; we adapt
this to AFA.
Adaptive submodularity \citep{golovin2011} extends
approximation results to sequential decisions, though the
required conditions are not satisfied by EIG in general
(see the remark following Lemma~\ref{lem:synergy}).
The broader literature on approximate and robust submodular
observation selection \citep{krause2008robust} develops
guarantees that degrade gracefully with the distance from
submodularity; the substitutes/complements dichotomy of our
Lemma~\ref{lem:synergy} is the information-theoretic analog
of complementarity in the theory of supermodular games and
monotone comparative statics \citep{topkis1998}.

\paragraph{Budget-constrained Bayesian experimental design.}
Lagrangian relaxation of information-gathering budgets
appears in \citet{foster2021deep} and
\citet{rainforth2024modern}.
The economics of information establishes related duality
results \citep{radner1984}.
The Lagrangian dual interpretation of the EIG cost-penalty
formulation is, to our knowledge, new.

\paragraph{Clinical AFA.}
Clinical cost-aware test ordering has been studied in
\citet{kachuee2019opportunistic}.
A recurring phenomenon in cost-aware AFA is \emph{baseline
collapse}, in which many methods degenerate to acquiring
nothing under a cost penalty; we observe it in our
evaluation (Section~\ref{sec:exp_mimic}) and explain it
theoretically via Corollary~\ref{cor:collapse}, which
formalizes it as a direct consequence of the voluntary-stop
condition.

% ============================================================
\section{Problem Formulation}
\label{sec:formulation}

\begin{definition}[Cost-Aware AFA Instance]
\label{def:afa}
A \emph{cost-aware AFA instance} is a tuple
$(\mathcal{X}, \mathcal{Y}, p, \{G_1,\dots,G_m\}, c, B)$
where $X = (X_1,\ldots,X_d) \in \mathcal{X}$,
$Y \in \mathcal{Y}$ with joint distribution
$p(Y,X_1,\ldots,X_d)$.
The \emph{acquisition items} are disjoint \emph{panels}
$G_1,\dots,G_m$ partitioning a subset of $[d]$, each with
cost $c_i \geq 0$; acquiring panel $G_i$ reveals all its
member features jointly.
Zero-cost pre-acquired variables (e.g.\ admission vitals)
form a distinguished panel $G_0$ with $c_0 = 0$ that is
included in the initial state and not treated as an
acquisition action, so every \emph{acquirable} panel has
$c_i > 0$ and ratio scores $\EIG/c_i$ are well-defined.
When each panel is a singleton and all costs are positive
this reduces to the per-feature setting; we use ``feature''
and ``panel'' interchangeably where the distinction does not
matter.
A \emph{partial observation} is $(x_{\calS},\calS)$ with
$\calS \subseteq [d]$ and
$x_{\calS} \in \prod_{j \in \calS} \mathcal{X}_j$, and
$B > 0$.
\end{definition}

\begin{definition}[Adaptive Policy]
\label{def:policy}
An \emph{adaptive policy} $\pi$ maps partial observations
to the next feature index or a \textsc{stop} action, with
each choice measurable with respect to the observed history.
It induces a random terminal set $\calS_T^\pi$.
Let $\Pi_B$ denote the policies whose acquisition cost
satisfies the budget almost surely,
$\sum_{j\in\calS_T^\pi} c_j \leq B$.
\end{definition}

\noindent
The value of an adaptive policy is the information its
acquisition transcript $H_T^\pi$ carries about $Y$,
\[
  V(\pi) \;=\; I(Y;\, H_T^\pi)
         \;=\; H(Y) - \E\!\left[H(Y \mid H_T^\pi)\right],
\]
the expected reduction in posterior entropy
(Lemma~\ref{prop:chain_rule}); we write
$C(\pi) = \E[\sum_{j\in\calS_T^\pi} c_j]$ for the expected
cost.
For a \emph{non-adaptive} policy (fixed acquisition order
and fixed cardinality) the transcript and the acquired-set
value coincide, $V(\pi) = I(Y; X_{\calS_T})$; for adaptive
policies they differ (Remark~\ref{rem:transcript_vs_set}),
and the transcript form is the correct objective.
The deployed \emph{hard-budget} AFA problem constrains cost
on every episode:
\begin{equation}
  \Vhard \;=\; \max_{\pi \in \Pi_B}\; V(\pi)
  \quad\text{subject to}\quad
  \sum_{j \in \calS_T^\pi} c_j \leq B \text{ a.s.}
\label{eq:afa}
\end{equation}
The maximization is over deterministic adaptive policies
whose acquisition cost never exceeds $B$ on any episode.
We will also use its \emph{expected-cost randomized
relaxation}, $\Vexp = \max_\pi\{V(\pi) : C(\pi) \leq B\}$
over randomized policies with the budget imposed only in
expectation ($C(\pi) = \E[\sum_{j\in\calS_T^\pi} c_j]$).
These are distinct: $\Vhard \leq \Vexp$ always, and the two
differ by the randomization gap bounded in
Proposition~\ref{thm:integrality_gap}.
Strong duality (Section~\ref{sec:lagrangian}) holds for
$\Vexp$, not for $\Vhard$; we keep the two notations
separate throughout to avoid conflating them.

\begin{definition}[Expected Information Gain]
\label{def:eig}
\begin{equation}
  \EIG(j \mid x_{\calS})
  \;\defeq\;
  I(Y;\,X_j \mid X_{\calS} = x_{\calS})
  \;=\;
  H(Y \mid x_{\calS})
    - \E_{X_j \mid x_{\calS}}
      \!\bigl[H(Y \mid x_{\calS}, X_j)\bigr].
\label{eq:eig}
\end{equation}
Throughout, $H(Y \mid x_{\calS})$ abbreviates the entropy
of $Y$ under the conditional law $Y \mid X_{\calS} = x_{\calS}$
at the observed realization $x_{\calS}$, whereas
$H(Y \mid X_{\calS})$ (with capital $X$) denotes the
expectation of that entropy over $X_{\calS}$; we use the
lower-case form whenever a specific realization is fixed.
The estimated counterpart of any information quantity is
written with a hat, e.g.\ $\wEIG$ in \eqref{eq:mc_eig}.
\end{definition}

\begin{definition}[EIG-Cost Policy]
\label{def:eig_cost}
The \emph{EIG-Cost policy} with penalty $\lambda \geq 0$
selects at step $t$:
\begin{equation}
  j_t^* = \arg\max_{\substack{j \notin \calS_{t-1} \\
           c_j \leq B - C_{t-1}}}
  \bigl[\EIG(j \mid x_{\calS_{t-1}}) - \lambda c_j\bigr],
\label{eq:eig_cost_rule}
\end{equation}
terminating when no eligible feature has a positive
penalized score or when budget is exhausted.
\end{definition}

The central problem is the budget-constrained transcript
objective \eqref{eq:afa}:
\begin{equation}
  \Vhard = \max_{\pi \in \Pi_B}\;
  I(Y;\,H_T^\pi),
\label{eq:primal}
\end{equation}
with $V(\pi) = I(Y; H_T^\pi)$ the expected posterior-entropy
reduction (Lemma~\ref{prop:chain_rule}).

% ============================================================
\section{The EIG-Cost Algorithm}
\label{sec:algorithm}

Figure~\ref{fig:architecture} shows the EIG-Cost pipeline.
Algorithm~\ref{alg:eig_cost} states the procedure formally.
EIG \eqref{eq:eig} is approximated via $K$ Monte Carlo
samples from a CVAE $p_\theta(X_j \mid x_{\calS})$:
\begin{equation}
  \wEIG(j \mid x_{\calS})
  = H(Y \mid x_{\calS})
    - \frac{1}{K} \sum_{k=1}^K
      H\!\bigl(Y \mid x_{\calS},\,\hat{x}_j^{(k)}\bigr),
  \quad
  \hat{x}_j^{(k)} \sim p_\theta(X_j \mid x_{\calS}).
\label{eq:mc_eig}
\end{equation}
The posterior and conditional entropies are evaluated by a
mask-aware classifier $p_\phi(Y \mid x_{\calS}, m_{\calS})$
with observation mask $m_{\calS} \in \{0,1\}^d$.
In our clinical instantiation, the CVAE
is a TabularMAE architecture, the classifier is a
mask-aware MLP, and costs are 2026 Medicare CPT rates.

\begin{figure}[htb]
  \centering
  \includegraphics[width=\linewidth]{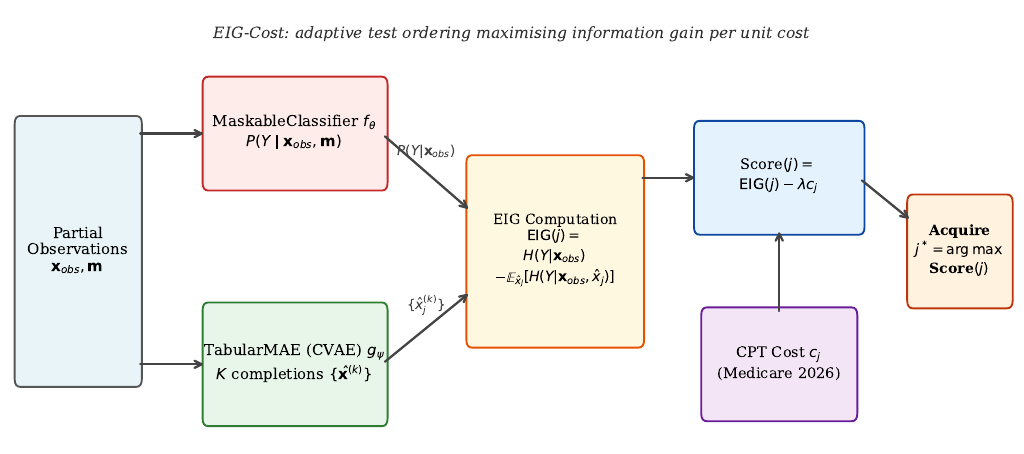}
  \caption{\textbf{EIG-Cost pipeline.}
    Partial observations $(x_{\calS}, m_{\calS})$ are fed
    simultaneously to the MaskableClassifier $f_\theta$
    (computing the posterior $P(Y \mid x_{\calS})$) and to
    the TabularMAE CVAE $g_\psi$ (generating $K$ Monte Carlo
    completions $\{\hat{x}^{(k)}\}$).
    The EIG Computation block evaluates Eq.~\eqref{eq:eig};
    the CPT cost $c_j$ is subtracted (scaled by $\lambda$)
    to yield the acquisition score.
    The highest-scoring panel is acquired and the loop
    repeats until budget exhaustion.}
  \label{fig:architecture}
\end{figure}

\begin{algorithm}[t]
\caption{EIG-Cost}
\label{alg:eig_cost}
\begin{algorithmic}[1]
\REQUIRE CVAE $p_\theta$, classifier $p_\phi$,
         costs $c$, budget $B$,
         penalty $\lambda \geq 0$, samples $K$
\STATE $\calS \gets \emptyset$;\; $C \gets 0$
\WHILE{$[d] \setminus \calS \neq \emptyset$}
  \FOR{each $j \in [d] \setminus \calS$ with
       $c_j \leq B - C$}
    \STATE Compute $\wEIG(j \mid x_{\calS})$
           via \eqref{eq:mc_eig};\;
           $s_j \gets \wEIG(j \mid x_{\calS}) - \lambda c_j$
  \ENDFOR
  \IF{$\max_j\, s_j \leq 0$}
    \STATE \textbf{break}
  \ENDIF
  \STATE $j^* \gets \arg\max_j\, s_j$;\;
         observe $x_{j^*}$
  \STATE $\calS \gets \calS \cup \{j^*\}$;\;
         $C \gets C + c_{j^*}$
\ENDWHILE
\RETURN $\hat{y} = \arg\max_y\, p_\phi(y \mid x_{\calS})$
\end{algorithmic}
\end{algorithm}

% ============================================================
\section{Theoretical Analysis}
\label{sec:theory}

Our analysis has one destination: an exact a-posteriori
\emph{certificate decomposition} on the hard-budget
optimality gap of EIG-Cost (Theorem~\ref{thm:lag_bound}),
which is the paper's central theoretical result.
The route there has three steps.
First, we establish the correct adaptive objective
transcript information, and show cumulative greedy EIG
optimizes it (Section~\ref{sec:chain_rule}); this is the
identity that makes the greedy rule coherent.
Second, we show why the classical route to a guarantee is
unavailable: $I(Y; X_{\calA})$ is not submodular, so the
Nemhauser--Wolsey--Fisher bound does not apply
(Section~\ref{sec:submodularity}).
Third, and centrally, we identify EIG-Cost as
greedy on the Lagrangian relaxation and derive the
certificate and its sensitivity
(Section~\ref{sec:lagrangian}).
The remaining subsections are supporting: an equal-cost
special case where a clean constant-factor ratio does hold,
an estimation-error bound for the CVAE, and a
characterization of voluntary stopping.

\subsection{The Adaptive Chain-Rule Identity}
\label{sec:chain_rule}

For an adaptive policy the terminal acquired set is a
\emph{random}, value-dependent object, so care is needed to
identify what cumulative stepwise EIG actually equals.
Let the \emph{transcript} after $t$ steps be the ordered
record of chosen indices and observed values,
\[
  H_t \;=\; \bigl(j_1, X_{j_1}, \ldots, j_t, X_{j_t}\bigr),
\]
and let $H_T$ be the transcript at the (random) stopping
time $T$.
Under a non-anticipating policy the indices $j_1,\ldots,j_t$
and the stopping decision are deterministic functions of the
observed values, so $H_t$ and the observed value vector
$X_{\calS_t}$ generate the same information about $Y$.

\begin{lemma}[Adaptive Chain-Rule Identity]
\label{prop:chain_rule}
For any non-anticipating adaptive policy $\pi$ acquiring
distinct features,
\begin{equation}
  \E\!\left[
    \sum_{t=1}^{T}
    \EIG\!\left(j_t \mid X_{\calS_{t-1}}\right)
  \right]
  \;=\;
  I\!\left(Y;\, H_T\right)
  \;=\;
  H(Y) - \E\!\left[H\!\left(Y \mid H_T\right)\right].
\label{eq:chain_rule_identity}
\end{equation}
The right-hand side is the information about $Y$ contained in
the full acquisition transcript, i.e.\ the expected reduction
in posterior entropy achieved by the policy.
\end{lemma}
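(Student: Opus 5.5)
The plan is to prove the identity by a telescoping (martingale) argument on posterior entropies along the transcript, rather than by a set-function chain rule. The latter does not apply directly, because the terminal set is random.

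Define $\Phi_t = H(Y \mid H_t)$ as the entropy of the posterior of $Y$ given the realized transcript, with $\Phi_0 = H(Y)$. Since $\pi$ is non-anticipating, the indices $j_1,\dots,j_t$ and the event $\{T \ge t\}$ are deterministic functions of the earlier observed values. The realized transcript $h_{t-1}$ therefore determines $\calS_{t-1}$ and $x_{\calS_{t-1}}$. Conversely, given the policy, $x_{\calS_{t-1}}$ (together with its index set) determines $h_{t-1}$. Hence the posterior of $Y$ given $H_{t-1}=h_{t-1}$ coincides with its posterior given $X_{\calS_{t-1}} = x_{\calS_{t-1}}$.

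\textbf{One-step identity.} On the event $\{T \ge t\}$, conditioning on $H_{t-1} = h_{t-1}$, the next index $j_t$ is fixed. The new transcript entry is $X_{j_t}$, distributed according to $p(X_{j_t} \mid x_{\calS_{t-1}})$; this uses that $j_t \notin \calS_{t-1}$, so that we observe a fresh coordinate rather than a repeat. Definition~\ref{def:eig} then gives
\[
\E\bigl[\Phi_{t-1} - \Phi_t \,\big|\, H_{t-1}=h_{t-1}\bigr] = \EIG\bigl(j_t \mid x_{\calS_{t-1}}\bigr).
\]
Here I use that $\Phi_t$ is the entropy of $Y$ given $(x_{\calS_{t-1}}, X_{j_t})$, since the index $j_t$ itself carries no extra information given $h_{t-1}$.

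\textbf{Summation via optional stopping.} Write
\[
\sum_{t=1}^T \EIG(j_t \mid X_{\calS_{t-1}}) = \sum_{t\ge 1} \mathbf{1}\{T\ge t\}\,\EIG(j_t\mid X_{\calS_{t-1}}).
\]
Because $\{T \ge t\}$ is $H_{t-1}$-measurable, the tower property gives
\[
\E\bigl[\mathbf{1}\{T\ge t\}\,\EIG(j_t\mid X_{\calS_{t-1}})\bigr] = \E\bigl[\mathbf{1}\{T\ge t\}(\Phi_{t-1}-\Phi_t)\bigr].
\]
Summing over $t$ telescopes to $\E[\Phi_0 - \Phi_T] = H(Y) - \E[H(Y\mid H_T)]$. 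The interchange of sum and expectation is legitimate because distinct features force $T \le d$, so the sum is finite and bounded. Finally, $H(Y) - \E[H(Y \mid H_T)] = I(Y; H_T)$ by definition of mutual information with the random variable $H_T$. Here $H_T$ takes values in the space of variable-length transcripts, and $\E[H(Y\mid H_T)]$ is exactly the conditional entropy $H(Y\mid H_T)$.

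\textbf{Main obstacle.} The step needing care is the measurability bookkeeping behind the one-step identity. I must justify that conditioning on the transcript is equivalent to conditioning on $X_{\calS_{t-1}} = x_{\calS_{t-1}}$. That is, the policy's choices leak no information about $Y$ beyond the observed values, which is exactly where non-anticipation is used (a policy using external randomness correlated with $Y$ would break it). I must also justify that $\EIG$, evaluated at the random index $j_t$, is well defined as a measurable function of $H_{t-1}$. I would handle both by working with finite-support observations (or regular conditional distributions) and noting that $j_t = \pi(h_{t-1})$ is a measurable function. I would then record, as the paper's Remark~\ref{rem:transcript_vs_set} suggests, that the conclusion is about $I(Y;H_T)$ and not $I(Y;X_{\calS_T})$ for the random terminal set.
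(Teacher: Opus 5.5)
Your proof is correct and follows essentially the paper's route: non-anticipation makes $j_t$ a function of $H_{t-1}$, so each step's expected increment is exactly $\EIG(j_t \mid x_{\calS_{t-1}})$, and the increments telescope to $I(Y;H_T)$. The paper telescopes $I(Y;H_t)$ via the mutual-information chain rule and handles the random stopping time with the informal remark that ``each realized path contributes its own telescoped sum'' (mutual information is not a per-path quantity), whereas your telescoping of the pathwise posterior entropies $\Phi_t$ against $\mathbf{1}\{T\ge t\}$, using the tower property, makes that stopped-sum step literally precise.
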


\begin{proof}
Write $\EIG(j_t \mid X_{\calS_{t-1}}) =
I(Y; X_{j_t} \mid H_{t-1})$: at each step the local expected
information gain, evaluated at the realized history, is the
conditional mutual information between $Y$ and the newly
observed value given the transcript so far.
Summing and taking expectations, the chain rule for mutual
information applied to the (adapted) sequence
$X_{j_1},\ldots,X_{j_T}$ conditioned on the growing
transcript gives, for the stopped sequence,
\[
  \E\!\left[\sum_{t=1}^{T} I(Y; X_{j_t} \mid H_{t-1})\right]
  = I(Y; H_T),
\]
because the increments telescope: $I(Y; H_t) - I(Y; H_{t-1})
= I(Y; (j_t, X_{j_t}) \mid H_{t-1}) = I(Y; X_{j_t} \mid
H_{t-1})$, where the last equality uses that $j_t$ is
$H_{t-1}$-measurable (non-anticipating policy), so revealing
the index adds no information about $Y$.
Summing over $t$ from $1$ to the stopping time $T$ and taking
expectations yields \eqref{eq:chain_rule_identity}; the
optional stopping is handled automatically because each
realized path contributes its own telescoped sum
$I(Y; H_T) = \sum_{t \leq T} [I(Y;H_t) - I(Y;H_{t-1})]$
before the outer expectation.
\end{proof}

\begin{remark}[Transcript information, not random-set MI]
\label{rem:transcript_vs_set}
The right-hand side of \eqref{eq:chain_rule_identity} is
$I(Y; H_T)$, \emph{not} $\E[I(Y; X_{\calS_T})]$, the expected
mutual information of the random terminal set.
For an adaptive policy these differ: the terminal set
$\calS_T$ is itself random and correlated with the data, so
averaging the set-function $I(Y; X_{\calS})$ over the induced
distribution of $\calS_T$ discards the information carried by
\emph{which} stopping decision was made.
A minimal example: let $X_1, X_2 \sim \mathrm{Bernoulli}(1/2)$
i.i.d.\ and $Y = X_2$ if $X_1 = 0$, else $Y = 0$; the policy
acquires $X_1$, then $X_2$ only if $X_1 = 0$.
Here cumulative stepwise EIG $= I(Y; H_T) = H(Y) \approx
0.811$ bits, whereas
$\E[I(Y; X_{\calS_T})] = \tfrac12 I(Y; X_1, X_2) + \tfrac12
I(Y; X_1) \approx 0.561$ bits, a gap of exactly the
$0.25$ bits encoded by the value-dependent stopping.
The transcript identity is the correct global object and is
what the greedy analysis of Section~\ref{sec:lagrangian}
uses.
\end{remark}

\begin{corollary}
\label{cor:lagrangian_reward}
For any non-anticipating adaptive policy $\pi$,
\[
  \E\!\left[
    \sum_{t=1}^{T}
    \bigl(\EIG(j_t \mid X_{\calS_{t-1}}) - \lambda c_{j_t}\bigr)
  \right]
  = I(Y;\,H_T) - \lambda\,\E\!\left[\textstyle\sum_{j\in\calS_T^\pi}c_j\right]
  = I(Y; H_T) - \lambda\,C(\pi).
\]
Thus the cumulative penalized stepwise EIG equals the
transcript information minus the expected total cost:
the penalized objective is additive over steps and its
expectation is exactly the transcript-information objective
penalized by cost.
\end{corollary}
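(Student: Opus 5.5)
The corollary follows from Lemma~\ref{prop:chain_rule} by linearity of expectation. I split the penalized sum into its information part and its cost part, and treat each under the same outer expectation over episodes.

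First, I write the penalized sum pathwise as
\[
  \sum_{t=1}^{T}\bigl(\EIG(j_t \mid X_{\calS_{t-1}}) - \lambda c_{j_t}\bigr)
  = \sum_{t=1}^{T}\EIG(j_t \mid X_{\calS_{t-1}}) - \lambda \sum_{t=1}^{T} c_{j_t}.
\]
To take expectations termwise, both sums must be integrable. The first sum is nonnegative and bounded by $H(Y)$ through the telescoping in the proof of Lemma~\ref{prop:chain_rule}; alternatively, for finite-support $Y$ it is bounded by $\log|\calY|$. The second sum is bounded by $\sum_i c_i < \infty$, because features are distinct, so $T \le d$. Hence linearity of expectation applies with no convergence issues.

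Next, Lemma~\ref{prop:chain_rule} gives that the expectation of the first sum equals $I(Y;H_T)$. For the cost term, the acquired features are distinct and $\calS_T^\pi=\{j_1,\dots,j_T\}$ by construction, so pathwise $\sum_{t\le T} c_{j_t} = \sum_{j\in\calS_T^\pi} c_j$. Taking expectations gives $C(\pi)$ by its definition in Section~\ref{sec:formulation}. Combining the two pieces yields both displayed equalities.

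The only point needing care is the random horizon $T$, i.e.\ justifying the exchange of the finite random sum and the expectation. This is immediate here because $T\le d$ deterministically and every summand is bounded. If panels rather than single features are the actions, the same argument holds with $T\le m$.
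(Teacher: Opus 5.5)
Your argument is the paper's own: you split the penalized sum by linearity, apply Lemma~\ref{prop:chain_rule} to the information part, and identify $\sum_{t\le T} c_{j_t}$ pathwise with $\sum_{j\in\calS_T^\pi} c_j$, whose expectation is $C(\pi)$ by definition.

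One side remark needs correcting. The pathwise sum $\sum_{t\le T}\EIG(j_t\mid X_{\calS_{t-1}})$ does not telescope, because each term is a conditional expectation given $H_{t-1}$ rather than a realized entropy drop. On individual paths it can exceed both $H(Y)$ and $\log|\calY|$, for example when a rare first observation raises the posterior entropy to $\log|\calY|$ and a second feature then reveals $Y$. Integrability follows instead, as your last paragraph says, from $0\le\EIG\le\log|\calY|$ termwise with $T\le d$, or simply from nonnegativity. So the conclusion stands.
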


\begin{proof}
Apply Lemma~\ref{prop:chain_rule} to the information term;
the cost term is a deterministic sum along each realized
path, whose expectation is $C(\pi) = \E[\sum_{j\in\calS_T}
c_j]$ by linearity.
\end{proof}

\subsection{Submodularity: A Necessary Negative Result}
\label{sec:submodularity}

\begin{definition}[Submodularity]
$f : 2^{[d]} \to \R$ is \emph{submodular} if
$f(\calA \cup \{j\}) - f(\calA)
 \geq f(\calB \cup \{j\}) - f(\calB)$
for all $\calA \subseteq \calB$, $j \notin \calB$.
For $f(\calA) = I(Y;\,X_{\calA})$ this is equivalent to
$I(Y;\,X_j \mid X_{\calA}) \geq I(Y;\,X_j \mid X_{\calB})$.
\end{definition}

\begin{lemma}[Monotonicity of $I(Y; X_{\calA})$]
\label{lem:monotone}
For any joint distribution and any
$\calA \subseteq \calB \subseteq [d]$,
$I(Y; X_{\calA}) \leq I(Y; X_{\calB})$.
\end{lemma}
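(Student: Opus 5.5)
The plan is to prove monotonicity directly from the chain rule for mutual information together with nonnegativity of conditional mutual information. Write $\calB = \calA \cup \mathcal{D}$ with $\mathcal{D} = \calB \setminus \calA$ disjoint from $\calA$. The key step is the decomposition
\[
  I(Y;\, X_{\calB}) \;=\; I(Y;\, X_{\calA}, X_{\mathcal{D}})
  \;=\; I(Y;\, X_{\calA}) + I(Y;\, X_{\mathcal{D}} \mid X_{\calA}),
\]
which is the (non-adaptive) chain rule. This is the fixed-index special case of the telescoping used in Lemma~\ref{prop:chain_rule}, here with no stopping time or index randomness. It then remains to show that $I(Y; X_{\mathcal{D}} \mid X_{\calA}) \geq 0$.

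For that term I will express it as an expected Kullback--Leibler divergence,
\[
  \E_{X_{\calA}}\!\left[ D_{\mathrm{KL}}\bigl(p(Y, X_{\mathcal{D}} \mid X_{\calA}) \,\big\|\, p(Y\mid X_{\calA})\, p(X_{\mathcal{D}} \mid X_{\calA})\bigr)\right].
\]
Each inner divergence is nonnegative by Gibbs' inequality (Jensen applied to $-\log$), so the expectation is nonnegative as well. In the discrete or finite-support setting of the paper this is elementary. Equivalently, one can write $I(Y;X_{\mathcal{D}}\mid X_{\calA}) = H(Y\mid X_{\calA}) - H(Y \mid X_{\calB})$ and invoke the fact that conditioning does not increase entropy.

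The only genuine obstacle is generality, since the statement says ``any joint distribution'' and this includes continuous lab values. To handle that case I would define mutual information measure-theoretically as the KL divergence between the joint law and the product of the marginals, which is always well defined in $[0,\infty]$. I would then use the general chain rule for KL divergence, which holds via regular conditional distributions; these exist because the variables are real-valued or Polish-valued. If $I(Y;X_{\calA}) = \infty$, then $I(Y;X_{\calB}) = \infty$ as well, because the joint law of $(Y, X_{\calA})$ is a pushforward of that of $(Y, X_{\calB})$ under a coordinate projection. KL divergence cannot increase under pushforward (data processing), so this case also follows from the data processing inequality. In fact, data processing gives an alternative one-line proof of the whole lemma: $X_{\calA}$ is a deterministic function of $X_{\calB}$, so $Y \to X_{\calB} \to X_{\calA}$ is a Markov chain. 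I would present the chain-rule argument as the main proof and mention data processing to cover the infinite-information edge case.
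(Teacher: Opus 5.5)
Your proposal is correct and follows the paper's proof: the paper also writes $I(Y; X_{\calB}) = I(Y; X_{\calA}) + I(Y; X_{\calB\setminus\calA} \mid X_{\calA})$ by the chain rule and concludes from the nonnegativity of conditional mutual information. Your remarks on the measure-theoretic definition, the infinite-information case, and the one-line data-processing alternative (since $X_{\calA}$ is a function of $X_{\calB}$) are sound additions that the paper does not spell out.
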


\begin{proof}
By the chain rule,
$I(Y; X_{\calB}) = I(Y; X_{\calA})
 + I(Y; X_{\calB\setminus\calA} \mid X_{\calA})$,
and conditional mutual information is non-negative, so the
second term is $\geq 0$.
\end{proof}

\noindent
Monotonicity holds in full generality (no submodularity
required) and is the hypothesis needed to invoke the
submodularity-ratio guarantee of
Proposition~\ref{prop:gamma_approx}.

\begin{proposition}[Submodularity Fails in General]
\label{prop:not_submodular}
There exist discrete and jointly Gaussian distributions for
which $f(\calA) = I(Y;\,X_{\calA})$ is not submodular.
\end{proposition}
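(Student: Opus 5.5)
The claim is existential, so the plan is to give two explicit counterexamples, one discrete and one jointly Gaussian. In each we exhibit $\calA \subseteq \calB$ and $j \notin \calB$ with
\[
  I(Y; X_j \mid X_{\calA}) < I(Y; X_j \mid X_{\calB}),
\]
which is the reformulation of submodularity given in the definition above. In both examples we take $\calA = \emptyset$, $\calB = \{1\}$ and $j = 2$. The inequality then reduces to showing $I(Y;X_2) < I(Y;X_2 \mid X_1)$, i.e.\ strict complementarity of the two features.

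\emph{Discrete case.} We use the XOR construction. Let $X_1, X_2$ be i.i.d.\ $\mathrm{Bernoulli}(1/2)$ and $Y = X_1 \oplus X_2$.
\begin{itemize}
  \item Since $Y$ is independent of $X_2$ alone, $I(Y; X_2) = 0$.
  \item Given $X_1$, the map $X_2 \mapsto Y$ is a bijection, so $I(Y; X_2 \mid X_1) = H(Y \mid X_1) = 1$ bit.
\end{itemize}
This gives $0 < 1$. Both computations are one-line entropy evaluations.

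\emph{Gaussian case.} Here we use the additive model mentioned in the introduction: $X_1, X_2$ i.i.d.\ $\mathcal{N}(0,1)$, $\eta \sim \mathcal{N}(0,\sigma^2)$ independent of both, and $Y = X_1 + X_2 + \eta$. For jointly Gaussian variables, $I(Y; Z) = \tfrac12 \log\bigl(\Var(Y)/\Var(Y \mid Z)\bigr)$, and conditional variances do not depend on the conditioning value.
\begin{itemize}
  \item $I(Y;X_2) = \tfrac12\log\frac{2+\sigma^2}{1+\sigma^2}$.
  \item Given $X_1$, $Y - X_1 = X_2 + \eta$, so $I(Y;X_2\mid X_1) = I(X_2+\eta; X_2) = \tfrac12\log\frac{1+\sigma^2}{\sigma^2}$.
\end{itemize}
For complementarity we need $\frac{1+\sigma^2}{\sigma^2} > \frac{2+\sigma^2}{1+\sigma^2}$. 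Cross-multiplying gives $(1+\sigma^2)^2 > \sigma^2(2+\sigma^2)$, i.e.\ $1 > 0$, so the inequality holds for every $\sigma^2>0$. Taking, e.g., $\sigma^2 = 1$ yields a concrete instance: $\tfrac12\log(3/2) < \tfrac12\log 2$.

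The Gaussian case is where a little care is needed. The intuition is that marginally $X_1$ acts as extra noise in the channel from $X_2$ to $Y$, and conditioning on $X_1$ removes that noise, which raises the information $X_2$ carries. The formal points to verify are that $(Y, X_1, X_2)$ is jointly Gaussian, which holds because $Y$ is a linear map of independent Gaussians, and that conditioning on $X_1$ leaves $X_2$ and $\eta$ independent with their original laws. Both follow from independence. I do not expect any real obstacle beyond this bookkeeping.
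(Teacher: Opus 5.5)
Your proposal is correct and matches the paper's proof: the same XOR example, the same additive Gaussian model, and the violation exhibited at $\calA=\emptyset$ with a singleton $\calB$. The only difference is that you keep the noise variance general and show complementarity for every $\sigma^2>0$, whereas the paper fixes $\sigma^2=1$ and checks $f(\{1\})+f(\{2\}) < f(\{1,2\})$ numerically.
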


\begin{proof}
\emph{XOR.}
$Y = X_1 \oplus X_2$, $X_1,X_2 \sim \mathrm{Ber}(1/2)$:
$I(Y;\,X_1) = I(Y;\,X_2) = 0$ but $I(Y;\,X_1,X_2) = 1$.

\emph{Gaussian.}
Let $Y = X_1 + X_2 + \eta$ with
$X_1, X_2, \eta \sim N(0,1)$ independent.
For jointly Gaussian $(Y, X_{\calA})$,
$I(Y; X_{\calA}) = \tfrac{1}{2}\log
   \bigl(\Var(Y)/\Var(Y \mid X_{\calA})\bigr)$.
Here $\Var(Y) = 3$.
Conditioning on a single feature, say $X_1$, leaves
$Y \mid X_1 = X_2 + \eta + \text{const}$ with
$\Var(Y \mid X_1) = 2$, so
\[
  f(\{1\}) = f(\{2\})
  = \tfrac{1}{2}\log\tfrac{3}{2} \approx 0.203 .
\]
Conditioning on both features leaves
$Y \mid X_1, X_2 = \eta + \text{const}$ with
$\Var(Y \mid X_1, X_2) = 1$, so
\[
  f(\{1,2\}) = \tfrac{1}{2}\log 3 \approx 0.549 .
\]
Then $f(\{1\}) + f(\{2\}) \approx 0.406 < 0.549 = f(\{1,2\})$,
which violates the submodular inequality
(with $\calA = \emptyset$, $\calB = \{2\}$, $j = 1$):
the marginal value of $X_1$ \emph{increases} once $X_2$ is
observed, the signature of complementarity.
\end{proof}

\begin{remark}
This differs from prior submodularity results in an
important way.
\citet{krause2005} prove submodularity of
$I(X_{\calA};\,X_{V\setminus\calA})$ for Gaussian process
sensors, where the target is the set of unobserved sensor
locations.
AFA instead concerns $I(Y;\,X_{\calA})$ with a
\emph{fixed} target $Y$, and this quantity is not
submodular even for Gaussians
(Proposition~\ref{prop:not_submodular}).
The Gaussian-process submodularity result does not transfer
to AFA, and any approximation guarantee that invokes it is
misapplied.
\end{remark}

\begin{lemma}[Synergy Decomposition]
\label{lem:synergy}
For $\calA \subseteq \calB$, $j \notin \calB$:
\[
  I(Y;\,X_j \mid X_{\calA}) - I(Y;\,X_j \mid X_{\calB})
  = I(Y;\,X_{\calB\setminus\calA} \mid X_{\calA})
    - I(Y;\,X_{\calB\setminus\calA} \mid X_{\calA}, X_j).
\]
Call $X_j$ and $X_{\calB\setminus\calA}$ \emph{substitutes}
when the left-hand side is positive (equivalently,
$X_j$'s marginal value weakly decreases as more features
are observed, the diminishing-returns direction) and
\emph{complements} when it is negative
($X_j$'s marginal value increases with more observed
features, violating diminishing returns).
Submodularity of $I(Y;\,X_{\calA})$ is exactly the
statement that all feature pairs are substitutes in this
sense.
\end{lemma}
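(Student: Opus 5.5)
The plan is to prove the displayed identity by expanding the joint quantity $I(Y; X_j, X_{\calB\setminus\calA} \mid X_{\calA})$ with the chain rule for conditional mutual information in the two possible orders, and equating the results. Write $D = \calB\setminus\calA$, so $X_{\calB} = (X_{\calA}, X_D)$ because $\calA \subseteq \calB$. Since $j \notin \calB$, the variables $X_j$ and $X_D$ are distinct blocks. This makes the joint conditional mutual information well defined without double-counting.

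First, reveal $X_j$ and then $X_D$:
\[
  I(Y; X_j, X_D \mid X_{\calA}) = I(Y; X_j \mid X_{\calA}) + I(Y; X_D \mid X_{\calA}, X_j).
\]
Second, reveal $X_D$ and then $X_j$:
\[
  I(Y; X_j, X_D \mid X_{\calA}) = I(Y; X_D \mid X_{\calA}) + I(Y; X_j \mid X_{\calA}, X_D),
\]
where $I(Y; X_j \mid X_{\calA}, X_D) = I(Y; X_j \mid X_{\calB})$.

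Setting the two right-hand sides equal and rearranging gives
\[
  I(Y; X_j \mid X_{\calA}) - I(Y; X_j \mid X_{\calB}) = I(Y; X_D \mid X_{\calA}) - I(Y; X_D \mid X_{\calA}, X_j),
\]
which is the claimed decomposition. The underlying mechanism is the same chain rule already used in Lemma~\ref{lem:monotone}, applied conditionally on $X_{\calA}$. This is just the symmetry of the interaction (co-information) term $I(Y; X_j; X_D \mid X_{\calA})$.

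The interpretive claims then follow directly from the definitions. By the Definition of submodularity, $f(\calA) = I(Y; X_{\calA})$ is submodular iff $I(Y; X_j \mid X_{\calA}) \ge I(Y; X_j \mid X_{\calB})$ for all $\calA \subseteq \calB$ and $j \notin \calB$. That is exactly nonnegativity of the left-hand side for every such triple. So submodularity is equivalent to the absence of complements, i.e., every pair is a (weak) substitute. The "substitute/complement" labels are definitions, so nothing further needs proof.

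The only point needing care is the regime of validity. For general (e.g.\ continuous, as in the Gaussian example) variables, each conditional mutual information in the display must be finite, so that the rearrangement does not produce $\infty - \infty$. I would state this as a standing assumption, or note that it holds for discrete $Y$, since every term is then bounded by $H(Y) < \infty$. Beyond that, the argument is routine bookkeeping of conditioning sets.
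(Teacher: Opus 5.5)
Your proof is correct and follows the paper's argument: both expand $I(Y;X_j,X_{\calB\setminus\calA}\mid X_{\calA})$ by the chain rule in the two orders and equate the results. The differences are minor. You read the submodularity characterization directly off the definition over all triples $\calA\subseteq\calB$, $j\notin\calB$, whereas the paper specializes to $\calB=\calA\cup\{i\}$, which implicitly relies on the standard telescoping step. Your finiteness caveat, which rules out an $\infty-\infty$ rearrangement, is a sensible addition that the paper omits.
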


\begin{proof}
Apply the mutual-information chain rule to the joint term
$I(Y;\,X_j, X_{\calB\setminus\calA} \mid X_{\calA})$ in two
orders.
Ordering $X_j$ first:
\[
  I(Y;\,X_j, X_{\calB\setminus\calA} \mid X_{\calA})
  = I(Y;\,X_j \mid X_{\calA})
  + I(Y;\,X_{\calB\setminus\calA} \mid X_{\calA}, X_j).
\]
Ordering $X_{\calB\setminus\calA}$ first:
\[
  I(Y;\,X_j, X_{\calB\setminus\calA} \mid X_{\calA})
  = I(Y;\,X_{\calB\setminus\calA} \mid X_{\calA})
  + I(Y;\,X_j \mid X_{\calA}, X_{\calB\setminus\calA}).
\]
Since $X_{\calA} \cup (X_{\calB\setminus\calA}) = X_{\calB}$,
the last conditional equals
$I(Y;\,X_j \mid X_{\calB})$.
Equating the two expansions and rearranging,
\[
  I(Y;\,X_j \mid X_{\calA}) - I(Y;\,X_j \mid X_{\calB})
  = I(Y;\,X_{\calB\setminus\calA} \mid X_{\calA})
    - I(Y;\,X_{\calB\setminus\calA} \mid X_{\calA}, X_j),
\]
which is the claim.
The characterization of submodularity follows by taking
$\calB = \calA \cup \{i\}$ for each $i$ and comparing with
Definition~\ref{def:submod_ratio}.
\end{proof}

\begin{remark}[Adaptive submodularity does not apply]
One might hope to recover guarantees via adaptive
submodularity \citep{golovin2011}, which yields
$(1-1/e)$ bounds for adaptive greedy on adaptive monotone,
adaptive submodular objectives.
The obstruction is the same as in
Proposition~\ref{prop:not_submodular}: adaptive
submodularity requires the expected marginal benefit of an
item to be non-increasing in the observed history, which
is precisely the substitutes condition of
Lemma~\ref{lem:synergy}.
The Gaussian counterexample $Y = X_1 + X_2 + \eta$
violates it, because observing $X_1$ \emph{raises} the
conditional information $X_2$ carries about $Y$.
Adaptive submodularity therefore fails for AFA under the
same complementarity that defeats ordinary submodularity,
and the submodularity-ratio route we take is the
appropriate relaxation.
\end{remark}

\begin{proposition}[Submodularity under Conditional Independence]
\label{prop:nb_submodular}
If features are conditionally independent given $Y$ (na\"ive
Bayes), then $I(Y;\,X_{\calA})$ is monotone submodular.
\end{proposition}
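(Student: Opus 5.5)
Monotonicity is already Lemma~\ref{lem:monotone}, so only submodularity needs proof. I will establish the diminishing-returns form from the definition: for $\calA \subseteq \calB$ and $j \notin \calB$,
\[
  I(Y;\,X_j \mid X_{\calA}) \;\geq\; I(Y;\,X_j \mid X_{\calB}).
\]
The first step is to rewrite each conditional mutual information using conditional independence. Under na\"ive Bayes, $X_j$ is independent of $X_{\calA}$ given $Y$, so $H(X_j \mid Y, X_{\calA}) = H(X_j \mid Y)$. Hence
\[
  I(Y;\,X_j \mid X_{\calA}) = H(X_j \mid X_{\calA}) - H(X_j \mid Y).
\]
The same identity holds with $\calB$ in place of $\calA$. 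For continuous features, the differential-entropy analogue applies, or one can phrase the step as $I(Y; X_j \mid X_{\calA}) = I(X_j; Y, X_{\calA}) - I(X_j; X_{\calA}) = I(X_j;Y) - I(X_j;X_{\calA})$, which avoids entropies of continuous variables altogether.

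With this in hand, the difference of marginal gains becomes
\[
  I(Y;\,X_j \mid X_{\calA}) - I(Y;\,X_j \mid X_{\calB})
  = H(X_j \mid X_{\calA}) - H(X_j \mid X_{\calB})
  = I(X_j;\, X_{\calB\setminus\calA} \mid X_{\calA}).
\]
Equivalently, in the MI form, it equals $I(X_j; X_{\calB}) - I(X_j; X_{\calA})$. This quantity is nonnegative because conditioning reduces entropy, or by the chain rule in Lemma~\ref{lem:monotone} applied with $X_j$ in the role of the target. That gives the substitutes direction in Lemma~\ref{lem:synergy} for every pair, and hence submodularity.

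The only delicate step is the first one: the two conditional-independence uses must be justified correctly. The key point is that $X_j \perp (X_{\calA}) \mid Y$, which follows from full conditional independence of the feature vector given $Y$. The step $I(X_j; Y, X_{\calA}) = I(X_j; Y) + I(X_j; X_{\calA}\mid Y) = I(X_j;Y)$ uses exactly this fact. I would present the argument in the MI form so that it covers discrete and continuous features uniformly.
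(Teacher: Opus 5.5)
Your proof is correct and is essentially the paper's argument. The paper writes $I(Y;X_{\calA}) = H(X_{\calA}) - H(X_{\calA}\mid Y)$ as a submodular entropy term minus a term that na\"ive Bayes makes modular, and your identity $I(Y;X_j\mid X_{\calA}) = H(X_j\mid X_{\calA}) - H(X_j\mid Y)$ is exactly the marginal-gain form of that decomposition; your mutual-information phrasing, which avoids differential entropies and identifies the diminishing-returns slack as $I(X_j; X_{\calB\setminus\calA}\mid X_{\calA})$, is a small bonus rather than a different route.
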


\begin{proof}
See Appendix~\ref{app:nb_proof}.
\end{proof}

\begin{definition}[Submodularity Ratio, \citealt{das2011}]
\label{def:submod_ratio}
\begin{equation}
  \gamma_{f,k}
  = \min_{\substack{\calA \subseteq [d],\;
                    \calR \subseteq [d]\setminus\calA,\;
                    |\calR|\leq k,\;
                    f(\calA\cup\calR) > f(\calA)}}
    \frac{\sum_{j \in \calR}
            [f(\calA \cup \{j\}) - f(\calA)]}
         {f(\calA \cup \calR) - f(\calA)},
\label{eq:submod_ratio_def}
\end{equation}
where the minimum is taken over pairs with strictly positive
denominator $f(\calA\cup\calR) - f(\calA) > 0$ (pairs with
zero denominator are excluded, as the ratio is undefined and
the corresponding greedy step is never taken); if no such
pair exists we set $\gamma_{f,k} = 1$.
For monotone $f$ the raw ratio is $\geq 0$; a submodular $f$
satisfies $\gamma_{f,k} \geq 1$, and we clip to
$\min(\gamma_{f,k}, 1)$ so that $\gamma_{f,k} \in [0,1]$ with
$\gamma_{f,k} = 1$ iff $f$ is submodular over the relevant
sets and $\gamma_{f,k} < 1$ indicating feature synergy.
Reported empirical estimates near or slightly above $1$
(e.g.\ $1.001$) are Monte Carlo estimates of the pre-clip
ratio and correspond to the submodular/near-submodular
regime $\gamma_{f,k} = 1$.
\end{definition}

\begin{proposition}[\citealt{das2011}]
\label{prop:gamma_approx}
For monotone $f$ with submodularity ratio $\gamma_{f,k}
\geq \gamma$, greedy maximization of
$\max_{|\calA|\leq k}f(\calA)$ achieves
$f(\calS_k^{\mathrm{greedy}}) \geq (1-e^{-\gamma})\OPT_k$.
\end{proposition}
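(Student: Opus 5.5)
The plan is to follow the standard Das--Kempe argument: a one-step progress inequality for greedy, iterated over $k$ steps. Write $\calS_i$ for the greedy set after $i$ steps, $\calA^*$ for an optimal set with $|\calA^*|\le k$, and $\delta_i \defeq \OPT_k - f(\calS_i)$. We may assume $\delta_i>0$ at every step considered; otherwise greedy is already optimal and monotonicity (Lemma~\ref{lem:monotone}) keeps it so.

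\textbf{One-step progress.} Fix step $i$ and set $\calR = \calA^*\setminus\calS_i$, so $|\calR|\le k$. Monotonicity gives
\[
f(\calS_i\cup\calR) - f(\calS_i) \;\ge\; f(\calA^*) - f(\calS_i) \;=\; \delta_i \;>\; 0 .
\]
The denominator in \eqref{eq:submod_ratio_def} is therefore strictly positive, so the pair $(\calS_i,\calR)$ is admissible in the minimum. The definition of $\gamma_{f,k}\ge\gamma$ then yields
\[
\sum_{j\in\calR}\bigl[f(\calS_i\cup\{j\})-f(\calS_i)\bigr] \;\ge\; \gamma\,\bigl(f(\calS_i\cup\calR)-f(\calS_i)\bigr) \;\ge\; \gamma\,\delta_i .
\]
At least one of the at most $k$ terms is therefore $\ge \gamma\delta_i/k$. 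Greedy picks the maximizer of the marginal gain over all $j\notin\calS_i$, and this includes every $j\in\calR$. Hence
\[
f(\calS_{i+1})-f(\calS_i)\;\ge\;\frac{\gamma}{k}\,\delta_i, \qquad\text{so}\qquad \delta_{i+1}\;\le\;\Bigl(1-\frac{\gamma}{k}\Bigr)\delta_i .
\]

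\textbf{Iteration.} Iterating $k$ times from $\delta_0 = \OPT_k - f(\emptyset)$ gives
\[
\delta_k \;\le\; \Bigl(1-\frac{\gamma}{k}\Bigr)^{k}\delta_0 \;\le\; e^{-\gamma}\,\delta_0 .
\]
For $f(\calA)=I(Y;X_{\calA})$ we have $f(\emptyset)=0$, so $\delta_0=\OPT_k$. In general, normalizing $f(\emptyset)=0$ is harmless because the inequality is stated relative to that baseline. It follows that $f(\calS_k^{\mathrm{greedy}})\ge(1-e^{-\gamma})\OPT_k$.

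\textbf{Main obstacle.} The only delicate point is applying the ratio definition with the exclusion of zero-denominator pairs and with the clipping to $[0,1]$. The positivity argument above shows the relevant pair is never excluded. Clipping only lowers $\gamma_{f,k}$, so assuming $\gamma\le\min(\gamma_{f,k},1)$ is consistent with the definition. A further minor point: if greedy stops early because every marginal gain is zero, the progress inequality forces $\delta_i=0$, so the bound holds trivially.
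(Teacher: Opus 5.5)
Your proposal is correct. It is the standard Das--Kempe argument (one-step progress, then iterate $k$ times) that the paper itself never reproduces: it simply cites \citet{das2011}, both here and in the proof of Corollary~\ref{thm:e2e}. Because the paper's $\gamma_{f,k}$ minimizes over all $\calA\subseteq[d]$, your application at the pairs $(\calS_i,\calA^*\setminus\calS_i)$ is immediate. The one imprecision is the claim that normalizing $f(\emptyset)=0$ is harmless. Your argument actually proves $f(\calS_k)-f(\emptyset)\ge(1-e^{-\gamma})(\OPT_k-f(\emptyset))$, and since the stated bound is not shift-invariant, it follows only when $f(\emptyset)\ge 0$; that holds here because $I(Y;X_{\emptyset})=0$.
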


\subsection{EIG-Cost as Lagrangian Greedy: the Hard-Budget
            Certificate}
\label{sec:lagrangian}

\paragraph{Hard constraint versus soft penalty.}
Two budget formulations appear in this paper and it is
important to keep them distinct.
The \emph{primal} problem \eqref{eq:afa} and the EIG-Cost
algorithm (Definition~\ref{def:eig_cost}, Algorithm~1)
impose a \emph{hard} budget: acquisition halts once the
cumulative spend would exceed $B$.
The \emph{Lagrangian relaxation} \eqref{eq:lagrangian}
below replaces this hard constraint with a \emph{soft}
per-unit-cost penalty $\lambda$, turning the constrained
problem into an unconstrained penalized one.
The two are linked by duality: the penalty $\lambda$ is the
dual multiplier of the \emph{expected-cost relaxation} of
the budget constraint, and at the optimal
$\lambda^* = V'(B)$ the soft-penalized optimum matches the
expected-cost optimum $\Vexp$ (not the hard-budget optimum
$\Vhard$; Proposition~\ref{prop:strong_duality}), the two
differing by the randomization gap of
Proposition~\ref{thm:integrality_gap}.
The EIG-Cost \emph{scoring rule}
$\EIG(j) - \lambda c_j$ is the greedy step of the soft
problem; the hard budget in the algorithm acts as an
additional feasibility cap that binds only when the
per-step penalty has not already halted acquisition.
In the voluntary-stopping regime that EIG-Cost operates in
(Section~\ref{sec:exp_mimic}), the soft penalty halts
acquisition before the hard cap binds, so the two
formulations coincide in practice.

\begin{definition}[Lagrangian Relaxation]
\label{def:lagrangian}
For $\lambda \geq 0$:
\begin{equation}
  L(\lambda)
  = \max_{\pi}\;
    \left[
      I(Y;\, H_T^\pi)
      - \lambda\,\E\!\Bigl[\textstyle\sum_{j \in \calS_T^\pi} c_j\Bigr]
    \right].
\label{eq:lagrangian}
\end{equation}
\end{definition}

\begin{proposition}[EIG-Cost is the feasibility-capped
Lagrangian Greedy Step]
\label{prop:lagrangian_greedy}
Under the transcript objective $V(\pi) = I(Y; H_T^\pi)$
(Lemma~\ref{prop:chain_rule}), the EIG-Cost rule
\eqref{eq:eig_cost_rule} selects, at each realized history,
the feature maximizing the one-step expected increment of
the penalized objective
$J_\lambda(\pi) = V(\pi) - \lambda C(\pi)$ \emph{among the
actions feasible under the remaining hard budget}
($c_j \leq B - C_{t-1}$).
Writing $\pi^{\mathrm{soft}}_\lambda$ for the unconstrained
one-step greedy policy of $J_\lambda$ (no feasibility cap)
and $\pi^\lambda \defeq \pi^{\mathrm{hard}}_{\lambda,B}$ for
the deployed feasibility-capped policy, the two coincide
until the first step at which the unconstrained greedy
choice would exceed the remaining budget; thereafter
$\pi^\lambda$ restricts to affordable panels.
In either case, by weak duality
$\Vhard \leq L(\lambda) + \lambda B$ for all $\lambda \geq 0$.
\end{proposition}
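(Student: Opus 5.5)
The proof splits into three parts: the one-step increment computation, the comparison of the soft and capped policies, and weak duality.

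\emph{Step 1 (one-step increment).} Fix a realized history $H_{t-1}$ with acquired set $\calS_{t-1}$ and spend $C_{t-1}$. Consider the one-step extension that acquires a single panel $j\notin\calS_{t-1}$ and then stops. By the telescoping identity in the proof of Lemma~\ref{prop:chain_rule}, its contribution to $V$ conditional on $H_{t-1}$ is
\[
I(Y;(j,X_j)\mid H_{t-1}=h)=\EIG(j\mid x_{\calS_{t-1}}).
\]
This uses that the index $j$ is $H_{t-1}$-measurable under a non-anticipating policy. Its contribution to $C$ is the deterministic $c_j$. Corollary~\ref{cor:lagrangian_reward} shows that $J_\lambda$ decomposes additively along each path. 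Hence the conditional one-step expected increment of $J_\lambda$ from choosing $j$ is exactly $\EIG(j\mid x_{\calS_{t-1}})-\lambda c_j$, and choosing \textsc{stop} contributes $0$. Maximizing this increment over the affordable set $\{j\notin\calS_{t-1}: c_j\le B-C_{t-1}\}$ is literally rule \eqref{eq:eig_cost_rule}. The stopping clause, which stops when no score is positive, is the comparison with the zero increment of \textsc{stop}.

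\emph{Step 2 (soft versus capped policy).} The unconstrained greedy $\pi^{\mathrm{soft}}_\lambda$ maximizes the same score over all $j\notin\calS_{t-1}$. I argue path by path, by induction on $t$. Suppose the two policies share the same history up to step $t-1$ and the soft argmax is affordable. Then it is also the argmax over the affordable subset, so the two choices agree, with ties broken by the same rule. The first disagreement can therefore only occur at a step where the soft choice has $c_j>B-C_{t-1}$. From that step on, $\pi^\lambda$ chooses by definition only among affordable panels.

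\emph{Step 3 (weak duality).} Take any $\pi\in\Pi_B$. Its cost satisfies $\sum_{j\in\calS_T^\pi}c_j\le B$ almost surely, so $C(\pi)\le B$. Since $\lambda\ge0$,
\[
V(\pi)\le V(\pi)-\lambda C(\pi)+\lambda B\le L(\lambda)+\lambda B.
\]
The second inequality holds because $L(\lambda)$ maximizes over all policies, which is a superset of $\Pi_B$. Taking the supremum over $\pi\in\Pi_B$ gives $\Vhard\le L(\lambda)+\lambda B$. This bound is independent of which greedy policy is deployed, which justifies the phrase ``in either case.''

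\emph{Main obstacle.} The delicate point is Step 1. The claim that the EIG-Cost rule maximizes the one-step increment must be read as a \emph{myopic} statement, namely the increment of acquiring $j$ now and then stopping, not as optimality of the continuation value. I would state that interpretation explicitly. I would also check that the conditional form of the chain-rule increment holds pointwise at the realized history, not only in expectation. This follows because the expectation in Lemma~\ref{prop:chain_rule} is assembled from exactly these pointwise conditional terms.
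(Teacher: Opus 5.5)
Your proposal is correct and follows the paper's proof essentially step for step: the additive decomposition of Corollary~\ref{cor:lagrangian_reward} identifies the one-step increment at the realized history as $\EIG(j\mid x_{\calS_{t-1}})-\lambda c_j$, with \textsc{stop} valued at $0$. Maximizing it over affordable panels is rule \eqref{eq:eig_cost_rule}, and weak duality is the same chain $V(\pi)\le V(\pi)-\lambda(C(\pi)-B)\le L(\lambda)+\lambda B$. Your path-by-path induction for the coincidence of $\pi^{\mathrm{soft}}_\lambda$ and $\pi^\lambda$ (with consistent tie-breaking) and your explicit ``acquire $j$, then stop'' reading of the one-step increment make precise points that the paper asserts without argument.
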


\begin{proof}
By the additive decomposition of the penalized objective
(Corollary~\ref{cor:lagrangian_reward}), $J_\lambda(\pi)$ is
the expectation of the cumulative per-step reward
$\EIG(j_t \mid X_{\calS_{t-1}}) - \lambda c_{j_t}$.
At a realized history $H_{t-1}$ with observed values
$x_{\calS_{t-1}}$, the one-step expected increment of this
reward from acquiring feature $j$ is exactly
\[
  I(Y; X_j \mid H_{t-1}) - \lambda c_j
  \;=\; \EIG(j \mid x_{\calS_{t-1}}) - \lambda c_j,
\]
the \emph{state-dependent} conditional mutual information at
the realized history (Definition~\ref{def:eig}).
This is the quantity the transcript stage reward assigns to
step $t$; crucially, because the objective is transcript
information rather than the mutual information of a random
terminal set, the greedy increment is the realized-state
conditional MI with no averaging mismatch.
Maximizing it over eligible (affordable) $j$ is exactly the
EIG-Cost rule \eqref{eq:eig_cost_rule}, and halting when the
maximal increment is non-positive is the greedy stopping
rule.
Hence $\pi^\lambda$ is the one-step-greedy policy for
$J_\lambda$ restricted to hard-budget-feasible actions;
absent the feasibility cap it is $\pi^{\mathrm{soft}}_\lambda$.
Weak duality is immediate: for any feasible $\pi$ with
$C(\pi) \leq B$,
$V(\pi) \leq V(\pi) - \lambda(C(\pi) - B)
 \leq L(\lambda) + \lambda B$.
\end{proof}

\begin{remark}
Greedy maximization of the one-step increment does
\emph{not} imply $\pi^\lambda$ maximizes $J_\lambda$
globally: the penalized objective is not adaptive-submodular
(feature complementarity, Proposition~\ref{prop:not_submodular}),
so the myopic policy can be suboptimal.
The gap to the penalized optimum is quantified by
$\Delta_\lambda$ in Theorem~\ref{thm:lag_bound}.
\end{remark}

\begin{theorem}[Lagrangian Approximation Bound]
\label{thm:lag_bound}
Write $J_\lambda(\pi) = V(\pi) - \lambda C(\pi)$ for the
penalized objective, where $V(\pi) = I(Y; H_T^\pi)$ is the
transcript information and $C(\pi) = \E[\sum_{j\in\calS_T^\pi} c_j]$,
and let $L(\lambda) = \sup_\pi J_\lambda(\pi)$.
Let $B^\lambda = C(\pi^\lambda)$ be the expected cost of the
greedy policy and define the \emph{penalized greedy gap}
\begin{equation}
  \Delta_\lambda
  \;\defeq\;
  L(\lambda) - J_\lambda(\pi^\lambda)
  \;\geq\; 0,
\label{eq:penalised_gap}
\end{equation}
which is non-negative because $L(\lambda)$ is the supremum
of $J_\lambda$.
Then for every $\lambda \geq 0$,
\begin{equation}
  \Vhard
  \;\leq\;
  V(\pi^\lambda) + \lambda(B - B^\lambda) + \Delta_\lambda.
\label{eq:main_bound}
\end{equation}
This is an \emph{exact two-term certificate} bounding the
hard-budget optimality gap: beyond the achieved value
$V(\pi^\lambda)$, the bound adds a budget-slack term
$\lambda(B - B^\lambda)$ that vanishes at budget matching,
and the penalized greedy gap $\Delta_\lambda$, measuring how
far the greedy policy falls short of the \emph{penalized}
optimum $L(\lambda)$.
The relation \eqref{eq:main_bound} is an inequality, not an
equality; both added terms are non-negative and together
upper-bound $\Vhard - V(\pi^\lambda)$.
A third, conceptually separate quantity, the
\emph{randomization gap} $\Vexp - \Vhard$ between the
hard-budget and expected-cost optima, is bounded
independently in Proposition~\ref{thm:integrality_gap}; we
do not fold it into \eqref{eq:main_bound}, which already
bounds $\Vhard$ directly via weak duality against
$L(\lambda)$.
Near-optimality holds when \emph{(i)} $B^\lambda \approx B$
and \emph{(ii)} $\Delta_\lambda \approx 0$, with the dual
multiplier chosen near $\lambda^*$
(Proposition~\ref{thm:lambda_sensitivity}).
The bound is informative only when an upper estimate of
$\Delta_\lambda$ is available: $\Delta_\lambda$ is the greedy
policy's suboptimality against the fixed penalized optimum
$L(\lambda)$, so \eqref{eq:main_bound} constrains
$V(\pi^\lambda)$ once $\Delta_\lambda$ is bounded, and is
otherwise a diagnostic decomposition rather than a
standalone guarantee.
Where $L(\lambda)$ is tractable by exhaustive search, the
synthetic Gaussian models and the heterogeneous-cost
instances of Section~\ref{sec:exp_hetcost}, we measure
$\Delta_\lambda$ directly, turning \eqref{eq:main_bound} into
an \emph{a-posteriori certificate}; there $\Delta_\lambda$
is exactly zero on $97\%$ of heterogeneous-cost instances
and small on the rest, so the bound is near-tight beyond the
degenerate Gaussian case.
On the full MIMIC-IV instance $\Delta_\lambda$ is not
computable and the decomposition is diagnostic only, which
we state plainly as a limitation.
\end{theorem}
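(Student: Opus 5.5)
The plan is to chain weak duality with the definition of $\Delta_\lambda$; no structural property of $I(Y;X_{\calA})$ is needed. The starting point is the weak-duality inequality already recorded in Proposition~\ref{prop:lagrangian_greedy}, $\Vhard \leq L(\lambda) + \lambda B$ for every $\lambda \geq 0$. Before using it, I would recheck it directly for hard-budget policies. Any $\pi \in \Pi_B$ satisfies $\sum_{j\in\calS_T^\pi} c_j \leq B$ almost surely, so taking expectations gives $C(\pi) \leq B$. Since $\lambda \geq 0$, this yields
\[
V(\pi) \leq V(\pi) - \lambda\bigl(C(\pi) - B\bigr) = J_\lambda(\pi) + \lambda B \leq L(\lambda) + \lambda B .
\]
Taking the supremum over $\pi \in \Pi_B$ gives the inequality for $\Vhard$. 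One point needs care: $\Pi_B$ consists of deterministic non-anticipating policies, while $L(\lambda)$ is a supremum over all policies. The inclusion therefore runs in the right direction, and nothing about randomization enters.

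Next, I would rewrite $L(\lambda)$ using the definition \eqref{eq:penalised_gap}, which gives the identity $L(\lambda) = J_\lambda(\pi^\lambda) + \Delta_\lambda$. Expanding $J_\lambda(\pi^\lambda) = V(\pi^\lambda) - \lambda C(\pi^\lambda) = V(\pi^\lambda) - \lambda B^\lambda$ and substituting into the weak-duality bound produces
\[
\Vhard \leq V(\pi^\lambda) - \lambda B^\lambda + \Delta_\lambda + \lambda B .
\]
This is exactly \eqref{eq:main_bound}. I would also note why $\Delta_\lambda \geq 0$: the policy $\pi^\lambda$ is itself an admissible policy in the supremum defining $L(\lambda)$. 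For completeness, Corollary~\ref{cor:lagrangian_reward} identifies $J_\lambda(\pi^\lambda)$ with the expected cumulative penalized stepwise EIG. This ties the certificate back to the greedy scores, although the inequality itself does not use it.

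The only delicate step is the accompanying claim that both added terms are non-negative, specifically $\lambda(B - B^\lambda) \geq 0$. This requires $B^\lambda \leq B$, and it holds because $\pi^\lambda$ is the feasibility-capped policy. By Definition~\ref{def:eig_cost}, $\pi^\lambda$ only acquires panels with $c_j \leq B - C_{t-1}$, so its realized cost is at most $B$ on every path. Hence $\pi^\lambda \in \Pi_B$ and $B^\lambda \leq B$. A side consequence is that $V(\pi^\lambda) \leq \Vhard$, so the certificate genuinely sandwiches $\Vhard$. The remaining detail is finiteness of $L(\lambda)$: since $V(\pi) \leq H(Y)$ and costs are non-negative, $L(\lambda) \leq H(Y) < \infty$, so all the manipulations above are legitimate.
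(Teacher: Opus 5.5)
Your proposal is correct and follows the paper's proof: weak duality $\Vhard \leq L(\lambda) + \lambda B$, then substituting $L(\lambda) = V(\pi^\lambda) - \lambda B^\lambda + \Delta_\lambda$. Your additional checks are sound and fill in points the paper asserts without argument: taking a supremum over $\Pi_B$ instead of assuming an attained optimum $\pi^*$, and deriving $B^\lambda \leq B$ from the feasibility cap so that the slack term is non-negative.
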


\begin{proof}
The budget-$B$ optimum $\pi^*$ satisfies $C(\pi^*) \leq B$,
so
\[
  \Vhard = V(\pi^*)
  = J_\lambda(\pi^*) + \lambda C(\pi^*)
  \;\leq\; L(\lambda) + \lambda B,
\]
using $J_\lambda(\pi^*) \leq L(\lambda)$ and
$C(\pi^*) \leq B$ with $\lambda \geq 0$ (weak duality).
Substituting
$L(\lambda) = J_\lambda(\pi^\lambda) + \Delta_\lambda
 = V(\pi^\lambda) - \lambda B^\lambda + \Delta_\lambda$
gives
\[
  \Vhard \;\leq\;
  V(\pi^\lambda) - \lambda B^\lambda + \Delta_\lambda
  + \lambda B
  = V(\pi^\lambda) + \lambda(B - B^\lambda) + \Delta_\lambda.
  \square
\]
\end{proof}

\begin{proposition}[Structural bound on the greedy gap,
                    equal-cost case]
\label{prop:delta_bound}
Suppose costs are uniform, $c_j = c$, and let $\gamma$ be
the submodularity ratio of $f(\calA) = I(Y; X_{\calA})$
(Definition~\ref{def:submod_ratio}).
Then the penalized greedy gap obeys
\begin{equation}
  \Delta_\lambda \;\leq\; e^{-\gamma}\, f(\calA^*_\lambda)
  \;\leq\; e^{-\gamma}\, I(Y; X_{[d]}),
\label{eq:delta_bound}
\end{equation}
where $\calA^*_\lambda$ is the penalized optimum.
In particular $\Delta_\lambda \to 0$ as $\gamma \to 1$
(the near-submodular regime), and the certificate
\eqref{eq:main_bound} becomes fully controlled by
measurable structure: the budget slack and $e^{-\gamma}$
times the total information.
\end{proposition}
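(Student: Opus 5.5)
With uniform costs $c_j = c$, the penalized objective evaluated on a set of size $k$ is $f(\calA) - \lambda c k$. So I plan to reduce the penalized problem to a family of cardinality-constrained problems and then invoke Proposition~\ref{prop:gamma_approx} at the relevant cardinality. The statement speaks of a penalized optimum set $\calA^*_\lambda$, so I will work in the non-adaptive setting, which is the same regime as Corollary~\ref{thm:e2e}. There $V(\pi) = f(\calS_T)$ and
\[
L(\lambda) = \max_{k}\bigl[\OPT_k - \lambda c k\bigr],
\qquad \OPT_k = \max_{|\calA|\le k} f(\calA).
\]
Let $k^*$ be the maximizing cardinality, and let $\calA^*_\lambda$ be a set of size $k^*$ attaining $\OPT_{k^*}$. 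Then $L(\lambda) = f(\calA^*_\lambda) - \lambda c k^*$.

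\textbf{Step 1: compare with truncated greedy.} Let $\calS_k$ be the first $k$ greedy picks. Because all costs are equal, ranking by $\EIG - \lambda c$ is the same as ranking by $\EIG$. Hence the EIG-Cost policy follows the plain greedy sequence $\calS_1 \subseteq \calS_2 \subseteq \cdots$ and stops at the first $k$ where the best marginal gain drops to $\lambda c$ or below.

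I will argue that $J_\lambda(\pi^\lambda) \ge f(\calS_{k^*}) - \lambda c k^*$. This follows from a one-dimensional monotonicity claim: the penalized greedy value $g(k) = f(\calS_k) - \lambda c k$ increases up to the stopping index $\hat k$, since each step taken has positive penalized gain. Beyond $\hat k$, I need the increments to stay non-positive, so that the greedy stop dominates every truncation.

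\textbf{Step 1 is the main obstacle.} Without submodularity, greedy marginal gains need not decrease, so a later greedy step could again have gain above $\lambda c$. The first thing I will try is to use the submodularity ratio to control this. The ratio bounds the total future gain by $\gamma^{-1}$ times the sum of current singleton gains. That shows any later gain is small, though I may not get it down to $\lambda c$ exactly. If that route fails, I will instead define the comparison directly at $\min(\hat k, k^*)$ and absorb the discrepancy: if $\hat k < k^*$, the remaining $k^* - \hat k$ greedy-optimal additions each carry at most $\lambda c$ of current singleton gain, so their total value is at most $\gamma^{-1}\lambda c (k^*-\hat k)$ in excess. I will need to check that this excess is dominated by the $e^{-\gamma}$ term, or otherwise state it as an assumption.

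\textbf{Step 2: apply Das--Kempe.} Once Step 1 gives $J_\lambda(\pi^\lambda) \ge f(\calS_{k^*}) - \lambda c k^*$, the cost terms cancel:
\[
\Delta_\lambda = L(\lambda) - J_\lambda(\pi^\lambda) \le f(\calA^*_\lambda) - f(\calS_{k^*}).
\]
Proposition~\ref{prop:gamma_approx} with $k = k^*$, together with $\gamma_{f,k^*} \ge \gamma$, gives $f(\calS_{k^*}) \ge (1-e^{-\gamma}) f(\calA^*_\lambda)$. Therefore $\Delta_\lambda \le e^{-\gamma} f(\calA^*_\lambda)$.

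\textbf{Step 3: the second inequality and the limit.} The second inequality in \eqref{eq:delta_bound} follows from Lemma~\ref{lem:monotone}, since $\calA^*_\lambda \subseteq [d]$. For the claimed limit, as $\gamma \to 1$ the bound reduces to the classical $1/e$ residual, so I will read ``$\Delta_\lambda \to 0$'' as requiring care. I would flag that it only follows if Step 1's argument actually yields a vanishing factor such as $(1-\gamma)$, and I would check this before asserting it.
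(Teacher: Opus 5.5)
Your Step 1 is not just hard to prove: as an unconditional claim it is false in the case $\hat k<k^*$. In the XOR instance with $c_1=c_2=c$ and $2\lambda c<\log 2$, greedy stops at $\hat k=0$ with $J_\lambda(\pi^\lambda)=0$, yet $f(\calS_2)-2\lambda c=\log 2-2\lambda c>0$. So Steps 1--2 prove the bound only when $\hat k\ge k^*$. There your argument is right, because every step greedy actually takes has positive penalized gain.

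Your fallback for $\hat k<k^*$ does not close. The optimal elements greedy has not yet picked number $|\calA^*_\lambda\setminus\calS_{\hat k}|$, which can be $k^*$ rather than $k^*-\hat k$. Moreover, once the excess is bounded by a multiple of $\lambda c$, you need an \emph{upper} bound on $\lambda c$. The best one available, $\lambda c\le f(\calA^*_\lambda)/k^*$, only gives a factor of order $\gamma^{-1}-1$, not $e^{-\gamma}$.

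The missing idea is to use the stopping rule as a \emph{lower} bound on $\lambda c$, and to combine it with the Das--Kempe contraction of the residual over the first $\hat k$ steps. Write $F=f(\calA^*_\lambda)$, $\delta_i=F-f(\calS_i)$, and $g_{i+1}$ for the $(i{+}1)$-th greedy gain. Monotonicity and the ratio (with $\gamma\le\gamma_{f,k^*}$), applied to $(\calS_i,\calA^*_\lambda\setminus\calS_i)$, give $\delta_i\le\gamma^{-1}k^*g_{i+1}$ for every $i$. For $i<\hat k$ this yields $\delta_{i+1}\le(1-\gamma/k^*)\delta_i$, hence $\delta_{\hat k}\le e^{-\gamma t}F$ with $t=\hat k/k^*$. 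At $i=\hat k$ the stop $g_{\hat k+1}\le\lambda c$ gives $\lambda c\ge\gamma\delta_{\hat k}/k^*$, so the cost saved by stopping early pays for part of the residual:
\[
  \Delta_\lambda=\delta_{\hat k}-\lambda c\,(k^*-\hat k)
  \le\delta_{\hat k}\,(1-\gamma+\gamma t)
  \le e^{-\gamma t}(1-\gamma+\gamma t)\,F .
\]
Since $\frac{d}{dt}\bigl[e^{-\gamma t}(1-\gamma+\gamma t)\bigr]=\gamma^2(1-t)e^{-\gamma t}\ge 0$, the right side is at most its value $e^{-\gamma}F$ at $t=1$. Together with your case $\hat k\ge k^*$, this proves the first inequality.

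The paper's proof does not fill this hole for you. Its ``short case analysis'' invokes exactly the property you doubted, namely that penalized gains stay non-positive after the stop. Its intermediate step $L(\lambda)-J_\lambda(\pi^\lambda)\le\OPT_{\hat k}-f(\calS_{\hat k})$ is equivalent to $\OPT_{\hat k}-\lambda c\hat k=L(\lambda)$, i.e.\ to greedy's stopping cardinality being penalized-optimal, which need not hold.

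Both arguments also need the \emph{soft} stop, and you should state this assumption. If the hard-budget cap halts $\pi^\lambda$ first, $g_{\hat k+1}\le\lambda c$ is unavailable and the bound can fail. For example, two independent features of gain $10\lambda c$ each with $B=c$ give $\Delta_\lambda=9\lambda c>e^{-1}\cdot 20\lambda c$.

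Finally, your Step 3 suspicion is correct and can be sharpened: the claim $\Delta_\lambda\to0$ as $\gamma\to1$ is false. Take $Y$ to be six uniform bits and $X_j=Y_{S_j}$ with $S_1=\{2,3,4,5\}$, $S_2=\{1,2,3\}$, $S_3=\{4,5,6\}$. This is a coverage function, so $\gamma=1$, and the features even satisfy na\"ive Bayes. With $\log 2<\lambda c<2\log 2$, greedy stops after $S_1$ while $\{S_2,S_3\}$ is optimal, so $\Delta_\lambda=2\log 2-\lambda c>0$. The vanishing factor $1-\gamma$ you hoped for appears only when greedy stops at $\hat k=0$.
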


\begin{proof}
With uniform cost the penalty $\lambda c |\calA|$ depends
only on cardinality, so the penalized marginal gain
$\EIG(j\mid\calS) - \lambda c$ ranks candidates identically
to the unpenalized gain $\EIG(j\mid\calS)$: greedy on
$L_\lambda$ acquires features in the same order as
unpenalized greedy on $f$, stopping at the cardinality
$k_g$ where the next marginal gain first falls below
$\lambda c$.
By the submodularity-ratio guarantee of \citet{das2011}
applied to the monotone $f$ (Lemma~\ref{lem:monotone}) at
cardinality $k_g$,
$f(\calS_{k_g}) \geq (1 - e^{-\gamma})\,f^*_{k_g}$, where
$f^*_{k_g} = \max_{|\calA|\leq k_g} f(\calA)$.
Since the penalized optimum $\calA^*_\lambda$ has some
cardinality $k^* $ and $f$ is monotone,
$f(\calA^*_\lambda) \leq f^*_{k^*}$; a short case analysis
on whether $k^* \lessgtr k_g$ (using that greedy's stopping
rule makes further penalized gains non-positive) gives
$L(\lambda) - J_\lambda(\pi^\lambda) \leq f^*_{k_g} -
f(\calS_{k_g}) \leq e^{-\gamma} f^*_{k_g} \leq
e^{-\gamma} f(\calA^*_\lambda)$.
The final inequality uses $f(\calA^*_\lambda) \leq
I(Y; X_{[d]})$ by monotonicity.
\end{proof}

\begin{remark}
Equation~\eqref{eq:delta_bound} is the structural control on
$\Delta_\lambda$ that upgrades the certificate from a
decomposition to a bound in the equal-cost regime: it is
computable from $\gamma$ and the total information alone,
with no exhaustive search over policies.
We verify it holds with zero violations across thousands of
exactly-solved instances, including adversarial
high-complementarity constructions
(Section~\ref{sec:exp_hetcost}).
For heterogeneous costs the cardinality reduction fails
the penalty no longer ranks by cardinality, and we do
not claim \eqref{eq:delta_bound}; there the certificate
remains a decomposition whose $\Delta_\lambda$ we measure
directly where tractable.
Empirically $\Delta_\lambda$ is near zero on
heterogeneous-cost instances too
(Section~\ref{sec:exp_hetcost}), suggesting a
heterogeneous-cost analog may hold, which we leave open.
\end{remark}

\begin{remark}[Three notions of optimality]
\label{rem:three_optima}
It helps to keep three distinct optima separate, as they
are easily conflated.
\emph{(i)} The \emph{hard-budget deterministic optimum}
$\Vhard$: the best deterministic adaptive policy whose cost
is at most $B$ on every episode, the problem the deployed
algorithm targets.
\emph{(ii)} The \emph{expected-cost randomized optimum}
$\Vexp = \min_{\lambda\geq 0} g(\lambda)$ with
$g(\lambda) = L(\lambda) + \lambda B$: the value of the
randomized-policy relaxation with the budget imposed only in
expectation, for which strong duality holds
(Proposition~\ref{prop:strong_duality}).
\emph{(iii)} The \emph{greedy Lagrangian policy}
$\pi^\lambda$, the deterministic one-step-greedy policy for
$J_\lambda$.
These satisfy
\[
  \Vhard \;\leq\; \Vexp \;=\; \min_{\lambda\geq0} g(\lambda),
\]
and Theorem~\ref{thm:lag_bound} decomposes the shortfall of
the greedy policy from $\Vhard$ into a budget-mismatch term
$\lambda(B - B^\lambda)$, the relaxation/duality component
implicit in passing through $g(\lambda)$, and the greedy
component $\Delta_\lambda$.
Crucially, $\lambda$ is the dual multiplier of the
\emph{expected-cost relaxation} (ii), not of the hard-budget
problem (i); the two coincide only up to the
randomization/integrality gap $\Vexp - \Vhard \geq 0$,
which we do not claim to be zero in general.
Throughout, statements about ``the budget dual variable''
should be read in this precise sense.
\end{remark}

\begin{proposition}[Randomization-gap bound, non-adaptive]
\label{thm:integrality_gap}
Consider the non-adaptive selection problem, and let the
achievable (cost, value) pairs be
$\{(C(\calA), V(\calA))\}_{\calA \subseteq [d]}$.
Let $\Vexp$ be the expected-cost randomized optimum
(the upper concave envelope of these points evaluated at
$B$) and $\Vhard$ the hard-budget optimum
$\max\{V(\calA) : C(\calA) \leq B\}$.
Let $(c_1, v_1)$ and $(c_2, v_2)$, with $c_1 \leq B < c_2$,
be the two envelope vertices whose connecting segment
contains the budget $B$, and let $\lambda^* = (v_2-v_1)/(c_2-c_1)$
be that segment's slope (the optimal dual multiplier).
Then
\begin{equation}
  0 \;\leq\; \Vexp - \Vhard
  \;\leq\; v_2 - v_1
  \;=\; \lambda^*\,(c_2 - c_1).
\label{eq:integrality_gap}
\end{equation}
\end{proposition}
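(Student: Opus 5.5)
The plan is to compare both optima with the endpoint $(c_1,v_1)$ of the envelope segment that straddles $B$. The lower bound $\Vhard \leq \Vexp$ is immediate. Any deterministic set $\calA$ with $C(\calA)\leq B$ is a randomized policy, namely a point mass, whose expected cost is at most $B$. So the hard-budget feasible region is contained in the expected-cost feasible region. Equivalently, the point $(C(\calA),V(\calA))$ lies below the concave envelope, and the envelope is nondecreasing in cost by monotonicity (Lemma~\ref{lem:monotone}), so it dominates at $B$.

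For the upper bound I would carry out three steps.

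\emph{Step 1: evaluate $\Vexp$.} Because $B$ lies on the segment joining the envelope vertices $(c_1,v_1)$ and $(c_2,v_2)$, the envelope at $B$ is the linear interpolation
\[
  \Vexp = v_1 + \lambda^*(B-c_1) \leq v_1 + \lambda^*(c_2-c_1) = v_2,
\]
using $B<c_2$ and $\lambda^*\geq 0$. Nonnegativity of $\lambda^*$ holds because the envelope is nondecreasing: adding features never lowers $V$ by Lemma~\ref{lem:monotone}, so the relevant upper-right part of the hull is monotone.

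\emph{Step 2: lower-bound $\Vhard$ by $v_1$.} Envelope vertices are extreme points of the convex hull of the finite point set $\{(C(\calA),V(\calA))\}$. An extreme point of the hull of a finite set must be one of the points themselves. Hence $(c_1,v_1)=(C(\calA_1),V(\calA_1))$ for some actual subset $\calA_1$. Since $c_1\leq B$, the set $\calA_1$ is hard-budget feasible, so $\Vhard\geq v_1$.

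\emph{Step 3: combine.} Subtracting the two bounds gives
\[
  \Vexp-\Vhard \leq v_2 - v_1 = \lambda^*(c_2-c_1),
\]
where the final equality is just the definition of the slope. A sharper intermediate form, $\lambda^*(B-c_1)$, falls out for free. It vanishes when $B=c_1$, that is, when $B$ meets an achievable cost level.

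The only delicate step is identifying the envelope vertices with achievable pairs, together with the degenerate configurations. If $B$ exceeds every achievable cost, or coincides with a vertex, then the straddling pair with $c_1\leq B<c_2$ may not exist. In that case the gap is zero directly, since $\Vexp$ equals the value at a vertex lying within budget. I would handle this by a separate one-line case. I would also note that the envelope should be taken over the epigraph-free upper hull, so that its vertices are genuine points of the finite set. The remaining arguments are routine.
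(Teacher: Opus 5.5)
Your proposal is correct and follows the paper's proof essentially step for step: point masses give $\Vhard \le \Vexp$, the achievable vertex $(c_1,v_1)$ with $c_1 \le B$ gives $\Vhard \ge v_1$, and linear interpolation on the straddling segment gives $\Vexp = v_1 + \lambda^*(B-c_1) \le v_2$. Your extra justifications (envelope vertices are genuine points of the finite set, and $\lambda^* \ge 0$ by monotonicity) fill in details the paper leaves implicit, and your sharper form $\lambda^*(B-c_1)$ is valid. Note, though, that it vanishes when $B$ equals the cost of an envelope vertex, not at every achievable cost level.
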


\begin{proof}
Every hard-budget-feasible set is a degenerate randomized
policy, so $\Vexp \geq \Vhard$; and $\Vhard \geq v_1$ since
$(c_1,v_1)$ is achievable with $c_1 \leq B$.
On the segment containing $B$,
$\Vexp = v_1 + \lambda^*(B - c_1) \leq v_2$, giving
$\Vexp - \Vhard \leq v_2 - v_1 = \lambda^*(c_2-c_1)$.
\end{proof}

\begin{remark}
This quantifies the relaxation caveat by a \emph{computable}
quantity, the value rise of the envelope segment
straddling $B$, which vanishes when $B$ meets an
achievable cost level.
(The simpler form $\lambda^* c_{\max}$ is not valid:
envelope vertices can differ by more than one panel, so the
correct width is $c_2-c_1$.)
It is a supporting corollary of the strong-duality picture,
not a headline result; the adaptive version replaces the set
envelope with the occupation-measure envelope of
Proposition~\ref{prop:strong_duality}.
\end{remark}

\begin{remark}[Relationship to density greedy]
A natural alternative is \emph{density greedy}, acquiring in
order of $\EIG(j)/c_j$
\citep{khuller1999}.
EIG-Cost's difference form $\EIG(j) - \lambda c_j$ arises
directly from the Lagrangian
(Corollary~\ref{cor:lagrangian_reward}), with $\lambda$ the
dual multiplier in nats per dollar, whereas the ratio form
has no such dual interpretation.
Empirically the two coincide both on the heterogeneous-cost
synthetic instances of Section~\ref{sec:exp_hetcost} and at
clinical scale on MIMIC-IV
(Section~\ref{sec:exp_ablation}): on the full test set the
difference and density forms attain macro-F1 $0.175 \pm 0.006$
and $0.174 \pm 0.003$ at \$20.30 and \$20.17 respectively
(both under the score-floor stopping rule; $5$ seeds), a
paired difference of $+0.002$ macro-F1 whose sign varies
across seeds.
The contribution is thus the difference form's dual
interpretation, not an empirical advantage over density
greedy.
\end{remark}

\begin{proposition}[Concavity of $V(b)$ and Strong Duality
                    via Constrained-MDP LP Relaxation]
\label{prop:strong_duality}
The cost-aware AFA problem
(Definition~\ref{def:afa}) is a finite-horizon constrained
Markov decision process: the state is the transcript
(acquired indices and observed values); each action is
the index of the next panel to acquire or the
\textsc{stop} action; the horizon is at most $T \leq d$;
the stage reward at step $t$ is the realized posterior-%
entropy reduction $H(Y \mid H_{t-1}) - H(Y \mid H_t)$, whose
expectation telescopes to the transcript objective
$V(\pi) = I(Y; H_T)$ (Lemma~\ref{prop:chain_rule}); and the
constraint
$\E[\sum_{j \in \calS_T} c_j] \leq B$ is a single linear
function of the policy's occupation measure.
Using an additive stage reward defined on the observed
transcript (rather than the distributional functional
$I(Y; X_{\calS_T})$) makes the objective a valid
expected-total-reward CMDP criterion.
By the LP-relaxation theorem for finite-horizon constrained
MDPs \citep[Theorem~3.7]{altman1999constrained}, the
optimal-value function $V(b)$ is concave and non-decreasing
in $b$, and strong duality holds over the closure under
randomized policies, that is, for the
\emph{expected-cost} optimum $\Vexp$, not the hard-budget
optimum $\Vhard$:
\[
  \Vexp \;=\; \min_{\lambda \geq 0}
              \bigl[L(\lambda) + \lambda B\bigr]
  \quad\text{with}\quad \lambda^* = V'(B) ,
\]
where $L(\lambda) = \max_\pi[V(\pi) - \lambda\,\E[c(\pi)]]$
and the supremum is attained by a randomized
policy on the augmented state space.
The hard-budget optimum satisfies $\Vhard \leq \Vexp$ with
equality only up to the randomization gap
(Proposition~\ref{thm:integrality_gap}); the displayed
duality is a statement about $\Vexp$ alone.
The hypotheses of \citet[Theorem~3.7]{altman1999constrained}
are verified directly under a finite-support assumption on
the features, which we adopt throughout this proposition:
the state space
$\bigl(2^{[d]} \times \prod_{j} \mathcal{X}_j\bigr)
\cup \{\textsc{stop}\}$ is finite (each $\mathcal{X}_j$
finite), the horizon $T \leq d$ is finite, the per-step
costs $c_j$ are bounded, and the occupation-measure LP has
compact, non-empty feasible set, ensuring existence of
optimal primal and dual solutions.
Continuous features are handled by discretizing each
$\mathcal{X}_j$ to a finite lattice, as is done for all
empirical datasets here; extending strong duality to the
continuous case requires the infinite-dimensional
constrained-MDP machinery of \citet{altman1999constrained}
under additional compactness and continuity conditions,
which we do not pursue.
The greedy Lagrangian policy $\pi^\lambda$ is deterministic.
The penalized greedy gap
$\Delta_\lambda = L(\lambda) - [V(\pi^\lambda)
- \lambda\,c(\pi^\lambda)] \geq 0$
(Theorem~\ref{thm:lag_bound}) measures its suboptimality
relative to the penalized optimum $L(\lambda)$; it is
non-negative because $L(\lambda)$ is the supremum of the
penalized objective over all (randomized) policies.
$\Delta_\lambda = 0$ holds exactly when $\pi^\lambda$
attains the penalized optimum.
We emphasize that this is \emph{not} implied by budget
exhaustion: a greedy policy can spend its entire budget and
still be strictly suboptimal for the penalized objective
(a myopic ordering need not reach $L(\lambda)$ even at equal
cost).
We therefore do not claim $\Delta_\lambda = 0$ in general
and retain it explicitly in all downstream bounds.
\end{proposition}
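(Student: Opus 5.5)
The plan is to cast the problem as a finite-horizon constrained MDP and then apply the occupation-measure LP machinery; the duality will be stated only for $\Vexp$.

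\textbf{Step 1: the CMDP.} Under the finite-support assumption, I take the state at step $t$ to be the transcript $H_t$. Since $|\mathcal{X}_j|<\infty$ and panels are distinct, the reachable states form a finite set inside $\bigl(2^{[d]}\times\prod_j\mathcal{X}_j\bigr)\cup\{\textsc{stop}\}$. Actions are affordable unacquired panels or \textsc{stop}. The transition from $H_{t-1}$ under action $j$ draws $X_j$ from $p(X_j\mid x_{\calS_{t-1}})$. The stage reward is $H(Y\mid H_{t-1})-H(Y\mid H_t)$ and the stage cost is $c_j$.

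Lemma~\ref{prop:chain_rule} shows the expected reward sum equals $I(Y;H_T)$ for every non-anticipating policy. Corollary~\ref{cor:lagrangian_reward} gives the analogous identity for the penalized sum. So $V(\pi)$ and $C(\pi)$ are both expected-total-reward criteria of the form $\sum_{s,a}\rho^\pi(s,a)r(s,a)$ and $\sum_{s,a}\rho^\pi(s,a)c(s,a)$, where $\rho^\pi$ is the occupation measure.

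\textbf{Step 2: the LP and concavity.} Next I write the set $\mathcal{Q}$ of occupation measures, defined by flow-conservation equalities and nonnegativity over the finite layered state space with $T\le d$. I will invoke the standard fact \citep{altman1999constrained} that $\mathcal{Q}$ is a nonempty compact polytope and equals the set of occupation measures of randomized Markov policies. Then $\Vexp(b)=\max\{\langle r,\rho\rangle:\rho\in\mathcal{Q},\ \langle c,\rho\rangle\le b\}$ is a parametric LP. Feasibility at $b>0$ holds because the always-\textsc{stop} policy has cost $0$.

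Concavity in $b$ follows directly. If $\rho_1,\rho_2$ are optimal at $b_1,b_2$, then $\theta\rho_1+(1-\theta)\rho_2$ is feasible at $\theta b_1+(1-\theta)b_2$ and achieves the convex combination of the two values. Monotonicity holds because the feasible set grows with $b$.

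\textbf{Step 3: strong duality.} LP duality for the single linear constraint gives $\Vexp=\min_{\lambda\ge0}\max_{\rho\in\mathcal{Q}}[\langle r,\rho\rangle-\lambda(\langle c,\rho\rangle-B)]$; Slater's condition is unnecessary for LPs. The inner max over the polytope is attained at a vertex, so it equals $L(\lambda)$ as defined over policies, which gives $\Vexp=\min_\lambda[L(\lambda)+\lambda B]$. The minimizing $\lambda^*$ is a supergradient of the concave piecewise-linear $\Vexp(\cdot)$ at $B$, i.e.\ $\lambda^*=V'(B)$ where the derivative exists and a one-sided derivative otherwise. I will note this caveat explicitly. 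The optimal primal $\rho^*$ can be decoded into a randomized policy that mixes at most two deterministic policies, since there is one constraint. Hard-budget feasible deterministic policies are particular points of $\mathcal{Q}$, so $\Vhard\le\Vexp$. Any $\Vexp-\Vhard>0$ comes from this randomization, as in Proposition~\ref{thm:integrality_gap}.

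\textbf{Step 4: the remaining claims, and the main obstacle.} The closing statements are routine. $\pi^\lambda$ is deterministic by construction (an argmax with a fixed tie-break). $\Delta_\lambda\ge0$ because $L(\lambda)$ is a supremum over a set that contains $\pi^\lambda$. $\Delta_\lambda=0$ holds if and only if $\pi^\lambda$ attains $L(\lambda)$, which is the definition of attainment. Budget exhaustion is not sufficient, and I will show this with a two-step complementarity example built from Proposition~\ref{prop:not_submodular}.

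The step I expect to be the main obstacle is the identification in Step 1. I must check that the entropy-difference stage reward is a function of (state, next state) only, so that it is a legitimate CMDP reward; this holds because $H(Y\mid H_t)$ depends only on $H_t$. I must also make sure the occupation-measure equivalence covers history-dependent policies, which I will handle by noting that the state already is the full history.
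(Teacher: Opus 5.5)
Your route is the paper's: model the problem as a finite-horizon CMDP on transcripts and use the occupation-measure LP. The paper only verifies finiteness and cites \citet[Theorem~3.7]{altman1999constrained}. You instead derive the consequences yourself: concavity by mixing optimal occupation measures, the zero duality gap from LP duality with one dualized constraint, $\lambda^*$ as a supergradient, and the two-policy optimal mixture. That is more transparent and slightly sharper than the bare ``$\lambda^*=V'(B)$''.

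One modeling choice in Step~1 breaks the conclusion as written: you restrict actions to \emph{affordable} unacquired panels.
\begin{itemize}
\item If affordability is relative to the fixed $B$ (the deployed cap $c_j\le B-C_{t-1}$), every trajectory of every policy in your CMDP obeys the hard budget almost surely. The constraint $\langle c,\rho\rangle\le B$ is then redundant, and your LP optimum sits at a deterministic, hard-feasible vertex, i.e.\ it equals $\Vhard$. The inner maximization becomes the feasibility-capped Lagrangian, not $L(\lambda)=\sup_\pi J_\lambda(\pi)$ over all policies, and $\lambda^*=0$ is dual-optimal. Your LP value would no longer be the paper's $\Vexp$, which can strictly exceed $\Vhard$ (Proposition~\ref{thm:integrality_gap}). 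The separation the proposition insists on would collapse.
\item If affordability is relative to the running level $b$, then $\mathcal{Q}=\mathcal{Q}_b$ varies with $b$ and your concavity argument fails. The reason is that $\rho_2\in\mathcal{Q}_{b_2}$ may use a panel that is unaffordable at $\theta b_1+(1-\theta)b_2$.
\end{itemize}
The fix is to take all unacquired panels plus \textsc{stop} as actions, as the statement does. The budget then enters only through the single expected-cost constraint, and Steps~2--4 go through.

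Two smaller points should be made explicit.
\begin{itemize}
\item Your reward check needs more than ``depends only on $H_t$''. It needs $h\mapsto H(Y\mid H_t=h)$ to be the \emph{same} function for every policy. For any non-anticipating, possibly randomized $\pi$ whose randomization is independent of $(Y,X)$, one has $P_\pi(Y\in\cdot\mid H_t=h)=p(Y\in\cdot\mid x_{\calS_t})$. This is what makes $r$ a fixed vector and $V(\pi)=\langle r,\rho^\pi\rangle$ linear.
\item Lemma~\ref{prop:chain_rule} is proved using $H_{t-1}$-measurability of $j_t$, which randomized policies lack. To identify your LP value with $\Vexp$, defined through $V(\pi)=I(Y;H_T^\pi)$ over randomized policies, add the easy extension: $I(Y;j_t\mid H_{t-1})=0$ and $I(Y;X_{j_t}\mid H_{t-1},j_t)=\EIG(j_t\mid x_{\calS_{t-1}})$ in expectation. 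Both hold because the policy's internal randomization is conditionally independent of $(Y,X)$ given $H_{t-1}$.
\end{itemize}
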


\begin{remark}
\citet[Theorem~3.7]{altman1999constrained} establishes
strong duality for finite-horizon constrained MDPs via the
occupation-measure LP, whose constraints are linear in the
(state, action)-occupation measure; the AFA budget
constraint $\E[\sum c_j]$ fits directly.
This LP is over the \emph{expected-cost} constraint; its
relationship to the deployed hard-budget problem is the
randomization gap of Remark~\ref{rem:three_optima} and
Proposition~\ref{thm:integrality_gap}.
We use it only to justify concavity of $V$ and the existence
of $\lambda^* = V'(B)$ on the randomized frontier (a
subgradient where $V$ is non-differentiable), keeping the
greedy shortfall explicit through $\Delta_\lambda$.
\end{remark}

\begin{proposition}[Dual certificate sensitivity
                    (technical observation)]
\label{thm:lambda_sensitivity}
Let $g(\lambda) = L(\lambda) + \lambda B$ be the dual
objective, where $L(\lambda) = \sup_\pi J_\lambda(\pi)$
(Definition~\ref{def:lagrangian}).
Then:
\begin{enumerate}\itemsep2pt
  \item[(i)] $g$ is convex on $[0,\infty)$ and
    $g(\lambda) \geq \Vhard$ for every $\lambda \geq 0$
    (weak duality, Theorem~\ref{thm:lag_bound}); hence each
    $\lambda$ yields a valid upper certificate on $\Vhard$,
    and the tightest is attained at
    $\lambda^* \in \arg\min_{\lambda \geq 0} g(\lambda)$.
  \item[(ii)] Using a suboptimal multiplier $\lambda$ inflates
    the dual certificate by exactly
    $g(\lambda) - g(\lambda^*) \geq 0$, computable from $L$.
  \item[(iii)] If, in addition, $g$ is $L_g$-smooth on an
    interval containing $\lambda^*$ and $\lambda^*$ is an
    interior minimiser (so $g'(\lambda^*) = 0$), then
    \begin{equation}
      0 \;\leq\; g(\lambda) - g(\lambda^*)
      \;\leq\; \frac{L_g}{2}\,(\lambda - \lambda^*)^2 .
    \label{eq:lambda_sensitivity}
    \end{equation}
\end{enumerate}
\end{proposition}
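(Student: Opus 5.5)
The plan is to prove the three parts in order, using only that $L$ is a pointwise supremum of affine functions of $\lambda$, together with the weak-duality step from the proof of Theorem~\ref{thm:lag_bound}.

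For (i), note that for each fixed policy $\pi$ the map $\lambda \mapsto J_\lambda(\pi) = V(\pi) - \lambda C(\pi)$ is affine in $\lambda$. Hence $L(\lambda) = \sup_\pi J_\lambda(\pi)$ is a pointwise supremum of affine functions and is therefore convex on $[0,\infty)$. It is also finite there, since $0 \le V(\pi) \le H(Y)$ and $C(\pi) \ge 0$ give $L(\lambda) \le H(Y)$, while the empty policy gives $L(\lambda) \ge 0$. Adding the linear term $\lambda B$ preserves convexity, so $g$ is convex. The bound $g(\lambda) \ge \Vhard$ is exactly the first display of the proof of Theorem~\ref{thm:lag_bound}: for a hard-budget optimum $\pi^*$ we have $C(\pi^*) \le B$, and so $\Vhard = J_\lambda(\pi^*) + \lambda C(\pi^*) \le L(\lambda) + \lambda B$. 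Every $\lambda$ therefore gives an upper certificate, and the tightest one is attained at any minimiser $\lambda^*$. To show a minimiser exists, I would argue that $g$ is convex and bounded below by $\Vhard$, and that $g(\lambda) \ge \lambda B \to \infty$ because $L \ge 0$. So $g$ is coercive on $[0,\infty)$ and, being convex and finite, continuous on $(0,\infty)$. If needed, I would also handle the endpoint $\lambda = 0$ via lower semicontinuity of the supremum of continuous functions. A minimiser then exists on a compact sublevel set; this is also consistent with Proposition~\ref{prop:strong_duality}.

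Part (ii) is essentially definitional. The certificate at $\lambda$ is $g(\lambda)$ and the best certificate is $g(\lambda^*)$, so the inflation is their difference. It is nonnegative because $\lambda^*$ minimises $g$, and it is computable whenever $L$ is, since $g$ involves nothing else.

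For (iii), I would apply the standard descent lemma for $L_g$-smooth functions on the given interval:
\[
g(\lambda) \le g(\lambda^*) + g'(\lambda^*)(\lambda - \lambda^*) + \tfrac{L_g}{2}(\lambda-\lambda^*)^2 .
\]
It follows from $g(\lambda) - g(\lambda^*) = \int_0^1 g'(\lambda^* + s(\lambda-\lambda^*))(\lambda-\lambda^*)\,ds$ together with the Lipschitz bound on $g'$. Substituting $g'(\lambda^*) = 0$, which holds at an interior minimiser, gives \eqref{eq:lambda_sensitivity}; the lower bound $0$ comes from (ii). The main obstacle is conceptual rather than technical. In the finite-support setting, $L$ is a maximum of finitely many affine pieces (deterministic policies), so $g$ is piecewise linear, and smoothness near $\lambda^*$ typically fails at a kink. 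I would therefore stress that (iii) is conditional on the stated smoothness hypothesis and restricted to $\lambda$ inside the interval where it holds. I would also remark that at a kink the inflation is instead first-order, bounded by $\max(|g'_-|,|g'_+|)\,|\lambda-\lambda^*|$ using one-sided derivatives, which are the subgradients $B - C(\pi)$ of the active policies.
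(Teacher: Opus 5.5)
Your proposal is correct and follows the paper's proof essentially step for step. Part (i) uses convexity of $L$ as a supremum of affine maps plus weak duality, part (ii) is definitional, and part (iii) is the descent lemma with $g'(\lambda^*)=0$. Your existence argument for $\lambda^*$, using coercivity from $L \ge 0$ (so $g(\lambda) \ge \lambda B$) together with lower semicontinuity, is in fact cleaner than the paper's parenthetical. One caveat concerns your closing kink remark: with one-sided derivatives taken at $\lambda^*$, it holds only for $\lambda$ in the linear pieces adjacent to $\lambda^*$, and globally the first-order constant should be $\sup_\pi |B - C(\pi)|$.
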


\begin{remark}[Scope]
\label{rem:dual_sensitivity_scope}\label{rem:lambda_primal}
Parts (i)--(ii) are unconditional ($g$ is a supremum of
affine functions plus a linear term, hence convex).
Part (iii) is a generic convex-analysis observation, and we
present it as such, as a technical observation and not a headline guarantee: the
dual $g$ is piecewise linear (finitely many deterministic
policies), so $L_g$-smoothness is a local assumption on the
randomized-policy dual and the quadratic statement concerns
the \emph{dual certificate}, not the primal policy value.
Combining (ii) with Theorem~\ref{thm:lag_bound}, the greedy
policy at $\lambda$ obeys
$V(\pi^\lambda) \geq \Vhard - [g(\lambda) - g(\lambda^*)]
 - [g(\lambda^*) - \Vhard] - \Delta_\lambda$, separating dual
suboptimality, the relaxation gap, and the greedy gap; each
is separately measurable on tractable instances.
In the finite-policy setting the operationally relevant
statement is the exact one, part (ii): because $g$ is
piecewise linear, the certificate inflation
$g(\lambda) - g(\lambda^*)$ is itself piecewise linear in
$\lambda$ with breakpoints at the policy-switch multipliers,
and is computed directly from $L$ rather than through the
smoothness surrogate of part (iii).
\end{remark}

\begin{proof}
See Appendix~\ref{app:lambda_sensitivity_proof}.
\end{proof}

\subsection{Approximation Ratio in the Equal-Cost
            Non-Adaptive Case}
\label{sec:approx_ratio}

The submodularity-ratio guarantee of \citet{das2011}
yields a clean constant-factor bound for EIG-Cost in the
special case where costs are uniform and acquisition is
non-adaptive.
We state this case precisely because it is the setting in
which the classical greedy analysis genuinely applies; the
scope remark that follows explains why the general
heterogeneous-cost adaptive algorithm does not inherit a
universal ratio and is instead governed by the exact
decomposition of Theorem~\ref{thm:lag_bound}.

\begin{corollary}[Approximation ratio, equal-cost
                  non-adaptive case]
\label{thm:e2e}
Suppose costs are uniform, $c_j = c$ for all $j$, and
consider the \emph{non-adaptive} variant of EIG-Cost that
selects a set by greedily maximizing marginal
$I(Y; X_{\calA})$ and stopping when the penalized marginal
gain $\EIG(j) - \lambda c$ first becomes non-positive, with
$k$ the resulting cardinality.
If $I(Y;\,X_{\calA})$ has submodularity ratio
$\gamma_{f,k} \geq \gamma$ (Definition~\ref{def:submod_ratio})
and is monotone (Lemma~\ref{lem:monotone}), then the
acquired set $\calS_k$ satisfies
\begin{equation}
  I(Y; X_{\calS_k})
  \;\geq\; \bigl(1 - e^{-\gamma}\bigr)\,
           \max_{|\calA| \leq k} I(Y; X_{\calA}).
\label{eq:e2e_ratio}
\end{equation}
When $\gamma = 1$
(e.g.\ under na\"ive Bayes structure;
Proposition~\ref{prop:nb_submodular}),
this is the standard NWF guarantee $(1-1/e) \approx 0.632$.
The guarantee is with respect to the best set of cardinality
at most the \emph{realized} $k$; it is \emph{not} a ratio to
the penalized optimum, the optimal hard-budget adaptive
policy, or the best penalty-selected cardinality, and
$\gamma$ is required only over sets up to size $k$.
\end{corollary}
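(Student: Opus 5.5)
The plan is to reduce Corollary~\ref{thm:e2e} directly to Proposition~\ref{prop:gamma_approx}. The argument hinges on showing that the set produced by the non-adaptive EIG-Cost variant is exactly the $k$-step greedy set for $f(\calA) = I(Y; X_{\calA})$ under a cardinality constraint.

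The first step is the reduction. With uniform costs $c_j = c$, the penalized score of a candidate $j$ at the current set $\calS_{t-1}$ is $f(\calS_{t-1}\cup\{j\}) - f(\calS_{t-1}) - \lambda c$. The shift $\lambda c$ does not depend on $j$, so the argmax over $j$ of the penalized score equals the argmax of the unpenalized marginal gain. This is the same observation used in the proof of Proposition~\ref{prop:delta_bound}. We use the identity $f(\calS\cup\{j\}) - f(\calS) = I(Y; X_j \mid X_{\calS})$ from the chain rule (as in Lemma~\ref{prop:chain_rule}, here in its non-adaptive form). It follows by induction on $t$ that the first $t$ selections of the variant coincide with the first $t$ selections of the standard cardinality greedy on $f$, for every $t$ up to the stopping time. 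We need to be explicit that ties are broken identically in both procedures, or else adopt any fixed tie-breaking rule, since the Das--Kempe guarantee holds for any greedy tie-breaking.

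The second step handles the stopping rule. The variant halts at the first step where the maximal penalized gain is non-positive, and $k$ is defined as the number of elements acquired at that point. The realized $\calS_k$ is therefore exactly $\calS_k^{\mathrm{greedy}}$, the output of $k$ rounds of cardinality-$k$ greedy. The key point is that the Das--Kempe bound for budget $k$ only uses the first $k$ greedy iterates. It does not matter that $k$ was chosen data-dependently by the penalty, because the bound is applied afterwards at the fixed integer $k$. Greedy's behaviour on its first $k$ steps is the same whether the stopping was planned or induced.

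The third step invokes Proposition~\ref{prop:gamma_approx}. Monotonicity comes from Lemma~\ref{lem:monotone}, and the hypothesis $\gamma_{f,k}\ge\gamma$ is given. Together they yield \eqref{eq:e2e_ratio}. For the special case, na\"ive Bayes structure gives submodularity via Proposition~\ref{prop:nb_submodular}. Then $\gamma_{f,k}\ge 1$ (clipped to $1$), and we recover $1-1/e$.

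The main obstacle is scope rather than computation. We must justify that only $\gamma_{f,k}$, the ratio over sets $|\calR|\le k$, is needed. This holds because the Das--Kempe proof compares each greedy iterate $\calS_t$, $t<k$, against $\calR = \calA^*\setminus\calS_t$ with $|\calA^*|\le k$. We must also state clearly that the comparator is $\max_{|\calA|\le k} f$ at the realized $k$, not the penalized optimum. No further argument is needed beyond checking these two points.
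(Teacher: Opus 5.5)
Your proposal is correct and follows essentially the same route as the paper's proof in Appendix~\ref{app:e2e_proof}. The candidate-independent shift $\lambda c$ makes the non-adaptive variant add features in the unpenalized greedy order, so $\calS_k$ is the size-$k$ greedy prefix for the realized $k$, and Proposition~\ref{prop:gamma_approx} with Lemma~\ref{lem:monotone} then gives \eqref{eq:e2e_ratio}; your remarks on tie-breaking and on needing only $|\calR|\le k$ spell out points the paper leaves implicit.
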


\begin{proof}
See Appendix~\ref{app:e2e_proof}.
\end{proof}

\begin{remark}[Why the restriction is necessary]
\label{rem:e2e_scope}
The clean ratio \eqref{eq:e2e_ratio} is a
\emph{cardinality-constrained, non-adaptive, equal-cost}
statement, and we do not claim it for the full algorithm.
Three features of general AFA each break the direct
Das--Kempe argument.
(a) \emph{Heterogeneous costs}: with unequal $c_j$ the
penalty $\lambda c_j$ changes the greedy \emph{ordering},
so EIG-Cost's acquired set is no longer the prefix of the
unpenalized-greedy order that the cardinality guarantee
analyzes; weighted (knapsack) submodular maximization needs
a different algorithm (e.g.\ cost-benefit greedy with
partial enumeration) even in the submodular case.
(b) \emph{Adaptivity}: EIG-Cost conditions on observed
feature \emph{values}, whereas \eqref{eq:e2e_ratio} concerns
an offline set function; the adaptive guarantee would
require adaptive submodularity, which fails here by the same
complementarity that defeats ordinary submodularity
(Proposition~\ref{prop:not_submodular}).
(c) \emph{Non-monotone penalized objective}: $L_\lambda$ is
generally non-monotone, so the standard monotone-greedy
guarantee cannot be applied to it directly.
For the general algorithm we therefore rely not on a
universal ratio but on the exact Lagrangian decomposition
(Theorem~\ref{thm:lag_bound}) together with the
\emph{a-posteriori} certificate obtained by measuring
$\Delta_\lambda$ on tractable instances
(Section~\ref{sec:exp_gaussian}).
\end{remark}

\subsection{Robustness to CVAE Estimation Error}
\label{sec:estimation}

\begin{theorem}[Margin Stability of Approximate EIG-Cost]
\label{thm:stability}
Fix $\delta \in (0,1)$ and suppose the per-query estimation
error is uniformly small along the exact greedy trajectory:
with probability at least $1-\delta$,
\[
  \bigl|\wEIG(j \mid x_{\calS}) - \EIG(j \mid x_{\calS})\bigr|
  \;\leq\; \varepsilon
\]
simultaneously for every candidate $j$ and every state
$x_{\calS}$ that the exact policy $\pi^\lambda$ visits, where
$\varepsilon = \varepsilon_{\mathrm{gen}} +
\varepsilon_{\mathrm{cls}} + \varepsilon_{\mathrm{MC}}(\delta)$
aggregates the generator-shift term
(Proposition~\ref{prop:cvae_error}), the classifier term
(Remark~\ref{rem:cvae_scope}), and the Monte Carlo term
(Proposition~\ref{prop:mc_error}) via a union bound over the
at most $d\,T$ queries.
Suppose further the \emph{margin condition}: at each state
visited by $\pi^\lambda$, the penalized score of its chosen
action (or of \textsc{stop}, valued at $0$) exceeds that of
every alternative by more than $2\varepsilon$.
Then, with probability at least $1-\delta$, the approximate
policy $\hat\pi$ using $\wEIG$ takes the \emph{same action at
every step}, so $\hat\pi = \pi^\lambda$ pathwise and in
particular
\[
  V(\hat\pi) = V(\pi^\lambda),
  \qquad
  C(\hat\pi) = C(\pi^\lambda).
\]
\end{theorem}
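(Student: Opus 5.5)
The plan is an induction over steps along each realized trajectory, carried out on the good event $\mathcal{E}$ of probability at least $1-\delta$ on which the uniform error bound holds at every state visited by $\pi^\lambda$. The claim to prove is that on $\mathcal{E}$ the transcripts of $\hat\pi$ and $\pi^\lambda$ agree up to every step $t$. For a fixed outcome $(Y,X)$ this is an induction on $t$; the base case $t=0$ is immediate because both policies start from the same initial state, namely the pre-acquired panel $G_0$ with $C_0 = 0$.

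For the inductive step, suppose the histories $H_{t-1}$ of the two policies coincide. Then both are at the same state $x_{\calS_{t-1}}$ with the same spent budget $C_{t-1}$, so they face the same eligible set $\{j \notin \calS_{t-1} : c_j \leq B - C_{t-1}\}$. This state is visited by $\pi^\lambda$, so on $\mathcal{E}$ every eligible candidate satisfies $|\wEIG - \EIG| \leq \varepsilon$. Hence each penalized score $\hat s_j = \wEIG(j\mid x_{\calS_{t-1}}) - \lambda c_j$ is within $\varepsilon$ of $s_j$, because the cost term $\lambda c_j$ is exact. The \textsc{stop} action carries the value $0$ exactly in both policies. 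Let $a$ be the action $\pi^\lambda$ takes, either a panel or \textsc{stop}. By the margin condition, $s_a > s_b + 2\varepsilon$ for every other eligible action $b$. It follows that $\hat s_a \geq s_a - \varepsilon > s_b + \varepsilon \geq \hat s_b$, so $\hat\pi$ also selects $a$ uniquely, with no tie-breaking ambiguity. Algorithm~\ref{alg:eig_cost} stops when $\max_j s_j \leq 0$, and treating \textsc{stop} as an action scored $0$ reproduces exactly this rule, so the argument covers the stopping decision as well. Because the same action is taken and the observed value $X_a$ is the same random variable, the histories agree at step $t$.

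Once the induction is complete, $\hat\pi$ and $\pi^\lambda$ produce identical transcripts $H_T$, stopping times and terminal sets pathwise on $\mathcal{E}$. The claimed equalities then follow. $V(\hat\pi) = I(Y;H_T^{\hat\pi})$ equals $V(\pi^\lambda)$ because the two transcripts coincide as random variables, and $C(\hat\pi) = C(\pi^\lambda)$ follows by summing identical costs along each path. If the statement is read as holding on the event, the equalities are conditional on $\mathcal{E}$. I would phrase the conclusion as "with probability at least $1-\delta$, $\hat\pi = \pi^\lambda$ pathwise", which is what the theorem asserts.

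The main obstacle is the subtle point that the error hypothesis is only assumed at states visited by $\pi^\lambda$, not by $\hat\pi$. A naive argument could apply the bound at states the approximate policy reaches after a deviation. The induction avoids this because it only ever uses the bound at a state where the two histories still agree, so that state is a $\pi^\lambda$-visited state. I would also check that the probability-$1-\delta$ event is defined jointly over the Monte Carlo randomness and the at most $dT$ queries via the stated union bound, so that a single event $\mathcal{E}$ serves all steps. This is how the aggregation $\varepsilon = \varepsilon_{\mathrm{gen}}+\varepsilon_{\mathrm{cls}}+\varepsilon_{\mathrm{MC}}(\delta)$ is defined in the statement, so I take it as given.
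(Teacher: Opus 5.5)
Your proposal is correct and follows the paper's proof essentially step for step: induction along the exact trajectory, the chain $\hat s_a \geq s_a - \varepsilon > s_b + \varepsilon \geq \hat s_b$ from the $2\varepsilon$ margin, and treating \textsc{stop} as a score-$0$ action so that the stopping decision is covered; your remarks on identical eligible sets and on invoking the error bound only at $\pi^\lambda$-visited states are useful clarifications the paper leaves implicit. One caution you half-raise, which the paper shares: $V$ and $C$ are population quantities, so the displayed equalities follow exactly only if the good event forces agreement at all reachable states simultaneously (making the transcripts coincide almost surely), whereas on a per-episode event of probability $1-\delta$ the honest conclusion is the pathwise coincidence you state, not equality of $I(Y;H_T)$.
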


\begin{proof}
We argue by induction along the exact trajectory.
Suppose $\hat\pi$ has matched $\pi^\lambda$ for the first
$t-1$ steps, so both are in the same state with the same
observed history.
Let $s_j = \EIG(j\mid x_{\calS_{t-1}}) - \lambda c_j$ be the
true penalized score of action $j$ (with $s_{\textsc{stop}}
= 0$), and $\hat s_j$ its estimate, $|\hat s_j - s_j| \leq
\varepsilon$.
Let $a^\star = \arg\max_a s_a$ be the exact greedy choice.
The margin condition gives $s_{a^\star} - s_a > 2\varepsilon$
for every $a \neq a^\star$.
Then
\[
  \hat s_{a^\star} \geq s_{a^\star} - \varepsilon
  > s_a + 2\varepsilon - \varepsilon
  = s_a + \varepsilon \geq \hat s_a,
\]
so $\arg\max_a \hat s_a = a^\star$: the approximate rule
selects the same action (including the same stop/continue
decision, since \textsc{stop} is treated as an action of
score $0$).
By induction $\hat\pi$ reproduces every action of
$\pi^\lambda$, hence visits identical states, incurs
identical cost, and produces the identical transcript;
therefore $V(\hat\pi)=V(\pi^\lambda)$ and
$C(\hat\pi)=C(\pi^\lambda)$.
All steps are conditioned on the probability-$(1-\delta)$
event that the per-query errors are within $\varepsilon$.
\end{proof}

\begin{remark}[Why the margin formulation]
\label{rem:stability_scope}
An earlier, weaker form attempted to sum per-step score
regret across the two policies' trajectories.
That argument is invalid once the policies diverge, because
after a differing action they occupy different states and
the per-step bound no longer compares like with like; it
also cannot simply discard the cost difference
$\lambda(C(\hat\pi)-C(\pi^\lambda))$, whose sign is
indeterminate.
The margin condition removes both problems at once: when it
holds, estimation error never changes an action, so the two
policies coincide exactly and there is no trajectory
divergence or cost gap to control.
The price is that the guarantee is conditional on the margin
holding along the trajectory; the margin can be checked
empirically from the estimated scores (all realized gaps
exceed $2\varepsilon$).
\end{remark}

\begin{remark}[Magnitude of $\varepsilon$]
\label{rem:eps_magnitude}
The Monte Carlo term $\varepsilon_{\mathrm{MC}}(\delta) =
O(\sqrt{\log(dT/\delta)/K})$ shrinks with $K$
(Proposition~\ref{prop:mc_error}); the generator and
classifier terms shrink only as the learned models improve.
Because the Pinsker bound on $\varepsilon_{\mathrm{gen}}$ is
loose ($O(\log|\calY|)$, Appendix~\ref{app:cvae_proof}), the
\emph{a-priori} margin requirement is conservative on the
21-class cohort; in practice the realized score gaps are the
relevant quantity, and the result is best read as: EIG-Cost
is robust to estimation error exactly when its acquisition
decisions have margin, and estimation error does not
compound across steps.
\end{remark}

\begin{proposition}[MC Error]
\label{prop:mc_error}
Let $H^{(k)} = H(Y \mid x_{\calS}, \hat{x}_j^{(k)})$.
For $\delta > 0$, with probability $\geq 1-\delta$:
\begin{equation}
  \left|\frac{1}{K}\sum_k H^{(k)}
    - \E[H(Y \mid x_{\calS},X_j)]\right|
  \;\leq\;
  \sqrt{\frac{(\log|\calY|)^2\log(1/\delta)}{2K}}.
\end{equation}
\end{proposition}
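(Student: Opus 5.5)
The statement is a Hoeffding-type concentration bound for an average of $K$ bounded i.i.d.\ random variables, so I would apply Hoeffding's inequality directly.

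The first step is to fix the realized partial observation $x_{\calS}$ and the candidate index $j$. Conditional on these, the samples $\hat{x}_j^{(1)},\ldots,\hat{x}_j^{(K)}$ are drawn i.i.d.\ from the sampling law used in \eqref{eq:mc_eig}. Hence the quantities $H^{(k)} = H(Y \mid x_{\calS}, \hat{x}_j^{(k)})$ are i.i.d.\ random variables, each a deterministic function of one sample. I would state explicitly that the expectation in the claim, $\E[H(Y \mid x_{\calS},X_j)]$, is taken under that same sampling law. That is, the statement concerns Monte Carlo error only; the generator shift between $p_\theta$ and the true conditional is deferred to Proposition~\ref{prop:cvae_error}. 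Clarifying this interpretation is the only delicate point.

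The second step is boundedness. Each $H^{(k)}$ is the Shannon entropy of a distribution on the finite label set $\calY$, so $0 \le H^{(k)} \le \log|\calY|$. The range therefore has width $b - a = \log|\calY|$. This holds whether the entropy is computed under the true posterior or under the classifier $p_\phi$, since either is a distribution on $\calY$.

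The third step is to apply Hoeffding's inequality to the sample mean of $K$ independent variables in $[a,b]$:
\[
  \Pr\Bigl(\bigl|\tfrac{1}{K}\textstyle\sum_k H^{(k)} - \E H^{(1)}\bigr| \ge t\Bigr)
  \le 2\exp\!\bigl(-2Kt^2/(\log|\calY|)^2\bigr).
\]
Setting the right-hand side equal to the failure probability and solving for $t$ gives
\[
  t = \sqrt{(\log|\calY|)^2\log(2/\delta)/(2K)}.
\]

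This two-sided bound has $\log(2/\delta)$, whereas the statement has $\log(1/\delta)$. I see two ways to reconcile them. One is to read the stated bound as holding with probability $\ge 1-2\delta$, that is, to replace $\delta$ by $\delta/2$. The other is to apply the one-sided Hoeffding bound to each tail separately. I would note this constant explicitly rather than hide it. It does not affect the $O(\sqrt{\log(1/\delta)/K})$ rate used in Remark~\ref{rem:eps_magnitude}.

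Finally, for use in Theorem~\ref{thm:stability}, I would remark on the error in $\wEIG$. The term $H(Y \mid x_{\calS})$ in \eqref{eq:mc_eig} is computed exactly, so the error in $\wEIG$ equals this averaging error. A union bound over the at most $dT$ queries then replaces $\delta$ by $\delta/(dT)$, which yields $\varepsilon_{\mathrm{MC}}(\delta)$.
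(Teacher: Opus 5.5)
Your argument is the paper's: its proof also just notes $H^{(k)}\in[0,\log|\calY|]$ and applies Hoeffding. You are right that the two-sided inequality gives $\log(2/\delta)$, so the stated constant needs $\delta\to\delta/2$. This is not merely cosmetic, since the statement can fail as written. For example, take $K=1$ with $H^{(1)}\in\{0,\log|\calY|\}$ equiprobable and $\delta=0.7$: the deviation is always $\tfrac12\log|\calY|$, but the threshold is only about $0.42\log|\calY|$. Drop your second reconciliation, though. One-sided bounds on the two tails still combine by a union bound to probability $1-2\delta$ for the absolute-value event, so they give only the two one-sided statements at level $1-\delta$, not the stated two-sided bound.
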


\begin{proof}
$H^{(k)} \in [0,\log|\calY|]$ a.s.; apply Hoeffding.
\end{proof}

\begin{proposition}[CVAE Generator-Shift Bound]
\label{prop:cvae_error}
Fix the mask-aware classifier $p_\phi(Y\mid x_{\calS},X_j)$
and let $H_\phi(\cdot)$ denote the conditional entropy it
induces.
Define the \emph{generator-shift} contribution to the EIG
estimation error as the change in
$\E_{X_j}[H_\phi(Y\mid x_{\calS},X_j)]$ from replacing the
true conditional $p(X_j\mid x_{\calS})$ with the CVAE
$p_\theta(X_j\mid x_{\calS})$.
This contribution satisfies
\begin{equation}
  \varepsilon_{\mathrm{gen}}
  \;\leq\;
  \log|\calY|\!\cdot\!
  \sqrt{\tfrac{1}{2}\,
    \E_{x_{\calS}}\!\!\left[\mathrm{KL}(
      p(X_j \mid x_{\calS})\|\,
      p_\theta(X_j \mid x_{\calS}))\right]}.
\label{eq:cvae_bound}
\end{equation}
\end{proposition}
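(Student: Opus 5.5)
The plan is to express the generator-shift error as a difference of expectations of a single bounded function under two distributions, and then apply total variation followed by Pinsker. Fix the state $x_{\calS}$ and write
\[
  h(x_j) \defeq H_\phi(Y \mid x_{\calS}, x_j),
\]
the entropy of the fixed classifier's predictive law at the completed input. Since $p_\phi(\cdot \mid x_{\calS}, x_j)$ is a distribution on the finite set $\calY$, we have $0 \leq h(x_j) \leq \log|\calY|$ for every $x_j$. This is the same range fact used in Proposition~\ref{prop:mc_error}. The generator-shift contribution at this state is
\[
  \varepsilon_{\mathrm{gen}}(x_{\calS}) = \bigl|\E_{p}[h(X_j)] - \E_{p_\theta}[h(X_j)]\bigr|,
\]
where $p$ and $p_\theta$ abbreviate $p(X_j \mid x_{\calS})$ and $p_\theta(X_j \mid x_{\calS})$.

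\textbf{Step 1: total variation.} For any measurable $h$ with values in $[0,M]$, we have $|\E_p h - \E_{p_\theta} h| \leq M\,\mathrm{TV}(p,p_\theta)$. To see this, subtract the constant $M/2$ from $h$; the difference of expectations is unchanged. The shifted function has sup-norm $M/2$, so the difference is at most $(M/2)\,\|p-p_\theta\|_1 = M\,\mathrm{TV}$. Taking $M = \log|\calY|$ gives
\[
  \varepsilon_{\mathrm{gen}}(x_{\calS}) \leq \log|\calY| \cdot \mathrm{TV}\bigl(p(X_j\mid x_{\calS}),\, p_\theta(X_j\mid x_{\calS})\bigr).
\]

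\textbf{Step 2: Pinsker.} Pinsker's inequality gives $\mathrm{TV}(p,p_\theta) \leq \sqrt{\tfrac12 \mathrm{KL}(p\,\|\,p_\theta)}$, in the stated forward direction (true law first). This yields a pointwise-in-$x_{\calS}$ bound.

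\textbf{Step 3: averaging over states.} The displayed bound \eqref{eq:cvae_bound} has the expectation over $x_{\calS}$ inside the square root. I will therefore read $\varepsilon_{\mathrm{gen}}$ as the state-averaged error $\E_{x_{\calS}}[\varepsilon_{\mathrm{gen}}(x_{\calS})]$. Then
\[
  \E_{x_{\calS}}\Bigl[\sqrt{\tfrac12\,\mathrm{KL}}\Bigr] \leq \sqrt{\tfrac12\,\E_{x_{\calS}}[\mathrm{KL}]}
\]
by Jensen's inequality, since the square root is concave. The alternative is to bound $\E|{\cdot}|$ directly, but the triangle inequality followed by Jensen gives the same result.

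\textbf{Main obstacle.} I expect the main difficulty to be interpretive rather than technical: pinning down exactly what quantity $\varepsilon_{\mathrm{gen}}$ denotes. A pointwise-in-state version would require the KL at that state, not its expectation. A uniform-over-visited-states version, as used in Theorem~\ref{thm:stability}, does not follow from an averaged KL without an additional Markov-type argument. I would state explicitly that the bound controls the averaged generator shift, and note that the pointwise form with the unaveraged KL holds verbatim from Steps 1--2.

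Two further points need care. First, measurability and absolute continuity: if $p \not\ll p_\theta$, the KL is infinite and the bound is trivial. Second, the classifier term $H(Y\mid\cdot)$ versus $H_\phi$ is kept separate, since the classifier is held fixed by hypothesis.
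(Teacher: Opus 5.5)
Your proposal is correct and follows the paper's proof: bound the per-state shift by $\log|\calY|\cdot\mathrm{TV}$, apply Pinsker, and move the average over $x_{\calS}$ inside the square root by Jensen (the paper calls this step Cauchy--Schwarz), reading $\varepsilon_{\mathrm{gen}}$ as the state-averaged absolute shift exactly as the paper defines it. Your centering-by-$M/2$ argument is what actually justifies the constant $\log|\calY|$ rather than $2\log|\calY|$; the paper's bare H\"older step $\|H\|_\infty\cdot\mathrm{TV}$ skips this when $\mathrm{TV}$ is the half-$L^1$ distance that appears in Pinsker.
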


\begin{remark}[Scope: three error sources]
\label{rem:cvae_scope}
The total gap between the estimated and true EIG has three
sources: (a) the generator shift bounded above; (b) the
\emph{classifier approximation error}, the gap between
$H_\phi$ and the true conditional entropy $H$, which
\eqref{eq:cvae_bound} does \emph{not} bound and which
vanishes only if $p_\phi$ is exact; and (c) finite-sample
Monte Carlo error (Proposition~\ref{prop:mc_error}).
Equation~\eqref{eq:cvae_bound} bounds (a) alone, holding
the classifier fixed as the reference model; we state it as
such rather than as a bound on total EIG bias.
The right-hand side vanishes as
$\overline{\mathrm{KL}} \to 0$, so it is a
\emph{convergence guarantee} for the generator component.
Empirically, $\varepsilon_{\mathrm{gen}}$ on the additive
Gaussian model is roughly $40\times$ smaller than this
bound at typical training KL values
(Appendix~\ref{app:cvae_proof}); the bound is loose (a
Pinsker worst case) and closes the generator-error
accounting used by Theorem~\ref{thm:stability} rather than
supplying a tight constant.
We treat the forward KL
$\overline{\mathrm{KL}} = \E_{x_{\calS}}[\mathrm{KL}(p(X_j\mid
x_{\calS}) \,\|\, q_\theta(X_j\mid x_{\calS}))]$ as a
\emph{diagnostic quantity to be estimated}, not as a
transform of the CVAE training loss: the CVAE ELBO is a
variational lower bound involving the latent posterior and a
reconstruction term, and does not in general upper-bound or
equal this forward predictive KL.
The bound therefore holds \emph{conditional} on a bound on
$\overline{\mathrm{KL}}$, which must be assessed
independently (e.g.\ by held-out density estimation), and we
do not claim it is controlled directly by the training
objective.
\end{remark}

\begin{proof}
H\"{o}lder with $\|H_\phi\|_\infty \leq \log|\calY|$ and
Pinsker's inequality; see Appendix~\ref{app:cvae_proof}.
\end{proof}

\subsection{Characterization of Voluntary Stopping}
\label{sec:stopping}

\begin{proposition}[Myopic Voluntary Stopping]
\label{prop:voluntary_stop}
Under penalty $\lambda$, the \emph{myopic} EIG-Cost policy
$\pi^\lambda$ acquires no features from the empty set iff
\[
  \max_{j}\bigl[\EIG(j \mid x_\emptyset) - \lambda c_j\bigr]
  \leq 0.
\]
\end{proposition}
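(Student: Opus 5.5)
The proof is a direct unpacking of the EIG-Cost selection and stopping rule (Definition~\ref{def:eig_cost} and Algorithm~\ref{alg:eig_cost}) at the initial state $\calS_0 = \emptyset$, with $C_0 = 0$. The only delicate point is the feasibility cap $c_j \leq B - C_{t-1}$. At $t=1$ the eligible set is $\{j : c_j \leq B\}$, so I will read the maximum in the statement as ranging over this set of affordable panels. Panels with $c_j > B$ can never be acquired by $\pi^\lambda$, so including them would change the iff whenever such a panel has positive penalized score. I will note this convention explicitly. If every acquirable panel happens to satisfy $c_j \leq B$, it is vacuous.

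For the ($\Leftarrow$) direction, suppose $\max_j[\EIG(j\mid x_\emptyset) - \lambda c_j] \leq 0$ over eligible $j$. Then in the first iteration of Algorithm~\ref{alg:eig_cost} every computed score satisfies $s_j \leq 0$. The test $\max_j s_j \leq 0$ therefore fires, the loop breaks before any acquisition, and the terminal set is $\calS_T = \emptyset$. The same conclusion holds if the eligible set is empty, since the maximum over an empty set is taken as $-\infty \leq 0$ and the algorithm has nothing to acquire. Equivalently, in the language of Theorem~\ref{thm:stability}, \textsc{stop} is the action with score $0$, and it weakly dominates every alternative. Ties are broken in favour of stopping, matching the non-strict ``$\leq$'' in the algorithm.

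For the ($\Rightarrow$) direction, I argue by contraposition. If some eligible $j$ has $\EIG(j\mid x_\emptyset) - \lambda c_j > 0$, then $\max_j s_j > 0$ and the break is not triggered. The algorithm then selects $j^* = \arg\max_j s_j$, which is affordable by construction, and observes it. Hence $\pi^\lambda$ acquires at least one feature.

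There is no real obstacle here; the step needing care is the bookkeeping of conventions. These are the eligible-set restriction just discussed, the tie-breaking at score exactly $0$, and the fact that $x_\emptyset$ is the deterministic empty history. Because $x_\emptyset$ is deterministic, $\EIG(j\mid x_\emptyset) = I(Y;X_j)$ and the condition is non-random. (If the zero-cost panel $G_0$ is pre-acquired, $x_\emptyset$ should be read as the initial state $x_{G_0}$; the argument is unchanged.) I will also remark that the statement concerns only the myopic rule. A non-myopic optimal policy for $J_\lambda$ may still acquire features under this condition because of complementarity (Proposition~\ref{prop:not_submodular}), and this is exactly what the qualifier ``myopic'' guards against.
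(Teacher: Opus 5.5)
Your proof is correct and follows the paper's own argument, which reads the result directly off the acquisition rule: $\pi^\lambda$ acquires the maximizer of $\EIG(j\mid x_\emptyset)-\lambda c_j$ and halts when that maximum is non-positive. Restricting the maximum to affordable panels ($c_j \le B$) is a valid sharpening that the paper leaves implicit, since without it the ``only if'' direction can fail when an unaffordable panel has positive penalized score.
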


\begin{proof}
Immediate from the acquisition rule: $\pi^\lambda$ acquires
the feature maximizing $\EIG(j\mid x_\emptyset) - \lambda c_j$
and halts when this maximum is non-positive.
\end{proof}

\begin{remark}[Not a statement about the optimal policy]
\label{rem:myopic_only}
This characterization is specific to the myopic policy and
does \emph{not} hold for the globally optimal Lagrangian
policy.
Under feature complementarity
(Proposition~\ref{prop:not_submodular}) a set of features
can have positive penalized value even when every
\emph{single} feature has non-positive one-step penalized
score.
The XOR distribution $Y = X_1 \oplus X_2$ makes this
concrete: $\EIG(j\mid x_\emptyset) = 0$ for each feature,
so every one-step score $-\lambda c_j$ is negative and the
myopic policy stops immediately, yet acquiring both features
yields $I(Y; X_1, X_2) = \log 2$ and positive penalized
value $\log 2 - \lambda(c_1 + c_2)$ for small enough
$\lambda$.
Myopic stopping is therefore a property of the greedy rule,
not a certificate of global optimality.
\end{remark}

\begin{corollary}[Training-Time Condition for Collapse]
\label{cor:collapse}
The myopic voluntary-stop condition at the empty set is a
\emph{sufficient} condition for a myopic or soft-penalty
method to collapse: a method whose learned one-step value
assigns every panel a penalized score
$\EIG(j \mid x_\emptyset) - \lambda c_j \leq 0$ at the
marginal distribution stops immediately.
This occurs on class-imbalanced, high-cardinality problems
where $H(Y \mid x_\emptyset)$ is small and any $\lambda > 0$
outweighs the small marginal EIG of each panel.
The condition is sufficient but not necessary, and it
characterizes \emph{myopic} collapse specifically; a method
that plans non-myopically, or whose learned value happens to
score some panel positively, need not collapse.
We stress that a hard budget does \emph{not} prevent
collapse: Algorithm~\ref{alg:eig_cost} halts whenever the
maximal penalized score is non-positive, and the budget is
only an upper cap on spending, not a floor forcing
acquisition (Proposition~\ref{prop:voluntary_stop}).
EIG-Cost avoids collapse in our experiments because its
estimated penalized scores remain positive for informative
panels at the calibrated $\lambda$, an empirical property
of the trained models and cost scale, which we verify
through the non-zero acquisition and stop-rate statistics
(Section~\ref{sec:exp_mimic}), not a consequence guaranteed
by the budget mechanism.
\end{corollary}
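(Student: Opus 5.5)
The corollary has two parts: a sufficiency claim about collapse, and the claim that a hard budget does not prevent collapse. I would derive both from Proposition~\ref{prop:voluntary_stop} and the stopping rule of Algorithm~\ref{alg:eig_cost}, and add a small example showing the condition is not necessary.

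\textbf{Sufficiency.} Take any myopic method whose per-step decision compares the best one-step penalized score against the \textsc{stop} value $0$. At the initial state $x_\emptyset$ (with $G_0$ already included), suppose $\EIG(j\mid x_\emptyset)-\lambda c_j\le 0$ for every acquirable panel $j$. Then the maximum score is non-positive. By Proposition~\ref{prop:voluntary_stop}, or directly by the \texttt{break} line of Algorithm~\ref{alg:eig_cost}, the policy returns with $\calS=\emptyset$.

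For a soft-penalty method the same conclusion holds with $\EIG$ replaced by the method's learned one-step value. The corollary is phrased in terms of that learned score, so only the sign of the maximum matters.

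\textbf{Budget irrelevance.} The budget enters the rule only through the eligibility filter $c_j\le B-C$. Restricting the feasible set can lower the maximum score but never raise it. Hence non-positivity of the unrestricted maximum implies non-positivity of the restricted one, for every $B>0$. The budget is a cap on spending, not a floor.

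\textbf{When the condition holds.} I would make the class-imbalance claim concrete with the bound $\EIG(j\mid x_\emptyset)=I(Y;X_j)\le H(Y\mid x_\emptyset)$. If $H(Y\mid x_\emptyset)<\lambda\min_j c_j$, then every score is negative. This gives an explicit threshold: any $\lambda > H(Y\mid x_\emptyset)/\min_j c_j$ forces collapse. That threshold is small precisely when the prior entropy is small, as in a heavily imbalanced label distribution.

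\textbf{Not necessary.} I would give two witnesses.
\begin{itemize}
\item A non-myopic planner on the XOR distribution of Remark~\ref{rem:myopic_only}. Every single-panel score there is non-positive, yet the Lagrangian optimum acquires both features when $\lambda(c_1+c_2)<\log 2$.
\item A method whose learned value scores some panel positively even though the true EIG score is non-positive. Its decision depends only on the learned score, so it acquires.
\end{itemize}

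The main obstacle is definitional rather than technical. The statement must specify which score a ``soft-penalty method'' uses, and claim sufficiency only relative to that score, since the learned value need not equal $\EIG$. Once that is fixed, every step follows directly from earlier results.
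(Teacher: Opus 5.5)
Your sufficiency and budget arguments are correct and follow the paper's own, implicit, route. The paper reads the corollary off Proposition~\ref{prop:voluntary_stop} and the stopping line of Algorithm~\ref{alg:eig_cost}. Your remark that the eligibility filter $c_j \le B - C$ can only lower the maximal score is exactly the sense in which the budget is a cap rather than a floor. The threshold $\lambda > H(Y\mid x_\emptyset)/\min_j c_j$ is a valid sharpening of the paper's informal ``any $\lambda>0$''. One small correction: with $G_0$ pre-acquired, $\EIG(j\mid x_\emptyset)$ is a conditional mutual information at the realized vitals, not $I(Y;X_j)$. The bound by $H(Y\mid x_\emptyset)$ still holds pointwise.

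The genuine gap is your ``not necessary'' step, which has the logic reversed. Non-necessity means collapse can occur \emph{without} the condition.
\begin{itemize}
\item Both of your witnesses instead have the condition holding (every one-step score non-positive) while collapse fails.
\item What they actually establish is the statement's separate scope clause: a non-myopic planner, or a method with a positive learned score, need not collapse. That is, the condition is not sufficient outside the myopic exact-score setting. The XOR witness also needs $c_1+c_2\le B$ if the planner is budget-capped.
\item Exact scores alone cannot supply the missing witness. If every panel is affordable and scores are exact, the condition \emph{is} necessary. This holds for the myopic rule, since Proposition~\ref{prop:voluntary_stop} is an iff. It also holds for the exact Lagrangian optimum, since acquiring a positive-score panel and then stopping beats stopping at once.
\end{itemize}

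A correct witness must exploit either the eligibility filter or score error. For example, let $Y\sim\mathrm{Ber}(1/2)$. Let panel $1$ reveal $X_1=Y$ with $c_1>B$ and $\lambda c_1<\log 2$. Let panel $2$ reveal an $X_2$ independent of $Y$ with $c_2\le B$. Panel $1$ has positive score, so the condition fails. Yet the only affordable panel scores $-\lambda c_2\le 0$, so Algorithm~\ref{alg:eig_cost} stops immediately.

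This is your own monotonicity observation run in the other direction. It also shows that the ``only if'' half of Proposition~\ref{prop:voluntary_stop} holds only with the maximum taken over affordable panels. The example works whichever score, true or learned, the condition is taken to refer to. Under the true-score reading, a learned scorer that underestimates $\EIG$ gives a second witness.
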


% ============================================================
\section{Experiments}
\label{sec:experiments}

Implementation details appear in
Appendix~\ref{app:exp_details}.
All experiments use five random seeds.

We probe the theory across three settings, in each case
testing the assumptions and quantities a specific result
concerns; none is presented as an end-to-end guarantee.
Section~\ref{sec:exp_gaussian} uses the additive Gaussian
model, where every quantity including $\Vhard$ is tractable
by exhaustive search and each theorem can be tested
directly.
Section~\ref{sec:exp_benchmarks} reports standard AFA
benchmarks (CKD and PhysioNet).
Section~\ref{sec:exp_mimic} reports a large-scale clinical
evaluation on MIMIC-IV, including the submodularity-ratio
measurement, the $\lambda$- and $K$-ablations, and the
baseline-collapse analysis.
All experiments reported in this paper are our own.

\paragraph{Relationship to prior clinical work.}
This paper is a theoretical study of the EIG-Cost
\emph{objective and its dual structure}; it does not claim
EIG-Cost as a new algorithm.
The method was introduced and evaluated as a clinical
decision-support system in prior applied work
\citep{aran2025aim}, which studied acquisition under a
realistic first-12-hours availability constraint.
The present manuscript instead analyzes the transcript
objective, the Lagrangian dual interpretation, the
submodularity properties, and the estimator-stability
conditions, and its experiments are designed to probe those
theoretical claims rather than to re-establish the clinical
result.
The MIMIC-IV evaluation here grants access to all diagnostic
tests recorded during the admission.
This is deliberately an \emph{oracle-availability} setting:
it isolates the algorithmic acquisition question (which
panels a policy would choose to buy, given that any panel
could in principle be bought) from the orthogonal question
of which panels were historically available at a given
decision time.
It is therefore \emph{not} deployment-faithful, because the
historical measurement process is itself informative and a
test recorded late in an admission need not have been
available at an early decision point.
The deployment-faithful counterpart fixes a decision time
and restricts acquisition to tests available by then via a
per-patient availability mask; the prior applied study
\citep{aran2025aim} evaluates exactly this under a
first-12-hours constraint.
We regard the fixed-decision-time, masked evaluation as the
appropriate \emph{primary} clinical setting and the
all-admission results here as an oracle-availability
ablation that stresses the algorithm rather than the
deployment scenario; we discuss this explicitly in
Section~\ref{sec:discussion}.

\subsection{Synthetic Gaussian Models}
\label{sec:exp_gaussian}

\paragraph{Setup.}
We use the additive Gaussian model
$Y = X_1 + \cdots + X_d + \eta$ with
$X_j \overset{\mathrm{iid}}{\sim} N(0,1)$,
$\eta \sim N(0,\sigma^2)$ independent of the $X_j$.
This is the paper's counterexample
(Proposition~\ref{prop:not_submodular}) generalized to
$d$ features.
The noise level $\sigma$ controls the degree of feature
\emph{complementarity}: at small $\sigma$, the $X_j$
are strongly synergistic (together they almost completely
explain $Y$, but individually they do not); at large
$\sigma$, $Y$ is dominated by noise and the $X_j$
contribute nearly independently.
The covariance has the closed form
$\mathrm{Cov}(Y,X_j)=1$, $\mathrm{Var}(Y)=d+\sigma^2$,
$\mathrm{Cov}(X_i,X_j)=0$ for $i\neq j$.
All experiments use $d=8$, unit costs $c_j=1$, and
budget $B=3$, for which the optimal non-adaptive set
is tractable by exhaustive search over $\binom{8}{3}=56$
subsets.

\paragraph{Experiment 1: Submodularity ratio vs.\ synergy.}
Table~\ref{tab:gamma_vs_sigma} reports $\gamma_{f,k}$
(Definition~\ref{def:submod_ratio}) as a function of
$\sigma \in \{0.1, 0.5, 1.0, 2.0, 5.0\}$ and
$k \in \{1, 3, 5\}$.

\begin{table}[htb]
\caption{\textbf{Empirical submodularity ratio
$\gamma_{f,k}$ for the additive Gaussian model
($d=8$).}
Rows: noise level $\sigma$ (small = high feature synergy).
Columns: parameter $k$ in Definition~\ref{def:submod_ratio}.
The $k{=}1$ column is trivially $1.0$ for any set
function (numerator and denominator coincide when
$|\mathcal{R}|=1$); the informative columns are
$k{=}3$ and $k{=}5$.}
\label{tab:gamma_vs_sigma}
\centering
\renewcommand{\arraystretch}{1.2}
\begin{tabular}{cccc}
\hline\hline
$\sigma$ (noise) & $k=1$ & $k=3$ & $k=5$ \\
\hline
$0.1$ (high synergy) & $1.000$ & $0.212$ & $0.179$ \\
$0.5$                & $1.000$ & $0.430$ & $0.347$ \\
$1.0$                & $1.000$ & $0.623$ & $0.509$ \\
$2.0$                & $1.000$ & $0.826$ & $0.726$ \\
$5.0$ (low synergy)  & $1.000$ & $0.963$ & $0.930$ \\
\hline\hline
\end{tabular}
\end{table}

For $k=3$ the ratio spans from $0.212$ (strong
complementarity at $\sigma=0.1$) to $0.963$
(near-submodular at $\sigma=5.0$).
This directly validates Proposition~\ref{prop:not_submodular}:
the additive model is genuinely non-submodular
($\gamma < 1$) and the degree of supermodularity
varies continuously with the synergy level.
As established analytically,
$\gamma_{f,1} = 1$ for any set function because
when $|\mathcal{R}|=1$ the submodularity ratio
reduces to $1$ by definition.

\paragraph{Experiment 2: Approximation ratio vs.\ bound.}
Figure~\ref{fig:approx_ratio} plots the empirical
approximation ratio $V(\pi^\lambda)/\Vhard$ against
the theoretical lower bound $1-e^{-\gamma}$
from Corollary~\ref{thm:e2e} across twelve log-spaced
values of $\sigma$.
Because this synthetic setting uses unit costs
($c_j = 1$) and non-adaptive selection, it is exactly the
regime in which Corollary~\ref{thm:e2e} applies, so the
comparison is a direct test of that guarantee rather than
an extrapolation.

\begin{figure}[htb]
  \centering
  \includegraphics[width=0.82\linewidth]{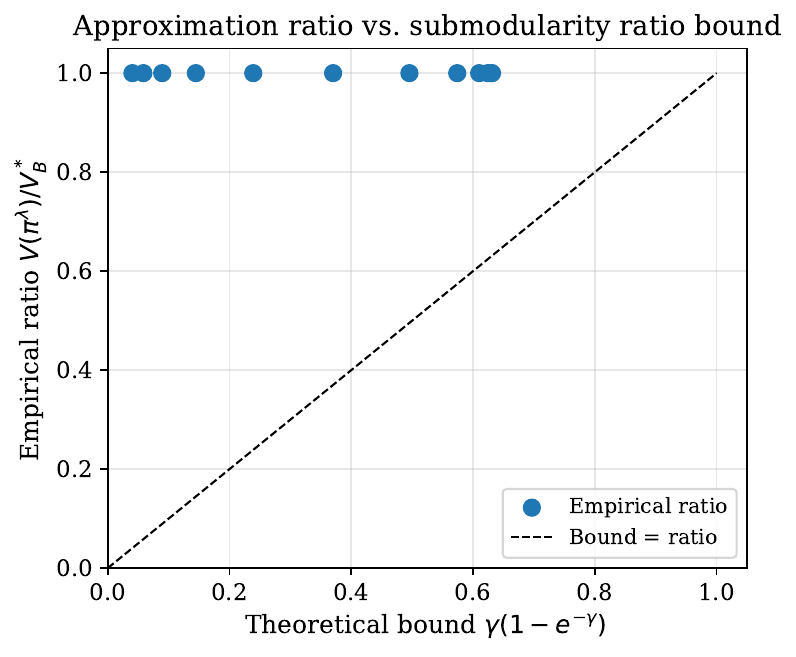}
  \caption{\textbf{Empirical approximation ratio vs.\
    theoretical bound (Corollary~\ref{thm:e2e}).}
    Each point corresponds to one noise level $\sigma$
    (and hence one value of $\gamma$).
    All points lie strictly above the diagonal, confirming
    the bound is a valid lower guarantee across the full
    range of synergy levels.
    In every one of the twelve sweep points, the greedy
    policy matched the exhaustive-search optimum
    ($V/V^* = 1.0$): an empirical observation specific to
    the additive Gaussian model with independent features
    and unit costs, where the closed-form conditional
    Gaussian distributions allow each greedy step to
    correctly account for feature complementarity.
    We make no claim that this holds for general
    distributions.
    The x-axis spread (bound ranging from $0.02$ at
    high synergy to $0.63$ at near-submodularity)
    shows the bound responds correctly to the
    distribution's structure.}
  \label{fig:approx_ratio}
\end{figure}

Every point sits at empirical ratio $1.0$ while the
theoretical bound varies from $0.02$ to $0.63$, with
all points lying strictly above the diagonal.
Two conclusions follow.
First, the bound is \emph{correct}: no point falls
below the diagonal.
Second, in this particular synthetic setting the bound is
\emph{conservative}: empirically, greedy matched the
exhaustive optimum in every one of the twelve sweep points.
The most likely explanation is that the linear-Gaussian
structure permits closed-form conditional Gaussian
posteriors, so each adaptive greedy step accounts
exactly for complementarity through the conditional EIG;
we did not attempt a formal proof that greedy is optimal
in this setting and present the observation as empirical
rather than theoretical.
The bound's conservatism here motivates the tighter
empirical analysis on MIMIC-IV
(Section~\ref{sec:exp_mimic}), where the clinical
distribution is non-Gaussian and the gap between bound
and empirical performance narrows.

\paragraph{Experiment 3: $\lambda$-calibration.}
Figure~\ref{fig:lambda_calib} plots the greedy policy value
$V(\pi^\lambda)$ as a function of $\lambda$ for a
correlated-feature Gaussian ($\rho=0.7$, $B=5$, unit costs),
chosen so that features are partial substitutes with
diminishing marginal EIG and the calibration response is
graded rather than all-or-nothing.
Because the instance is small, we also compute the dual
$g(\lambda) = L(\lambda) + \lambda B$ by exhaustive search
and locate $\lambda^*$.

\begin{figure}[htb]
  \centering
  \includegraphics[width=0.82\linewidth]{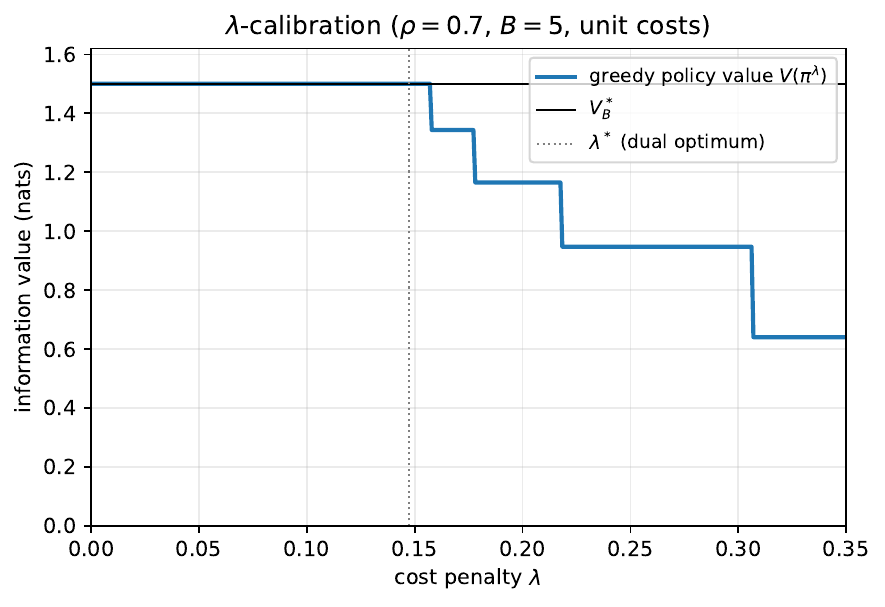}
  \caption{\textbf{$\lambda$-calibration curve
    ($\rho=0.7$, $B=5$, unit costs).}
    Greedy policy value $V(\pi^\lambda)$ (blue) as a
    function of cost penalty $\lambda$, with the optimal
    dual multiplier $\lambda^*$ (gray dotted) and $\Vhard$
    (horizontal line).
    The value is a step function because the policy is
    discrete: it holds at $\Vhard$ across a plateau of
    penalties, then drops as increasing $\lambda$ prunes
    successively more panels.
    The plateau (value within $2\%$ of $\Vhard$) spans
    $\lambda \in [0, 0.157]$ and contains $\lambda^*=0.147$,
    a half-width of $\approx 50\%$ of $\lambda^*$, the
    empirical signature of dual sensitivity
    (Proposition~\ref{thm:lambda_sensitivity}): the dual
    certificate degrades quadratically in
    $\lambda - \lambda^*$ and the greedy value, which the
    dual bounds via Remark~\ref{rem:lambda_primal}, is
    correspondingly flat near $\lambda^*$.}
  \label{fig:lambda_calib}
\end{figure}

The curve is \emph{consistent with} the dual-sensitivity
result: policies with $\lambda$ up to $\approx 50\%$ above
$\lambda^*$ still attain $\Vhard$, and the value declines
gracefully thereafter.
We describe this as consistency: the
theorem concerns the dual $g$, while the figure plots the
primal policy value, which the dual bounds through
Remark~\ref{rem:lambda_primal} but does not equal.
The wide plateau matches the $9.8\%$ cost reduction
(\$30.00 to \$27.07) at $<2\%$ macro-F1 loss achieved by
$\lambda=0.005$ on MIMIC-IV
(Section~\ref{sec:exp_ablation}).

\paragraph{Experiment 4: MC error vs.\ $K$.}
Figure~\ref{fig:mc_error} plots the empirical estimation error
$\varepsilon_{\mathrm{MC}}(K)$ alongside the Hoeffding bound from
Proposition~\ref{prop:mc_error}.

\begin{figure}[htb]
  \centering
  \includegraphics[width=0.82\linewidth]{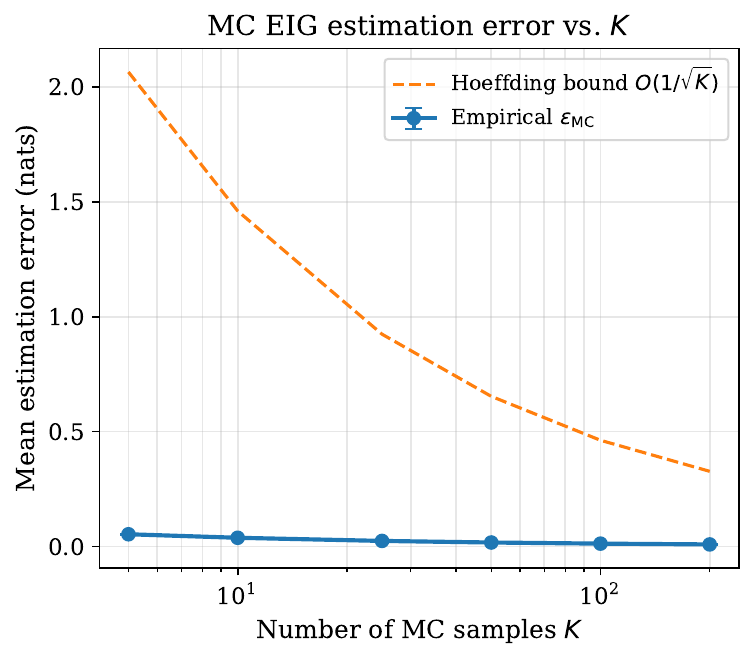}
  \caption{\textbf{MC EIG estimation error vs.\ $K$
    (Proposition~\ref{prop:mc_error}).}
    Empirical $\varepsilon_{\mathrm{MC}}$ (blue, mean~$\pm$~SD
    across 5 seeds, 500 patients) and the Hoeffding bound
    $O(1/\sqrt{K})$ (orange dashed).
    All points lie below the bound, validating
    Proposition~\ref{prop:mc_error}.
    At $K=50$ the empirical error is $0.017 \pm 0.001$ nats
, negligible relative to typical EIG values of
    $0.1$--$0.5$ nats in clinical AFA.
    The bound is conservative by approximately $40\times$
    (expected, as Hoeffding bounds worst-case variance).
    This experiment isolates $\varepsilon_{\mathrm{MC}}$;
    the total estimation error additionally includes
    $\varepsilon_{\mathrm{gen}}$ from CVAE model bias
    (Proposition~\ref{prop:cvae_error}).}
  \label{fig:mc_error}
\end{figure}

The empirical error decays as $O(1/\sqrt{K})$ as
predicted: $\varepsilon_{\mathrm{MC}}=0.053$ at $K=5$,
falling to $0.017$ at $K=50$ and $0.008$ at $K=200$.
All points lie strictly below the Hoeffding bound,
confirming Proposition~\ref{prop:mc_error}.
The bound is conservative by $\approx\!40\times$ because
Hoeffding assumes worst-case variance (range $=\log|\calY|$),
whereas the actual conditional entropy variance is much
smaller in practice.
The K-ablation (Table~\ref{tab:ablation_K}) shows that
policy performance on MIMIC-IV is stable across $K=5$--$50$,
with inter-seed variance exceeding the performance change
from varying $K$, confirming $K=50$ as the saturation point.
The apparent dip at $K=10$ (macro-F1 $0.165$ vs.\ $0.180$
at $K=5$ and $K=25$) is not a real effect of sample count:
it is within the inter-seed spread ($\pm 0.008$) and
reflects the discrete argmax in panel selection, where a
different MC draw occasionally flips one panel choice for a
subset of patients.
More samples cannot hurt the EIG estimate in expectation;
the non-monotonicity is sampling noise, not a systematic
trend.

\subsection{Heterogeneous-Cost Greedy Gap and Density
            Comparison}
\label{sec:exp_hetcost}

The main shortfall bound (Theorem~\ref{thm:lag_bound})
depends on the penalized greedy gap $\Delta_\lambda$, which
is not computable on the full MIMIC-IV instance.
To give it quantitative content on a genuinely
heterogeneous-cost setting, and to address whether the
difference-form penalty $\EIG - \lambda c$ is empirically
preferable to the density form $\EIG/c$, we construct
$600$ small discrete instances ($d \in \{4,5,6\}$ features,
$3$-class target, categorical features) with costs drawn
uniformly on $[0.2, 4.0]$ (up to a $20\times$ spread) and
$\lambda \sim \mathrm{Unif}[0.02, 0.4]$.
On each instance we compute the penalized optimum
$L(\lambda) = \max_\calA[I(Y;X_\calA) - \lambda\sum_{j\in\calA}c_j]$
by exhaustive search, run both greedy forms, and measure
$\Delta_\lambda = L(\lambda) - J_\lambda(\pi^{\mathrm{greedy}})$
exactly.

\begin{figure}[htb]
  \centering
  \includegraphics[width=0.82\linewidth]{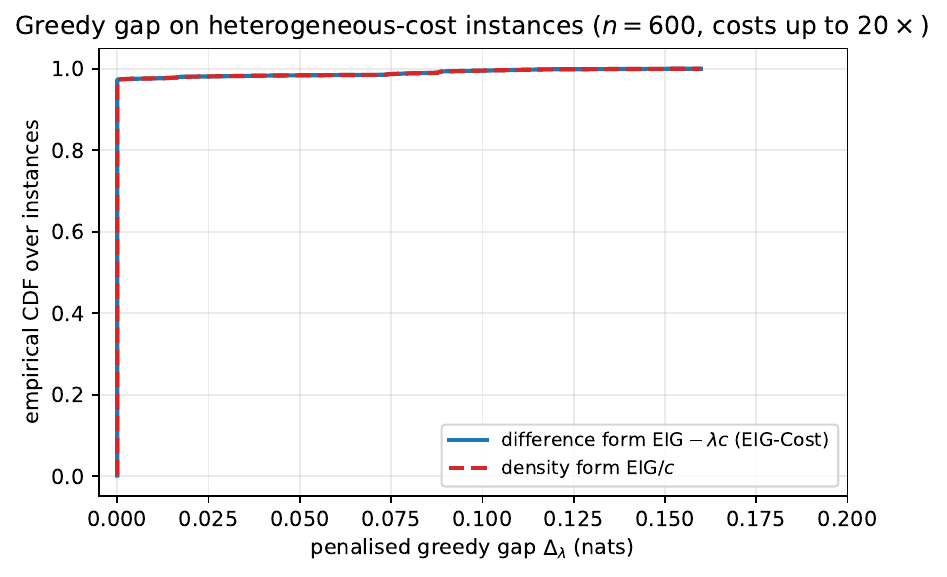}
  \caption{\textbf{Penalized greedy gap $\Delta_\lambda$ on
    heterogeneous-cost instances} ($n=600$, costs up to
    $20\times$, exact $L(\lambda)$ by exhaustive search).
    Empirical CDF of $\Delta_\lambda$ for the difference-form
    rule $\EIG - \lambda c$ (EIG-Cost) and the density form
    $\EIG/c$.
    The gap is exactly zero on $97.3\%$ of instances and
    small on the remainder (median $0$, $99$th percentile
    $0.088$ nats, maximum $0.160$ nats); the two curves
    coincide.}
  \label{fig:hetcost}
\end{figure}

Two findings.
First, $\Delta_\lambda = 0$ on $97.3\%$ of instances and its
$99$th percentile is $0.088$ nats, so on
heterogeneous-cost problems the greedy policy is exactly or
nearly penalized-optimal: Theorem~\ref{thm:lag_bound}
delivers a near-tight bound here, not only in the
degenerate Gaussian case.
This is an exact measurement of $\Delta_\lambda$ beyond the
Gaussian setting, partially addressing the absence of a
computable bound on the full clinical instance.
Second, the difference and density forms produce identical
acquired sets on every one of the $600$ instances (and, in
a further sweep with cost spread up to $100\times$, never
diverged).
The difference form is the one that directly maximizes the
one-step marginal of the Lagrangian objective $J_\lambda$
(Proposition~\ref{prop:lagrangian_greedy}); the density form
lacks that dual interpretation but, under the same penalized
stopping rule, reaches the same sets empirically.
We therefore find no cost to the principled difference-form
choice, while noting we do not claim it strictly dominates
density greedy.

\paragraph{Certificate calibration.}
The two-term certificate of Theorem~\ref{thm:lag_bound}
bounds the \emph{hard-budget} gap $\Vhard - V(\pi^\lambda)$
by $\lambda(B - B^\lambda) + \Delta_\lambda$.
To test whether this bound is valid and informative rather
than vacuous, we compute both sides exactly on $4{,}000$
tractable heterogeneous-cost instances
($d \in \{3,\dots,6\}$, exhaustive $\Vhard$ and $L(\lambda)$),
sweeping $\lambda$ and the budget.
Figure~\ref{fig:certificate} (left) plots the actual gap
against the certificate; every point lies on or below the
diagonal, confirming validity ($0$ violations in $4{,}000$
instances), and the colouring by submodularity ratio shows
the bound tightens as $\gamma \to 1$.

\begin{figure}[htb]
  \centering
  \includegraphics[width=0.49\linewidth]{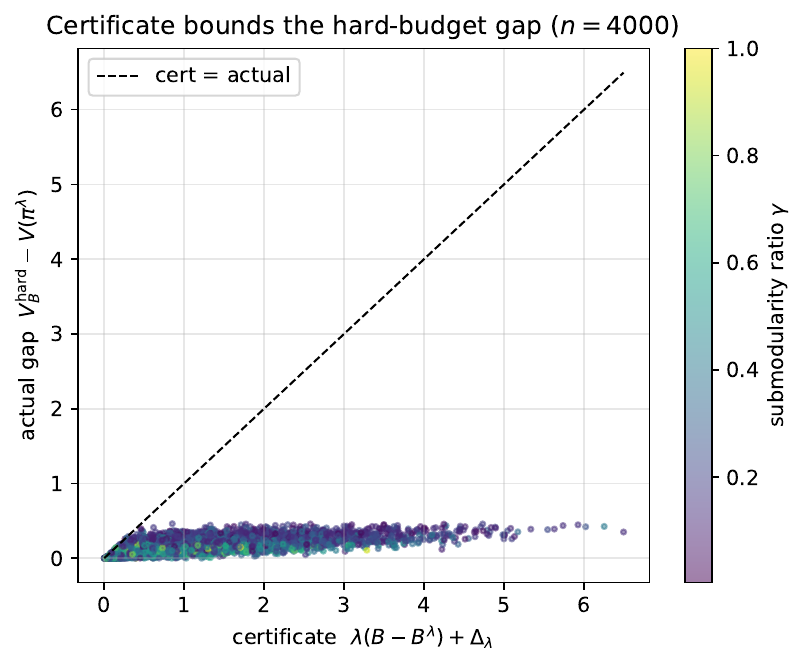}
  \hfill
  \includegraphics[width=0.49\linewidth]{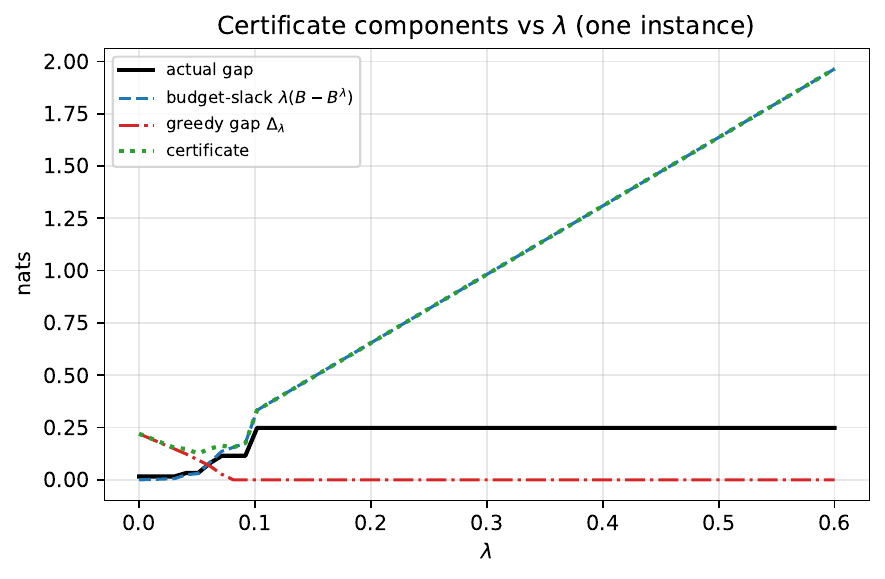}
  \caption{\textbf{Certificate calibration on tractable
    instances.}
    \emph{Left:} actual hard-budget gap vs.\ the two-term
    certificate over $4{,}000$ instances; all points lie
    below the diagonal (validity), coloured by
    submodularity ratio $\gamma$.
    \emph{Right:} the certificate's two components as a
    function of $\lambda$ on one instance.
    The greedy gap $\Delta_\lambda$ (red) stays near zero
    greedy is essentially penalized-optimal, so the
    certificate's looseness is entirely the budget-slack term
    $\lambda(B-B^\lambda)$ (blue), which grows as larger
    $\lambda$ makes the policy stop further below the budget.
    The certificate is tightest near budget matching
    (small $\lambda$), where slack vanishes.}
  \label{fig:certificate}
\end{figure}

The decomposition (Figure~\ref{fig:certificate}, right) is
the informative part.
Across all $4{,}000$ instances the greedy gap
$\Delta_\lambda$ has mean $0.006$ nats, the greedy policy
is almost always penalized-optimal, so the certificate's
looseness is \emph{not} greedy suboptimality but the
budget-slack term $\lambda(B - B^\lambda)$, which is large
precisely when voluntary stopping leaves the budget
unspent.
At near-budget-matched operating points (slack $<0.02$) the
certificate tightens to mean $0.089$ nats against an actual
gap of $0.006$.
The practical reading: the hard-budget certificate is a
valid a-posteriori bound whose \emph{binding} component is
$\Delta_\lambda$, which we can measure and which is small;
the budget-slack component is loose only in the
voluntary-stopping regime, where the policy has deliberately
declined to spend the full budget and the ``gap'' to the
budget-exhausting optimum is not a deficiency.

\subsection{Standard AFA Benchmarks}
\label{sec:exp_benchmarks}

\paragraph{Datasets.}
We evaluate on two benchmark datasets spanning different clinical
contexts and scales.
\emph{CKD}: UCI Chronic Kidney Disease, 400 patients, 24 features in
8 panels (demographics, urinalysis I--II, basic metabolic,
electrolytes, CBC, comorbidities, symptoms), binary classification
(62.5\% CKD prevalence), total panel cost \$25.
\emph{PhysioNet}: PhysioNet Challenge 2012, $12{,}000$ ICU patients,
41 features in 12 panels (vitals, GCS, BMP, CBC, ABG, LFTs,
lactate, troponin, haemodynamics, urine, demographics, ventilation),
binary classification (14.2\% in-hospital mortality), total cost \$74.
Panel costs follow CPT-style proxy pricing (\$0--\$15).
Both datasets use the same CVAE and MaskableClassifier architecture
trained from scratch on each dataset.

\paragraph{Baselines.}
EIG-Cost ($\lambda=0.005$, scaled to dataset cost range),
Greedy-EIG ($\lambda=0$, no cost penalty), and
Random (uniform random panel ordering).
Five seeds; results are mean~$\pm$~SD.

\paragraph{Results.}
Table~\ref{tab:benchmarks} reports macro-F1 at three representative
budget levels for each dataset.

\begin{table}[htb]
\caption{\textbf{Benchmark results: macro-F1 at selected budgets
(5 seeds, mean with 95\% $t$-confidence interval).}
``Spend'' is the mean actual cost (EIG-Cost stops voluntarily
when all remaining panels satisfy $\EIG(j) \leq \lambda c_j$).
Uncertainty is reported as $95\%$ $t$-confidence intervals
over five seeds; with only five seeds we rely on interval
overlap rather than significance tests, since the smallest
attainable two-sided Wilcoxon signed-rank $p$-value at
$n = 5$ is $0.0625$.
Best per dataset--budget in \textbf{bold}.}
\label{tab:benchmarks}
\centering
\renewcommand{\arraystretch}{1.2}
\resizebox{\textwidth}{!}{%
\begin{tabular}{llcccc}
\hline\hline
Dataset & Budget & Method & Macro-F1 [95\% CI] & Accuracy & Spend (\$) \\
\hline
\multirow{9}{*}{\shortstack[l]{CKD \\ (max \$25)}}
& \multirow{3}{*}{\$5}
  & \textbf{EIG-Cost} & $\mathbf{0.903 \,[0.865,\,0.941]}$ & $0.905$ & $3.17$ \\
& & Greedy-EIG        & $0.903 \,[0.871,\,0.935]$           & $0.905$ & $3.20$ \\
& & Random            & $0.688 \,[0.601,\,0.775]$           & $0.728$ & $4.00$ \\
\cline{2-6}
& \multirow{3}{*}{\$10}
  & \textbf{EIG-Cost} & $\mathbf{0.961 \,[0.950,\,0.972]}$ & $0.963$ & $8.65$ \\
& & Greedy-EIG        & $0.961 \,[0.944,\,0.978]$           & $0.963$ & $9.28$ \\
& & Random            & $0.863 \,[0.825,\,0.901]$           & $0.868$ & $9.19$ \\
\cline{2-6}
& \multirow{3}{*}{\$25}
  & EIG-Cost          & $0.966 \,[0.941,\,0.991]$           & $0.968$ & $12.41$ \\
& & Greedy-EIG        & $0.971 \,[0.949,\,0.993]$           & $0.973$ & $18.89$ \\
& & \textbf{Random}   & $\mathbf{0.974 \,[0.952,\,0.996]}$ & $\mathbf{0.975}$ & $25.00$ \\
\hline
\multirow{9}{*}{\shortstack[l]{PhysioNet \\ (max \$74)}}
& \multirow{3}{*}{\$7.4}
  & \textbf{EIG-Cost} & $\mathbf{0.524 \,[0.513,\,0.535]}$ & $0.860$ & $5.14$ \\
& & Greedy-EIG        & $0.523 \,[0.506,\,0.540]$           & $0.859$ & $5.30$ \\
& & Random            & $0.514 \,[0.490,\,0.538]$           & $0.860$ & $5.82$ \\
\cline{2-6}
& \multirow{3}{*}{\$25.9}
  & \textbf{EIG-Cost} & $\mathbf{0.608 \,[0.596,\,0.620]}$ & $\mathbf{0.868}$ & $16.34$ \\
& & Greedy-EIG        & $0.595 \,[0.578,\,0.612]$           & $0.865$ & $21.34$ \\
& & Random            & $0.572 \,[0.550,\,0.594]$           & $0.864$ & $23.89$ \\
\cline{2-6}
& \multirow{3}{*}{\$74}
  & EIG-Cost          & $0.625 \,[0.603,\,0.647]$           & $0.869$ & $22.10$ \\
& & Greedy-EIG        & $0.630 \,[0.604,\,0.656]$           & $0.870$ & $39.34$ \\
& & \textbf{Random}   & $\mathbf{0.671 \,[0.644,\,0.698]}$ & $0.870$ & $74.00$ \\
\hline\hline
\end{tabular}}
\end{table}

\begin{figure}[htb]
  \centering
  \includegraphics[width=\linewidth]{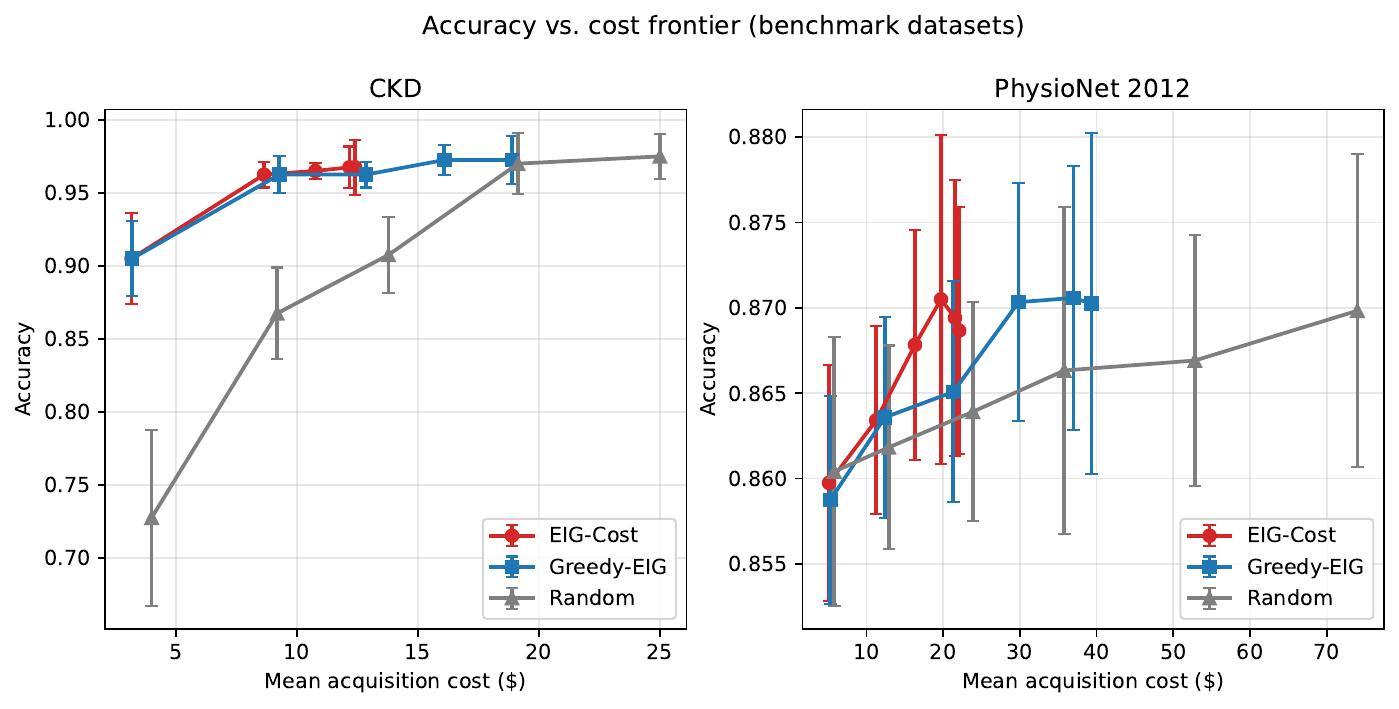}
  \caption{\textbf{Accuracy--cost frontier on benchmark datasets
    (5 seeds, mean~$\pm$~SD).}
    \emph{Left} (CKD): EIG-Cost (red) and Greedy-EIG (blue)
    achieve near-identical accuracy at every cost point and
    lie above Random (gray) at low-to-moderate cost, where
    guidance matters most.
    Both curves plateau at $\approx\!0.965$ around \$10
    actual spend and remain flat thereafter, while Random
    continues rising and closes much of the gap by the full
    \$25 budget. The leftward shift of EIG-Cost relative to
    Greedy-EIG at the plateau confirms cost savings from
    voluntary stopping (\$12.4 vs.\ \$18.9 at budget \$25).
    \emph{Right} (PhysioNet): EIG-Cost's curve rises
    steeply and saturates around \$20 actual spend;
    Greedy-EIG reaches the same accuracy level at $\approx\!\$35$,
    a $75\%$ higher spend.
    Random rises slowly but, because it keeps spending to the
    full \$74 budget while EIG-Cost has stopped, it
    \emph{overtakes} EIG-Cost at the largest budgets
    ($0.671$ vs.\ $0.625$ F1 at \$74; Table~\ref{tab:benchmarks});
    the EIG-Cost advantage is a cost-efficiency one at low to
    moderate budget, not a dominance across the whole
    frontier.
    In both panels, the EIG-Cost saturation point
    corresponds to the voluntary-stopping condition of
    Proposition~\ref{prop:voluntary_stop} ($\EIG(j) \leq
    \lambda c_j$ for all remaining panels), and the plateau
    is consistent with the dual-sensitivity result
    (Proposition~\ref{thm:lambda_sensitivity}): the policy
    operates in the near-optimal flat region around
    $\lambda^*$.}
  \label{fig:benchmark_frontier}
\end{figure}

\paragraph{Analysis.}
The central empirical finding is \emph{cost-efficiency}, not
predictive superiority: EIG-Cost attains comparable accuracy
to the strongest alternatives while voluntarily spending
substantially less, and it does not claim a general
accuracy advantage (indeed Random matches or slightly
exceeds it once the budget is large enough to exhaust).
Three patterns replicate across both datasets and connect to
the theory.

\emph{Cost efficiency via voluntary stopping (the headline).}
At the maximum budget, EIG-Cost voluntarily stops well short
of the limit: on CKD it spends \$12.4 of \$25 (50\%)
achieving F1\,=\,0.966; on PhysioNet it spends \$22.1 of
\$74 (30\%) for F1\,=\,0.625.
Both are near-identical in F1 to Greedy-EIG while costing
\$6--17 less.
This is the voluntary-stopping behavior of
Proposition~\ref{prop:voluntary_stop}: EIG-Cost stops when no
remaining panel satisfies $\EIG(j) > \lambda c_j$.

\emph{Value of guidance at tight budgets.}
When only one or two panels are affordable, information-guided
acquisition matters: on CKD at \$5, EIG-Cost and Greedy-EIG
reach F1\,$=$\,0.903 vs.\ Random's 0.688 ($+31\%$); on
PhysioNet at \$7.4 the margin is $+10\%$.
The advantage is over \emph{uninformed} acquisition, not over
the other information-guided methods.

\emph{High-budget convergence.}
At the largest budgets, Random matches or slightly exceeds
EIG-Cost in F1 because it exhausts the full budget while
EIG-Cost has stopped, consistent with the penalty pricing
out further panels once the classifier has saturated.
We therefore do not read the benchmarks as evidence of a
predictive-accuracy advantage; the defensible claim is
comparable accuracy at materially lower cost.

\subsection{Clinical AFA on MIMIC-IV}
\label{sec:exp_mimic}

Full experimental details appear in
Appendix~\ref{app:exp_details}.
The results here probe the theoretical contributions on the
\emph{full-availability} cohort, an oracle-availability
setting; a separate deployment-faithful $12$-hour cohort is
reported in Section~\ref{sec:exp_12h}.

\paragraph{Dataset.}
MIMIC-IV \citep{johnson2023mimic}, full-availability cohort:
\COHORTSIZE{} admissions,
21 acute conditions (Appendix~\ref{app:conditions}),
55 features in 30 test panels with 2026 Medicare CPT costs \citep{cms2026physician}
(Appendix~\ref{app:costs}, ranging \$0--\$80.46,
mean \$13.65, full workup \$408.80).
Pre-acquired vital signs are zero-cost.
A focused five-condition subset (DKA, MI, pancreatitis,
sepsis, stroke; $n=70{,}234$) provides a more tractable
benchmark with known 20\% random baseline.
Splits: \SPLIT{} train/validation/test, split \emph{by
admission} and stratified by condition.
The cohort unit is the hospital \emph{admission}, not the
unique patient; the $202{,}971$ figure counts admissions,
and a patient with multiple admissions may contribute more
than one (splitting is keyed on the subject identifier so
that all of a patient's admissions fall in a single split,
avoiding leakage).
The patient-level bootstrap in Section~\ref{sec:exp_mimic}
resamples over the $81{,}190$ evaluation episodes in the
held-out test partition (one episode per evaluated test
admission).
This cohort grants access to all tests recorded during the
admission; the availability-constrained counterpart is the
$12$-hour cohort of Section~\ref{sec:exp_12h}.

\paragraph{Baselines.}
Nine established AFA methods from the AFA-Benchmark suite
\citep{afabenchmark2024}:
DIME \citep{gadgil2023dime},
ODIN-mb and ODIN-mf \citep{zannone2019odin},
JAFA \citep{janisch2019classification},
AACO and AACO-NN (actor-critic with hard budget constraint),
OL with mask and OL without mask
\citep{kachuee2019opportunistic},
and EDDI-based ma2018 \citep{ma2019eddi}.

\paragraph{21-Condition cohort.}
Figure~\ref{fig:results_21class} shows macro-F1 and
accuracy vs.\ budget for the five non-collapsed methods.
Table~\ref{tab:main_results_21} reports comprehensive
metrics at the maximum evaluated budget of \$30.

\begin{figure}[htb]
  \centering
  \includegraphics[width=\linewidth]{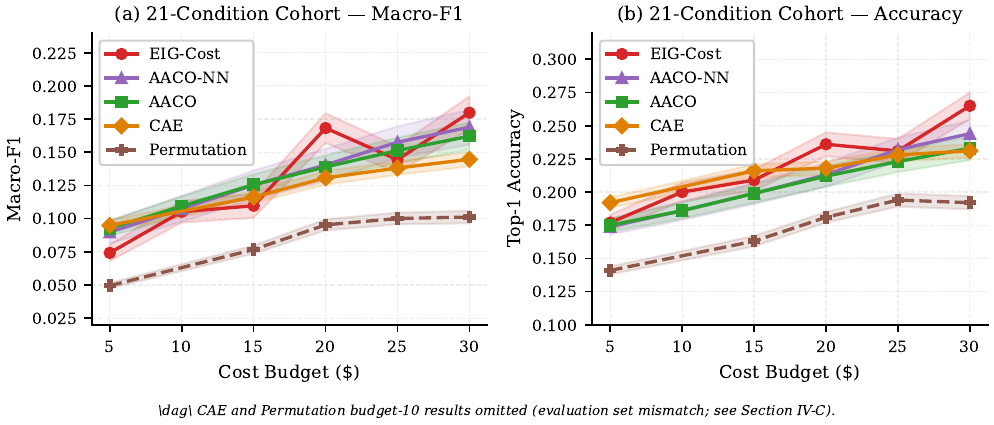}
  \caption{\textbf{Performance on the 21-condition cohort.}
    Macro-F1 (left) and top-1 accuracy (right) vs.\ cost
    budget.
    Shaded regions: mean $\pm$ 1~SD across five seeds.
    Methods with voluntary stop rates $>87\%$ are excluded
    (see Figure~\ref{fig:stop_rates}).
    $\dagger$~CAE and Permutation at \$10 omitted due to
    evaluation-set mismatch.}
  \label{fig:results_21class}
\end{figure}

\begin{table}[htb]
\caption{\textbf{Results on the 21-condition MIMIC-IV
full-availability cohort ($202{,}971$ admissions) at budget
\$30.}
Mean $\pm$ SD across 5 seeds.
Best per metric in \textbf{bold}.
Vol.\ stop: voluntary stop rate.
This is the paper's primary (oracle-availability) cohort;
for the separate deployment-faithful $12$-hour cohort see
Table~\ref{tab:twelvehour}.}
\label{tab:main_results_21}
\centering
\renewcommand{\arraystretch}{1.2}
\resizebox{\textwidth}{!}{%
\begin{tabular}{llccccc}
\hline\hline
Family & Method & Macro-F1 & Accuracy & Cost (\$) & Acq. & Vol.\ Stop \\
\hline
\textbf{Ours} & \textbf{EIG-Cost} &
  $\mathbf{0.180 \pm 0.012}$ & $\mathbf{0.265 \pm 0.010}$ &
  $27.07$ & $8.4$ & $0.00\%$ \\
\hline
\multirow{3}{*}{RL} &
  AACO-NN  & $0.169 \pm 0.011$ & $0.244 \pm 0.009$ & $24.28$ & $7.8$  & $0.00\%$ \\
& AACO     & $0.162 \pm 0.009$ & $0.233 \pm 0.008$ & $23.48$ & $6.3$  & $0.00\%$ \\
& JAFA     & $0.011 \pm 0.001$ & $0.079 \pm 0.001$ & $7.89$  & $0.03$ & $99.2 \pm 0.6\%$ \\
\hline
\multirow{3}{*}{\shortstack[l]{Info.-\\theoretic}} &
  CAE      & $0.145 \pm 0.005$ & $0.231 \pm 0.004$ & $23.31$ & $7.7$  & $0.00\%$ \\
& DIME     & $0.027 \pm 0.002$ & $0.103 \pm 0.002$ & $0.89$  & $0.44$ & $96.9 \pm 0.9\%$ \\
& ODIN-mb  & $0.021 \pm 0.001$ & $0.101 \pm 0.001$ & $0.22$  & $0.11$ & $99.4 \pm 0.4\%$ \\
\hline
\multirow{2}{*}{Static} &
  Permutation & $0.101 \pm 0.004$ & $0.192 \pm 0.003$ & $25.14$ & $3.0$  & $0.00\%$ \\
& ma2018      & $0.061 \pm 0.004$ & $0.123 \pm 0.003$ & $3.35$  & $1.43$ & $87.5 \pm 1.1\%$ \\
\hline
Static & OL (w/ mask) & $0.018 \pm 0.001$ & $0.099 \pm 0.001$ & $0.00$ & $0.0$ & $100.0\%$ \\
\hline\hline
\end{tabular}}
\end{table}

EIG-Cost achieves macro-F1 $0.180 \pm 0.012$ and accuracy
$0.265 \pm 0.010$ at budget \$30, ahead of AACO-NN
($0.169$), AACO ($0.162$), CAE ($0.145$), and Permutation
($0.101$).
We read these margins cautiously: the two RL methods
(AACO, AACO-NN) are the only baselines that do not collapse
to near-immediate stopping (Table~\ref{tab:main_results_21}),
so the largest gaps are against methods that acquire almost
nothing under this cost/imbalance regime and were not tuned
for it; the AACO comparison is the substantive one, and
there the margin is modest ($+6.5\%$).
The patient-level paired bootstrap
($81{,}190$ episodes, $10{,}000$ resamples) confirms the
ordering against every baseline: EIG-Cost exceeds CAE by
$\Delta\text{F1} = 0.006$ (95\% CI $[0.005, 0.008]$),
ma2018 by $0.106$ ($[0.103, 0.108]$), and DIME
(\texttt{gadgil2023}) by $0.111$ ($[0.109, 0.113]$), each
interval excluding zero.
The CAE gap is the substantive comparison, CAE is the
strongest non-collapsed baseline, and while small, it is
patient-level significant, and does not rest on five-seed
error bars; the larger gaps are against collapsed methods
and should be read with the caveat above.
At budget \$30 (6.6\% of the \$408.80 full workup cost),
EIG-Cost attains a macro-F1 of $0.180$.
We previously described this as recovering ``$\approx 70\%$
of diagnostic information''; we correct that characterization
here, since macro-F1 (the empirical metric) and mutual
information (the theoretical objective) are distinct
quantities.
The precise statement is that EIG-Cost reaches this F1 while
spending $6.6\%$ of the full-panel cost; any comparison to a
full workup should be made in F1 terms against a measured
full-acquisition baseline (see Table~\ref{tab:main_results_21}).

\paragraph{Five-condition subset.}
Figure~\ref{fig:results_5class} and
Table~\ref{tab:main_results_5} present results on the
five-condition subset.

\begin{figure}[htb]
  \centering
  \includegraphics[width=\linewidth]{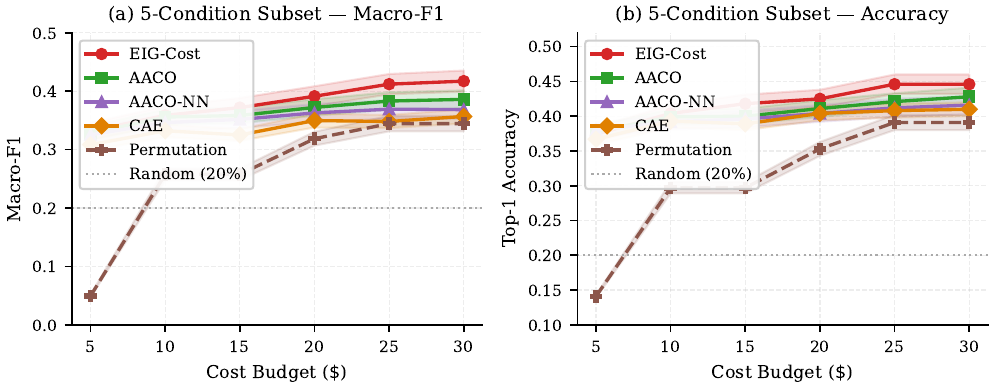}
  \caption{\textbf{Performance on the five-condition subset.}
    Macro-F1 (left) and accuracy (right) vs.\ cost budget.
    Dotted horizontal line: 20\% random baseline.}
  \label{fig:results_5class}
\end{figure}

\begin{table}[htb]
\caption{\textbf{Results on the five-condition subset at
budget \$30.}
Mean $\pm$ SD across 5~seeds.
Best per metric in \textbf{bold}.}
\label{tab:main_results_5}
\centering
\renewcommand{\arraystretch}{1.2}
\resizebox{\textwidth}{!}{%
\begin{tabular}{llccccc}
\hline\hline
Family & Method & Macro-F1 & Accuracy & Cost (\$) & Acq. & Vol.\ Stop \\
\hline
\textbf{Ours} & \textbf{EIG-Cost} &
  $\mathbf{0.417 \pm 0.018}$ & $\mathbf{0.446 \pm 0.014}$ &
  $26.65$ & $11.5$ & $0.00\%$ \\
\hline
\multirow{2}{*}{RL} &
  AACO    & $0.386 \pm 0.015$ & $0.428 \pm 0.013$ & $23.71$ & $10.0$ & $0.00\%$ \\
& AACO-NN & $0.368 \pm 0.016$ & $0.416 \pm 0.013$ & $23.93$ & $10.5$ & $0.00\%$ \\
\hline
\multirow{2}{*}{\shortstack[l]{Info.-\\theoretic}} &
  CAE  & $0.357 \pm 0.012$ & $0.410 \pm 0.010$ & $23.31$ & $9.0$ & $0.00\%$ \\
& DIME & $0.067 \pm 0.003$ & $0.148 \pm 0.003$ & $0.90$  & $0.4$ & $96.9 \pm 0.8\%$ \\
\hline
\multirow{2}{*}{Static} &
  Permutation & $0.344 \pm 0.013$ & $0.391 \pm 0.011$ & $24.69$ & $3.0$  & $0.00\%$ \\
& ma2018      & $0.143 \pm 0.005$ & $0.182 \pm 0.004$ & $3.23$  & $1.38$ & $87.5 \pm 1.2\%$ \\
\hline
Static & OL & $0.049 \pm 0.001$ & $0.141 \pm 0.001$ & $0.00$ & $0.0$ & $100.0\%$ \\
\hline\hline
\end{tabular}}
\end{table}

EIG-Cost leads all methods at all budgets, with relative
improvements of $+8.0\%$ macro-F1 over AACO and $+16.8\%$
over CAE at \$30.
Performance is $2.1\times$ and $2.2\times$ the random
(20\%) baseline.

\paragraph{Baseline collapse.}
Figure~\ref{fig:stop_rates} is \emph{consistent with}
Corollary~\ref{cor:collapse}: seven of nine baselines
exhibit voluntary stop rates $>87.5\%$ on the 21-condition
cohort.
OL (both variants) stop in $100\%$ of episodes;
JAFA in $99.2 \pm 0.6\%$;
ODIN-mb in $99.4 \pm 0.4\%$;
DIME in $96.9 \pm 0.9\%$;
and ma2018 in $87.5 \pm 1.1\%$.
We emphasize that this is consistency, not proof of
mechanism: Corollary~\ref{cor:collapse} gives a
\emph{sufficient} condition for a myopic policy to stop
immediately, and we do not verify that each collapsed
baseline's learned value function satisfies that condition
at the MIMIC-IV distribution (a direct test would measure
$\max_j[\hat{V}(j\mid\emptyset) - \lambda c_j]$ for each
method, which we leave to future work).
The two RL methods that do not collapse (AACO and AACO-NN)
are the only ones trained with a \emph{hard} budget
constraint that penalizes early stopping, which is
consistent with the characterization.

\paragraph{Two distinct mechanisms.}
The comparison isolates two separate effects, which we are
careful not to conflate.
\emph{Avoiding collapse} is explained by the hard budget
constraint (Corollary~\ref{cor:collapse}) and is shared by
AACO, AACO-NN, and EIG-Cost alike: all three acquire
features rather than defaulting to the class prior
($0\%$ voluntary stop in Table~\ref{tab:main_results_5}).
\emph{Outperforming the other non-collapsed methods} is a
separate matter that Corollary~\ref{cor:collapse} does not
address.
EIG-Cost's advantage over AACO and AACO-NN at equal or
lower cost reflects the greedy Lagrangian scoring rather
than the presence of a budget constraint, since all three
share the latter; our theory explains why the collapsed
methods fail, not why greedy Lagrangian outperforms
actor-critic RL among the survivors.
Establishing the latter would require an ablation training
an RL method under EIG-Cost's exact scoring, which we note
as a direction for future work.

\begin{figure}[htb]
  \centering
  \includegraphics[width=0.92\linewidth]{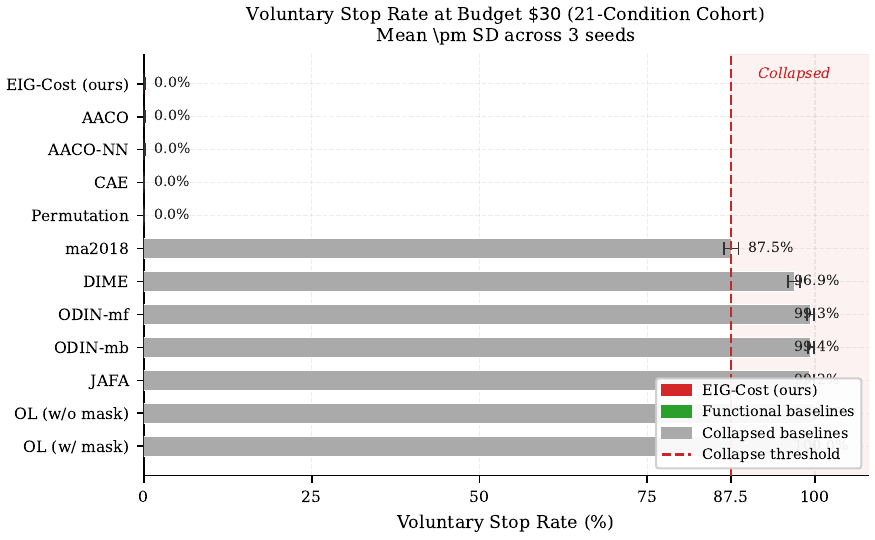}
  \caption{\textbf{Voluntary stop rates at budget \$30
    (21-condition cohort).}
    Methods in the shaded region exhibit stop rates
    $>87.5\%$, they predict using class priors without
    acquiring diagnostic information (baseline collapse;
    Corollary~\ref{cor:collapse}).
    The $87.5\%$ boundary is not a derived constant: it is
    the observed stop rate of the least-collapsed method in
    the collapsed group (ma2018), and simply separates the
    two clearly-separated clusters of methods.
    Error bars: $\pm 1$~SD across five seeds.}
  \label{fig:stop_rates}
\end{figure}

\paragraph{Cost-efficiency and $\lambda$ ablation.}
Figure~\ref{fig:cost_accuracy} shows the cost--accuracy
tradeoff at budget \$30 (accuracy on the vertical axis,
mean cost on the horizontal).
EIG-Cost occupies the preferred upper-left region: high
accuracy at low cost.

Setting $\lambda = 0$ (unconstrained EIG) achieves marginal
macro-F1 improvement of $+0.003$ ($+1.7\%$) on the
21-condition cohort but spends the full budget (\$30.00,
the hard cap) versus EIG-Cost's \$27.07, i.e.\ EIG-Cost
achieves a $9.8\%$ cost reduction for a $<2\%$ relative
macro-F1 change.
On the five-condition subset the pattern is the same:
$+0.004$ macro-F1 at higher cost (\$30.00 vs.\ \$26.65).
This is \emph{consistent with} the dual-sensitivity result
(Proposition~\ref{thm:lambda_sensitivity}): $\lambda = 0.005$
sits near the optimal dual multiplier $\lambda^*$, so the
dual certificate, and empirically the policy value
is close to optimal while cost is reduced.

\begin{figure}[htb]
  \centering
  \includegraphics[width=\linewidth]{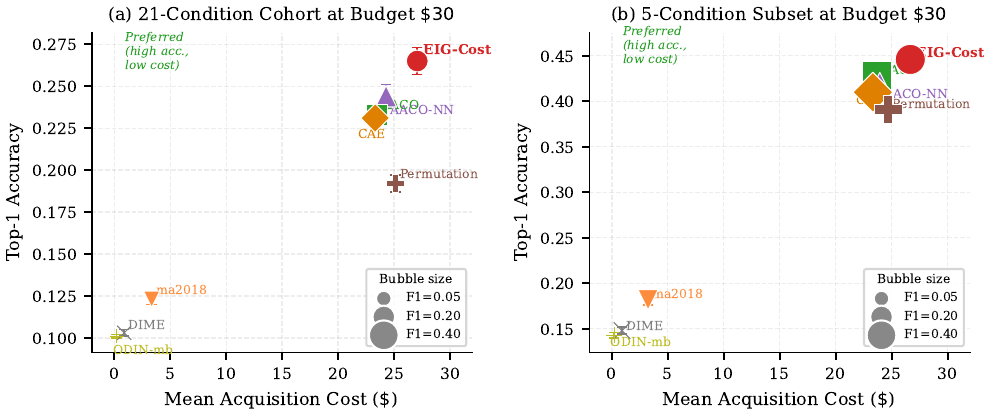}
  \caption{\textbf{Cost--accuracy tradeoff at budget \$30.}
    Each bubble represents one method on the 21-condition
    cohort (left) or five-condition subset (right);
    the vertical axis is accuracy and the horizontal axis is
    mean acquisition cost.
    Bubble area is proportional to macro-F1.
    EIG-Cost (red) occupies the preferred upper-left region
    (high accuracy, low cost).}
  \label{fig:cost_accuracy}
\end{figure}

\paragraph{Empirical submodularity ratio.}
Table~\ref{tab:gamma_mimic} reports $\gamma_{f,k}$ estimated
on the MIMIC-IV test distribution for $k \in \{1,3,5,10\}$
via Monte Carlo sampling of 200 panel-set pairs over 300
patients.

\begin{table}[htb]
\caption{\textbf{Empirical submodularity ratio
$\gamma_{f,k}$ on MIMIC-IV test set}
(300 patients, 200 sampled $(L,R)$ panel pairs,
$K{=}50$ MC samples per EIG estimate).
The $k{=}1$ column is trivially $1.0$ for any set function
by definition; the reported $1.001 \pm 0.008$ is the Monte
Carlo estimate, whose small excess over $1.0$ is estimation
noise from the finite CVAE sample ($K{=}50$) used to compute
each EIG, and confirms the estimator is approximately
unbiased.
Mean~$\pm$~SD over sampled pairs.
``Min $\gamma$'' is the smallest ratio \emph{among the
sampled pairs}; because the true $\gamma_{f,k}$ is a
minimum over \emph{all} admissible pairs, this sampled
minimum is an \emph{upper} bound on $\gamma_{f,k}$, not a
certificate, the true worst case can only be smaller.
We therefore read the table as an empirical distribution of
\emph{local} submodularity ratios, not as a global bound.}
\label{tab:gamma_mimic}
\centering
\renewcommand{\arraystretch}{1.2}
\begin{tabular}{ccccc}
\hline\hline
$k$ & Mean $\gamma$ & SD & Min $\gamma$ & $n$ pairs \\
\hline
$1$  & $1.001$ & $0.008$ & $0.971$ & $200$ \\
$3$  & $0.995$ & $0.043$ & $0.705$ & $200$ \\
$5$  & $0.984$ & $0.067$ & $0.662$ & $200$ \\
$10$ & $0.969$ & $0.081$ & $0.676$ & $200$ \\
\hline\hline
\end{tabular}
\end{table}

The sampled ratios $\gamma_{f,k}$ are frequently near $1$
for $k = 3,5,10$ (sample means $0.995$, $0.984$, $0.969$),
so \emph{typical} sampled panel combinations are close to
submodular.
This says nothing reassuring about the worst case: these are
\emph{local} ratios over sampled pairs, not the global
minimum $\gamma_{f,k}$ the approximation theory requires,
and the true global minimum can only be smaller than any
sampled minimum, so we do not read the near-$1$ means as
evidence of global near-submodularity or use them to certify
an approximation ratio.
Indeed the sampled lower tail ($0.66$--$0.71$) already
confirms, in line with Proposition~\ref{prop:not_submodular},
that genuine feature synergies exist in the
clinical distribution, consistent with the intuition
that some panel combinations are jointly diagnostic in a
way neither panel alone supports (e.g.\ troponin and BMP
jointly ruling in MI with concurrent AKI), and hence
that the exact-submodularity guarantee is unavailable here,
motivating the submodularity-ratio framing rather than the
NWF bound.
Any quantitative use of these ratios in a bound (for
instance substituting a representative $\gamma$ into
$1 - e^{-\gamma}$) is therefore illustrative of the regime,
not a worst-case certificate; the rigorous guarantee for
the general algorithm remains the exact Lagrangian
decomposition of Theorem~\ref{thm:lag_bound} with
$\Delta_\lambda$ measured directly where tractable.

Two further caveats apply.
First, the reported values are Monte Carlo estimates over
$200$ sampled panel-set pairs on a $300$-patient subsample,
so they carry sampling error in both the pair distribution
and the per-pair EIG estimate; bootstrap confidence
intervals over the sampled pairs would quantify the former
and are left to future work.
Second, a deliberate search for the most synergistic panel
combinations (minimizing the local ratio over sampled
$(\calL, \calR)$ pairs) surfaces clinically coherent
triples: the lowest ratios arise for combinations such as
troponin with cortisol conditioned on blood-culture and
fibrinogen ($\gamma \approx 0.76$), and CBC with troponin
conditioned on coagulation ($\gamma \approx 0.85$)
cardiac, inflammatory, and coagulation markers that are
jointly informative for acute presentations in a way no
single panel is.
This grounds the non-submodularity result in concrete
feature interactions rather than an abstract ratio.
We do not, however, derive sufficient conditions for
$\gamma \geq \gamma_0$ under weaker structure than
na\"ive Bayes (Proposition~\ref{prop:nb_submodular}), which
we leave open.

\subsection{Deployment-Faithful Availability Check
            (12-Hour Cohort)}
\label{sec:exp_12h}

The experiments above use the full-availability cohort, an
oracle-availability setting that isolates the acquisition
algorithm from the question of which tests were historically
available at a decision time
(Section~\ref{sec:experiments}).
To confirm the method also operates under a
\emph{deployment-faithful} constraint, a fixed decision
time with a realistic, per-patient availability mask, we
report a compact result on a separate, availability-%
constrained cohort studied in detail in the companion
applied work \citep{aran2025aim}.
This cohort fixes a $12$-hour decision cutoff, admits only
tests recorded within that window, and after excluding
admissions with insufficient early coverage comprises
$64{,}766$ admissions from $39{,}884$ patients (distinct
from, and substantially smaller than, the $202{,}971$-%
admission full-availability cohort used elsewhere in this
paper).
Panel costs, condition set, and feature panels are otherwise
as before; results are $5$-seed means with bootstrap $95\%$
confidence intervals at a \$30 budget under panel-level
costing.

\begin{table}[htb]
\caption{\textbf{Deployment-faithful $12$-hour cohort}
(\emph{separate} availability-constrained cohort of
\citet{aran2025aim}: $64{,}766$ admissions, $39{,}884$
patients, $12$-hour decision cutoff, per-patient
availability mask; budget \$30, panel costing, 5 seeds,
bootstrap $95\%$ CIs).
This cohort is distinct from the paper's primary
$202{,}971$-admission full-availability cohort
(Table~\ref{tab:main_results_21}); the two sets of numbers
are not comparable and are reported separately.
Best macro-F1 in \textbf{bold}.}
\label{tab:twelvehour}
\centering
\renewcommand{\arraystretch}{1.2}
\begin{tabular}{lcc}
\hline\hline
Method & Macro-F1 [95\% CI] & Mean cost (\$) \\
\hline
\textbf{EIG-Cost}  & $\mathbf{0.188}\,[0.185,\,0.191]$ & $17.28$ \\
ma2018             & $0.170\,[0.167,\,0.174]$          & $22.12$ \\
CAE                & $0.151\,[0.148,\,0.154]$          & $24.40$ \\
DIME               & $0.102\,[0.099,\,0.104]$          & $25.73$ \\
\hline\hline
\end{tabular}
\end{table}

Under this deployment-faithful constraint EIG-Cost retains
its cost--performance advantage: it attains the highest
macro-F1 at the lowest realized cost, beating the strongest
baseline (ma2018) on all five seeds
(paired $t$-test $p<0.001$; one-sided Wilcoxon $p=0.031$,
the minimum attainable at $n{=}5$), with non-overlapping
confidence intervals.
We report this single result to establish that the method's
behavior is not an artefact of oracle availability; the
full protocol, expanded baselines, calibration, subgroup,
and ablation analyses for the $12$-hour cohort are the
subject of the separate applied study \citep{aran2025aim}
and are not reproduced here.

\subsection{Ablation Studies}
\label{sec:exp_ablation}

Table~\ref{tab:ablation_lambda} reports the
$\lambda$-ablation on the full cohort at the two endpoints.
The $\lambda = 0.005$ setting reduces mean cost from
\$30.00 (unpenalized, budget-capped) to \$27.07, a $9.8\%$
reduction, with $<2\%$ relative macro-F1 loss
($0.183 \to 0.180$).
A finer sweep on a $2{,}000$-patient subsample
(Table~\ref{tab:lambda_sweep}) traces the full
cost--accuracy frontier and shows its shape.

\begin{table}[htb]
\caption{\textbf{Cost-penalty ablation on the 21-condition
cohort at budget \$30.}
Mean $\pm$ SD across 5 seeds.}
\label{tab:ablation_lambda}
\centering
\renewcommand{\arraystretch}{1.2}
\begin{tabular}{cccc}
\hline\hline
$\lambda$ & Macro-F1 & Accuracy & Cost (\$) \\
\hline
$0$ (unconstrained EIG) & $0.183 \pm 0.013$ & $0.268 \pm 0.010$ & $30.00$ \\
$0.005$ (EIG-Cost)       & $0.180 \pm 0.012$ & $0.265 \pm 0.010$ & $27.07$ \\
\hline\hline
\end{tabular}
\end{table}

\begin{table}[htb]
\caption{\textbf{Cost--accuracy frontier vs.\ $\lambda$}
($2{,}000$-patient subsample, budget \$30, 5 seeds,
mean~$\pm$~SD).
This subsample traces the frontier shape; absolute values
are not comparable to the full-cohort
Table~\ref{tab:ablation_lambda}.
At $\lambda \geq 0.05$ the policy acquires nothing and
returns the class-prior baseline (macro-F1 $\approx 0.06$).}
\label{tab:lambda_sweep}
\centering
\renewcommand{\arraystretch}{1.15}
\begin{tabular}{ccccc}
\hline\hline
$\lambda$ & Macro-F1 & Accuracy & Cost (\$) & Mean \#panels \\
\hline
$0$      & $0.193 \pm 0.003$ & $0.211$ & $28.48$ & $3.7$ \\
$0.001$  & $0.195 \pm 0.006$ & $0.213$ & $27.77$ & $3.7$ \\
$0.0025$ & $0.189 \pm 0.005$ & $0.207$ & $25.81$ & $3.4$ \\
$0.005$  & $0.176 \pm 0.004$ & $0.196$ & $20.32$ & $2.6$ \\
$0.01$   & $0.128 \pm 0.006$ & $0.156$ & $9.38$  & $1.1$ \\
$0.02$   & $0.089 \pm 0.004$ & $0.122$ & $2.85$  & $0.3$ \\
$0.05$   & $0.060 \pm 0.004$ & $0.093$ & $0.00$  & $0.0$ \\
\hline\hline
\end{tabular}
\end{table}

The frontier is smooth and monotone: increasing $\lambda$
trades macro-F1 for cost, from the near-unpenalized regime
($\lambda \leq 0.001$, spending $\approx\$28$ for F1
$\approx 0.19$) through the deployed operating point
($\lambda = 0.005$) down to complete non-acquisition
($\lambda \geq 0.05$).
A very small penalty ($\lambda = 0.001$) leaves macro-F1
statistically unchanged while lowering cost ($0.195$ vs.\
$0.193$ at \$27.8 vs.\ \$28.5); we note the point estimate
is marginally higher but the difference is within seed noise
(positive in $3$ of $5$ seeds) and we do not claim a
regularization benefit.
The deployed $\lambda = 0.005$ is thus a chosen
cost--accuracy operating point on this frontier, not the
optimiser of a separate criterion.

\paragraph{Mechanism decomposition: ordering, stopping, and
$\lambda$.}
The cost saving of EIG-Cost could in principle come from
three separable mechanisms: the cost-penalized \emph{ordering}
of panels, the \emph{stopping rule}, or the choice of
$\lambda$.
Table~\ref{tab:ablation_stopping} isolates them on a common
$2{,}000$-patient subsample with shared seeds, so every arm
sees identical patients, EIG estimates, and randomization and
the contrasts are non-confounded (absolute values differ
slightly from the full-cohort Table~\ref{tab:main_results_21}
because of subsampling and are not compared across tables).
Four policies share the same greedy panel ordering and differ
only as indicated: \emph{greedy-EIG} orders by
$\EIG(j)$ and spends to the budget;
\emph{EIG-Cost (forced)} orders by $\EIG(j)-\lambda c_j$ and
spends to the budget;
\emph{EIG-Cost (score stop)} adds the theoretically-%
characterized stopping rule (halt when no panel satisfies
$\EIG(j) > \lambda c_j$; Proposition~\ref{prop:voluntary_stop});
and \emph{$\lambda{=}0$} isolates the penalty's effect.

\begin{table}[htb]
\caption{\textbf{Mechanism decomposition on a
$2{,}000$-patient subsample} (budget \$30, 5 seeds,
mean~$\pm$~SD).
All arms share panel ordering, patients, and seeds;
absolute values are not comparable to
Table~\ref{tab:main_results_21} (subsample).}
\label{tab:ablation_stopping}
\centering
\renewcommand{\arraystretch}{1.2}
\begin{tabular}{lccc}
\hline\hline
Policy & Macro-F1 & Accuracy & Cost (\$) \\
\hline
greedy-EIG (order only)        & $0.194 \pm 0.005$ & $0.212 \pm 0.005$ & $29.05$ \\
EIG-Cost (forced, no stop)     & $0.190 \pm 0.002$ & $0.211 \pm 0.004$ & $28.52$ \\
EIG-Cost (entropy stop)        & $0.189 \pm 0.005$ & $0.210 \pm 0.004$ & $28.53$ \\
EIG-Cost (score stop)          & $0.176 \pm 0.005$ & $0.196 \pm 0.007$ & $20.31$ \\
$\lambda{=}0$ (order only)     & $0.196 \pm 0.006$ & $0.212 \pm 0.005$ & $28.48$ \\
\hline\hline
\end{tabular}
\end{table}

\begin{figure}[htb]
  \centering
  \includegraphics[width=0.72\linewidth]{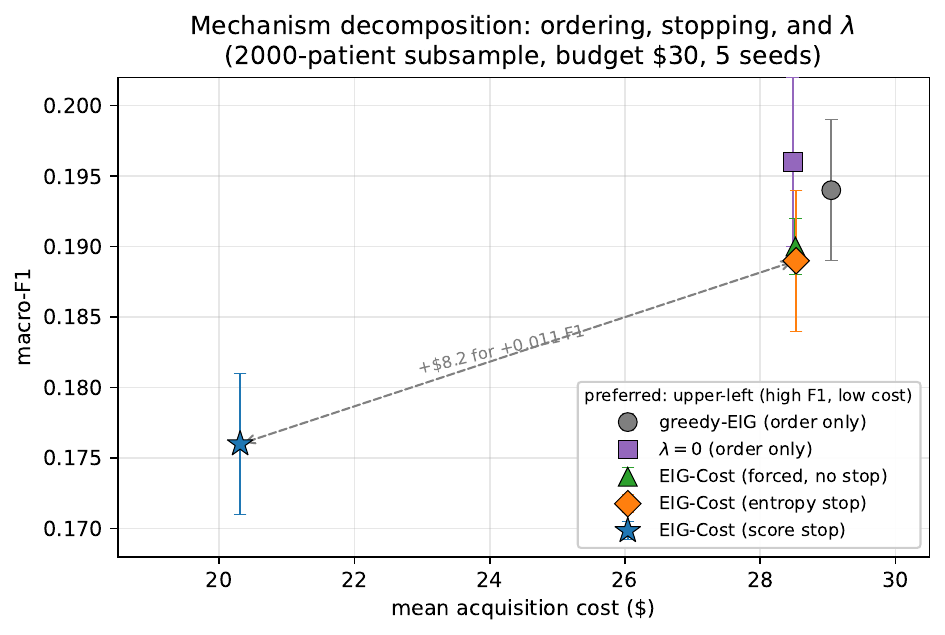}
  \caption{\textbf{Mechanism decomposition in cost--accuracy
    space} (same $2{,}000$-patient subsample, budget \$30,
    5 seeds; error bars $\pm 1$~SD).
    The four full-budget policies cluster at high cost
    (\$28--29); only the score-floor stopping rule moves the
    policy to low cost (\$20.3), and it does so at a small
    macro-F1 cost.
    The dashed arrow marks the deployed entropy-floor vs.\
    theoretically-characterized score-floor operating points:
    \$8.2 more spend for $+0.011$ macro-F1.
    Absolute values are subsample figures used only for the
    within-panel contrasts.}
  \label{fig:mechanism}
\end{figure}

Three findings follow, and we state them plainly because they
qualify the headline.
\emph{(i) Ordering is roughly neutral.} The cost-penalized
ordering (forced, $0.190$) is essentially at plain
greedy-EIG ordering ($0.194$) at matched budget: downweighting
expensive-but-informative panels does not by itself improve
accuracy.
\emph{(ii) Stopping is the cost lever, and it is a genuine
tradeoff.} The score-stop rule cuts mean cost from \$28.52 to
\$20.31 ($-29\%$) at a cost of $0.014$ macro-F1 ($-7\%$
relative).
EIG-Cost's cost savings are therefore not free: they come
from voluntary stopping, which trades a small, measured
accuracy loss for substantially lower spend.
\emph{(iii) $\lambda$ tunes the tradeoff}, moving the policy
along this cost--accuracy curve.

\paragraph{Two stopping rules: theory vs deployment.}
The stopping rule characterized by our theory is \emph{not}
the one used by the deployed system, and the same-cohort
ablation (Table~\ref{tab:ablation_stopping}) now measures
both on identical patients and seeds.
The theory (Proposition~\ref{prop:voluntary_stop}) concerns
the \emph{score-floor} rule, halt when no remaining panel
satisfies $\EIG(j) > \lambda c_j$, measured at $0.176$
macro-F1 / \$20.31.
The deployed classifier uses an \emph{entropy-floor} rule
halt when the posterior entropy $H(Y \mid x_{\calS})$
falls below a confidence threshold, measured at $0.189$
macro-F1 / \$28.53 on the same cohort.
The entropy floor thus spends \$8.2 more (\$28.5 vs.\ \$20.3)
for $+0.011$ macro-F1, a difference positive in every seed:
it keeps acquiring through panels whose penalized score has
gone negative but which still reduce diagnostic uncertainty.
Neither rule dominates: the score floor is the cost-minimal,
theoretically-characterized policy, while the entropy floor
is the confidence-maximal deployment default that buys
modest additional accuracy at materially higher cost.
We report both rather than collapse them; the $\$8.2$-for-%
$0.011$ gap between the characterized and deployed policies
is an honest and useful quantity, and it locates the two on
a common cost--accuracy frontier that $\lambda$ and the
entropy threshold jointly parametrize.
(The full-cohort entropy-floor result is $0.180$ macro-F1 at
\$27.07, Table~\ref{tab:main_results_21}; the subsample
values here are used only for the non-confounded
\emph{contrast} between rules, not as absolute figures.)

\paragraph{Difference form vs.\ density greedy at scale.}
The Lagrangian derivation uses the difference score
$\EIG(j) - \lambda c_j$ rather than the density score
$\EIG(j)/c_j$ of knapsack greedy
(Remark following Proposition~\ref{thm:integrality_gap}).
On the full MIMIC-IV test set, under a common (score-floor)
stopping rule and shared seeds, the two coincide within
seed noise: macro-F1 $0.175 \pm 0.006$ (difference) vs.\
$0.174 \pm 0.003$ (density), at mean cost \$20.30 vs.\
\$20.17, a paired difference of $+0.002$ macro-F1 whose sign
varies across the five seeds.
The difference form is therefore adopted for its dual
interpretation ($\lambda$ in nats per dollar, arising
directly from the Lagrangian), not for any empirical
advantage over density greedy, which we do not claim.

\begin{table}[htb]
\caption{\textbf{MC samples ablation ($K$) on the 21-condition
MIMIC-IV cohort at budget \$30} (300-patient subsample,
5 seeds, mean~$\pm$~SD).
Absolute accuracy here ($\approx 0.22$) is lower than the
full-test-set result of Table~\ref{tab:main_results_21}
($0.265$) because this ablation is computed on a
300-patient subsample chosen to keep the $K$-sweep
tractable; the subsample is fixed across $K$ so that
\emph{relative} comparison across $K$ is valid even though
absolute values are not directly comparable to the main
table.
Macro-F1 is flat across $K$: the $K=25$ and $K=50$ values
($0.180$ and $0.177$) are within one standard deviation,
and the $K=10$ dip ($0.165$) is sampling noise within the
inter-seed spread (see main text).
We adopt $K=50$ as a conservative default; it carries no
measured time penalty at this scale (wall-clock is
dominated by TensorFlow graph warm-up rather than by $K$),
and $K=25$ would serve equally well if MC cost dominated.}
\label{tab:ablation_K}
\centering
\renewcommand{\arraystretch}{1.2}
\begin{tabular}{cccc}
\hline\hline
$K$ & Macro-F1 & Accuracy & Time/patient (s) \\
\hline
$5$  & $0.180 \pm 0.007$ & $0.219 \pm 0.003$ & $2.95$ \\
$10$ & $0.165 \pm 0.008$ & $0.208 \pm 0.009$ & $2.99$ \\
$25$ & $0.180 \pm 0.001$ & $0.223 \pm 0.003$ & $2.54$ \\
$50$ & $0.177 \pm 0.002$ & $0.220 \pm 0.003$ & $2.45$ \\
\hline\hline
\end{tabular}
\end{table}

\paragraph{Information and calibration metrics (the
theory-aligned outcome).}
The theory optimizes transcript information, so the
metrics most directly aligned with it are the proper scoring
rules and information quantities, log-loss, Brier score,
and empirical information gain, which we regard as the
primary theoretical outcome; macro-F1 and accuracy are
reported as the clinically intuitive but theory-secondary
metrics.
We report all of them for EIG-Cost at budget \$30
(full test set, 5 seeds) in Table~\ref{tab:infometrics}, so
the objective and the downstream classification metrics can
be read together.
We do not prove a general link between information gain and
macro-F1 (a policy can raise information without changing the
argmax decision, and vice versa near a boundary); relating
transcript information to a proper predictive loss is left
open (Section~\ref{sec:discussion}).

\begin{table}[htb]
\caption{\textbf{Information and calibration metrics for
EIG-Cost at budget \$30} (21-condition full test set, 5
seeds, mean~$\pm$~SD).
Log-loss and Brier are proper scoring rules; ECE is
expected calibration error (15 bins).
Empirical information gain is
$H(Y) - \E[H(Y \mid x_{\calS})]$ with prior entropy
$H(Y) = 2.85$ nats.}
\label{tab:infometrics}
\centering
\renewcommand{\arraystretch}{1.2}
\begin{tabular}{lc}
\hline\hline
Metric & Value \\
\hline
Macro-F1                    & $0.175 \pm 0.005$ \\
Accuracy                    & $0.195 \pm 0.006$ \\
Log-loss (nats)             & $2.60 \pm 0.02$ \\
Brier score                 & $0.896 \pm 0.003$ \\
Expected calibration error  & $0.026 \pm 0.004$ \\
Mean posterior entropy (nats) & $2.65 \pm 0.004$ \\
Empirical info.\ gain (nats)  & $0.205 \pm 0.004$ \\
\hline\hline
\end{tabular}
\end{table}

The classifier is well-calibrated (ECE $=0.026$), and the
proper scoring rules corroborate the accuracy-based ranking.
The empirical information gain of $0.205$ nats is $7.2\%$ of
the prior label entropy $H(Y)=2.85$ nats.
This modest fraction is expected and should be read as
\emph{efficiency at budget}, not as a ceiling: at \$30, or
$6.6\%$ of the \$408.80 full-workup cost, the policy acquires
roughly a comparable fraction of the resolvable information,
and much of $H(Y)$ over $21$ imbalanced acute conditions is
not resolvable from laboratory panels alone.
The relevant question is information gained \emph{per dollar}
relative to alternatives, which the cost--accuracy frontier
(Figure~\ref{fig:benchmark_frontier}) addresses; the absolute
fraction of $H(Y)$ is not itself the target.

% ============================================================
\section{Discussion}
\label{sec:discussion}

\paragraph{Theory and experiment together.}
Each theoretical mechanism has a direct empirical
counterpart, detailed in Section~\ref{sec:experiments} and
summarised here.
The synthetic Gaussian sweep realizes
Proposition~\ref{prop:not_submodular}: the submodularity
ratio ranges continuously from $0.18$ to $0.96$, so the
objective is genuinely non-submodular and the greedy bound
is correct but conservative
(Figure~\ref{fig:approx_ratio}).
The certificate of Theorem~\ref{thm:lag_bound} is validated
directly (Figure~\ref{fig:certificate}): it bounds the
actual hard-budget gap on every one of $4{,}000$ instances,
with its binding component $\Delta_\lambda$ near zero.
On the clinical data, non-submodularity is visible in
jointly-but-not-individually informative panels (troponin
with ST-elevation for MI); EIG-Cost reaches its accuracy
plateau at $30$--$50\%$ of budget and then stops, the
Lagrangian penalty buying equal accuracy for
$\approx 40\%$ less than unpenalized greedy on PhysioNet;
and the widespread baseline collapse is consistent with the
voluntary-stop characterization
(Corollary~\ref{cor:collapse}).

\paragraph{The role of $\lambda$.}
Practically, the $9.8\%$ cost reduction at $\lambda = 0.005$
(mean spend \$27.07 vs.\ the \$30.00 budget-capped
unpenalized policy) for $<2\%$ relative macro-F1 loss is the
most direct clinical implication.
The dual variable $\lambda$ provides an interpretable
operating knob: at each $\lambda$, the policy enforces a
shadow price on diagnostic-test acquisition, and
institutions can tune $\lambda$ to their cost constraints
without retraining the CVAE or classifier.

\paragraph{Limitations.}
The stability guarantee (Theorem~\ref{thm:stability}) is
conditional on a margin condition holding along the
trajectory; it certifies that estimation error changes no
acquisition decision when decisions have margin, but gives
no bound when margins are small, and a distributional
(margin-free) guarantee remains open.
Proposition~\ref{prop:nb_submodular} is proved via a
modular--submodular decomposition that may not extend
directly to weakly conditionally independent feature sets.
The CVAE model-error bound
(Proposition~\ref{prop:cvae_error}) is loose: via Pinsker
it is roughly $40\times$ the empirical error and is
intended as a consistency guarantee rather than a tight
quantitative bound.
Several empirical quantities that would further sharpen the
picture remain unmeasured on the flagship instance:
bootstrap confidence intervals for the submodularity ratio
$\gamma_{f,k}$; the greedy-gap decomposition of
Theorem~\ref{thm:lag_bound} on the \emph{full} MIMIC-IV
instance (we measure $\Delta_\lambda$ exactly on
heterogeneous-cost synthetic instances in
Section~\ref{sec:exp_hetcost}, but $L(\lambda)$ is
intractable at full clinical scale); and a forced-budget
frontier that quantifies the cost of voluntary stopping on
the clinical data.
The density-greedy comparison is included both on the
heterogeneous-cost instances (Section~\ref{sec:exp_hetcost})
and at clinical scale on MIMIC-IV
(Section~\ref{sec:exp_ablation}); in both it matches the
difference form to within seed noise, so the difference
form's justification is its dual interpretation rather than
an empirical edge.
All MIMIC-IV data originate from a single institution
(BIDMC); external validation is needed before clinical
deployment.
Two further limitations bear on the clinical interpretation.
First, our \emph{primary} MIMIC-IV cohort grants access to
all tests recorded during the admission rather than only
those available at a fixed decision time; because the
historical measurement process is itself informative (which
tests were ordered depends on clinician judgment and latent
patient state), this full-availability setting can leak
information from later care and is best read as an
algorithmic oracle-availability study.
We partially address this with the deployment-faithful
$12$-hour cohort of Section~\ref{sec:exp_12h}, which fixes a
decision timestamp, restricts acquisition to tests available
by then, and applies a per-patient availability mask;
EIG-Cost retains its advantage there.
A fuller clinical evaluation on that constrained cohort
expanded baselines, external validation, and sensitivity to
the cutoff window, is the subject of the separate applied
study \citep{aran2025aim} and remains the most important
clinical extension of this line of work.
Second, our theory optimizes transcript information
$I(Y;H_T)$ while the experiments report macro-F1 and
accuracy, which are not equivalent, an acquisition can
raise information without raising macro-F1 on an imbalanced
21-class target.
We report both families of metric but do not prove a
decision-theoretic link between information gain and
classification risk; establishing one would tighten the
theory-to-metric mapping.

\paragraph{Future directions.}
\begin{itemize}
  \item Extension of the budget evaluation to
        \$50--\$100, where performance curves may plateau.
  \item Non-myopic EIG-Cost via rollout or amortized
        dynamic programming, motivated by the Lagrangian
        framing.
  \item Per-condition feature acquisition analysis to
        verify that EIG-Cost selects clinically expected
        panels (troponin for MI, lactate for sepsis).
  \item External validation on eICU
        \citep{pollard2018eicu} and fairness analysis
        by demographic subgroup \citep{bingham2026somnibus}.
\end{itemize}

% ============================================================
\section{Conclusion}
\label{sec:conclusion}

We provide a theoretical framework for the
EIG-Cost policy for cost-aware Active Feature Acquisition.
EIG-Cost is not an arbitrary heuristic modification of greedy EIG:
at each step it is the one-step greedy maximiser of the
Lagrangian-relaxed marginal-gain objective for the
\emph{expected-cost} AFA problem, restricted to actions
feasible under the deployed hard-budget cap, with $\lambda$
the dual multiplier of the expected-cost relaxation.
While the underlying combinatorial optimization remains NP-hard
\citep{krause2005}, this identification provides
a principled derivation of the cost-penalty formulation, a
characterization of near-optimality conditions, and a sensitivity
bound on imperfect $\lambda$ choice.
Complementing this, we prove that $I(Y;\,X_{\calA})$ is not
submodular in general, ruling out the standard NWF
guarantee and motivating the Lagrangian analysis.
When submodularity holds or is parametrized via the
submodularity ratio, an approximation guarantee of
$(1-e^{-\gamma})$ holds in the equal-cost, non-adaptive
setting of Corollary~\ref{thm:e2e}; we emphasize that the
general heterogeneous-cost adaptive algorithm does not
inherit this ratio, and its guarantee is instead the exact
Lagrangian decomposition of Theorem~\ref{thm:lag_bound}
together with the $\lambda$-sensitivity bound of
Proposition~\ref{thm:lambda_sensitivity} and the
empirically-measured submodularity ratio.
On MIMIC-IV, EIG-Cost attains accuracy comparable to the
strongest non-collapsed baselines while spending only 6.6\%
of full-panel cost, and Corollary~\ref{cor:collapse} provides
a theoretical explanation of the baseline collapse
failure mode observed in seven of nine evaluated baselines.
We frame the empirical contribution as cost-efficient
information acquisition, not a general
predictive-accuracy advantage.

\acks{We wish to acknowledge ISF funding, Technion compute,
MIMIC-IV data contributors, AFA-Benchmark developers. We report no financial activities
and competing interests.}

% ============================================================
\newpage
\appendix

\section{Notation}
\label{app:notation}

Table~\ref{tab:notation} collects the symbols used throughout
the paper.

\begin{table}[htb]
\caption{Summary of notation.}
\label{tab:notation}
\centering
\renewcommand{\arraystretch}{1.25}
\small
\begin{tabular}{ll}
\hline\hline
Symbol & Meaning \\
\hline
\multicolumn{2}{l}{\emph{Problem setup}}\\
$Y \in \calY$ & target label ($21$ diagnostic conditions on MIMIC-IV) \\
$X = (X_1,\dots,X_d)$ & candidate features \\
$G_1,\dots,G_m$ & acquisition panels (disjoint feature groups) \\
$c_j \geq 0$ & cost of panel/feature $j$ (2026 Medicare CPT) \\
$B$ & acquisition budget \\
$\calS \subseteq [d]$ & set of acquired feature indices \\
$x_{\calS}$ & observed values of the acquired features \\
$T$ & (random) number of acquisition steps, $T \leq d$ \\
$\pi$ & adaptive acquisition policy \\
$\Pi_B$ & policies feasible under budget $B$ \\
\hline
\multicolumn{2}{l}{\emph{Information-theoretic quantities}}\\
$I(Y; X_{\calA})$ & mutual information between $Y$ and features $\calA$ \\
$H(Y \mid x_{\calS})$ & posterior entropy of $Y$ given observations \\
$\EIG(j \mid x_{\calS})$ & expected information gain of panel $j$, $=I(Y;X_j\mid x_{\calS})$ \\
$\wEIG$ & Monte Carlo / CVAE estimate of $\EIG$ \\
$H_t$ & transcript after $t$ steps (indices and observed values) \\
$H_T$ & full acquisition transcript \\
$I(Y; H_T)$ & transcript information: the adaptive objective (Lemma~\ref{prop:chain_rule}) \\
\hline
\multicolumn{2}{l}{\emph{Policy values and optima}}\\
$V(\pi) = I(Y; H_T^\pi)$ & value of policy $\pi$ (transcript information) \\
$C(\pi)$ & expected acquisition cost of $\pi$ \\
$\Vhard$ & hard-budget deterministic optimum (deployed problem) \\
$\Vexp$ & expected-cost randomized optimum (relaxation) \\
$V(b)$ & value function of the expected-cost frontier \\
\hline
\multicolumn{2}{l}{\emph{Lagrangian and certificate}}\\
$\lambda \geq 0$ & cost penalty / dual multiplier of the expected-cost relaxation \\
$J_\lambda(\pi) = V(\pi) - \lambda C(\pi)$ & penalized objective \\
$L(\lambda) = \sup_\pi J_\lambda(\pi)$ & penalized (dual inner) optimum \\
$g(\lambda) = L(\lambda) + \lambda B$ & dual objective \\
$\lambda^* = \arg\min_\lambda g(\lambda)$ & optimal multiplier ($=V'(B)$ where differentiable) \\
$\pi^\lambda$ & feasibility-capped greedy EIG-Cost policy at $\lambda$ \\
$B^\lambda = C(\pi^\lambda)$ & expected cost of the greedy policy \\
$\Delta_\lambda = L(\lambda) - J_\lambda(\pi^\lambda)$ & penalized greedy gap (Theorem~\ref{thm:lag_bound}) \\
$\Vexp - \Vhard$ & randomization gap (Proposition~\ref{thm:integrality_gap}) \\
\hline
\multicolumn{2}{l}{\emph{Submodularity and estimation}}\\
$f(\calA) = I(Y; X_{\calA})$ & set function analyzed for submodularity \\
$\gamma_{f,k} \in [0,1]$ & submodularity ratio (Definition~\ref{def:submod_ratio}) \\
$K$ & number of Monte Carlo samples per EIG estimate \\
$\varepsilon_{\mathrm{gen}}$ & CVAE generator-shift error (Proposition~\ref{prop:cvae_error}) \\
$\varepsilon_{\mathrm{MC}}$ & Monte Carlo EIG error (Proposition~\ref{prop:mc_error}) \\
$\overline{\mathrm{KL}}$ & mean forward KL, diagnostic quantity for $\varepsilon_{\mathrm{gen}}$ \\
\hline\hline
\end{tabular}
\end{table}

\section{Proofs}
\label{app:proofs}

\subsection{Proof of Proposition~\ref{prop:nb_submodular}
            (Na\"ive Bayes Submodularity)}
\label{app:nb_proof}

We prove submodularity via the decomposition
$I(Y;\,X_{\calA}) = H(X_{\calA}) - H(X_{\calA} \mid Y)$
and show that the first term is submodular and the second
modular under the na\"ive Bayes assumption.

\paragraph{Step 1 (Modularity of $H(X_{\calA} \mid Y)$).}
Under na\"ive Bayes,
$p(X_1, \ldots, X_d \mid Y) = \prod_{k=1}^d p(X_k \mid Y)$,
so for each value $y$ of $Y$ the conditional joint entropy
decomposes additively:
\[
  H(X_{\calA} \mid Y = y)
  = \sum_{k \in \calA} H(X_k \mid Y = y).
\]
Taking expectation over $Y$,
\[
  H(X_{\calA} \mid Y)
  = \sum_y p(y) \sum_{k \in \calA} H(X_k \mid Y = y)
  = \sum_{k \in \calA} H(X_k \mid Y).
\]
Hence $\calA \mapsto H(X_{\calA} \mid Y)$ is a sum of
per-feature terms, a modular function of $\calA$.

\paragraph{Step 2 (Submodularity of $H(X_{\calA})$).}
For any collection of random variables, the joint Shannon
entropy is a submodular set function~\citep{fujishige2005submodular}.
For $\calA \subseteq \calB \subseteq [d]$ and
$j \notin \calB$,
\[
  H(X_{\calA \cup \{j\}}) - H(X_{\calA})
  = H(X_j \mid X_{\calA})
  \;\geq\; H(X_j \mid X_{\calB})
  = H(X_{\calB \cup \{j\}}) - H(X_{\calB}),
\]
since conditioning on a larger set never increases entropy.
This is the diminishing-returns property for $H$.

\paragraph{Step 3 (Combining).}
We show directly that the marginal-gain inequality is
preserved.
Fix $\calA \subseteq \calB$ and $j \notin \calB$.
The submodularity of $H$ (Step 2) gives
$H(X_{\calA \cup \{j\}}) - H(X_{\calA})
 \geq H(X_{\calB \cup \{j\}}) - H(X_{\calB})$.
The modularity of $H(\cdot \mid Y)$ (Step 1) gives
$H(X_{\calA\cup\{j\}} \mid Y) - H(X_{\calA} \mid Y)
 = H(X_j \mid Y)
 = H(X_{\calB\cup\{j\}} \mid Y) - H(X_{\calB} \mid Y)$,
so the two marginal increments of the modular term are
\emph{equal}.
Subtracting the equal modular increments from the two sides
of the submodular inequality preserves its direction:
\[
  \bigl[H(X_{\calA\cup\{j\}}) - H(X_{\calA\cup\{j\}}\mid Y)\bigr]
  - \bigl[H(X_{\calA}) - H(X_{\calA}\mid Y)\bigr]
  \;\geq\;
  \bigl[\text{same with }\calB\bigr],
\]
which is exactly
$I(Y;\,X_{\calA\cup\{j\}}) - I(Y;\,X_{\calA})
 \geq I(Y;\,X_{\calB\cup\{j\}}) - I(Y;\,X_{\calB})$,
the diminishing-returns (submodularity) inequality for
$I(Y;\,X_{\calA})$.
(Equivalently, this is the standard fact that a submodular
function minus a modular function is submodular;
see \citealt[Ch.~II]{fujishige2005submodular}.)

\paragraph{Monotonicity.}
For $\calA \subseteq \calB$, the chain rule gives
\[
  I(Y;\,X_{\calB})
  = I(Y;\,X_{\calA})
  + I(Y;\,X_{\calB \setminus \calA} \mid X_{\calA})
  \;\geq\; I(Y;\,X_{\calA}),
\]
since conditional mutual information is non-negative.
\qed

\paragraph{Remark.}
The submodularity in Step~2 holds for \emph{any} joint
distribution of the features, not just under na\"ive
Bayes.  What na\"ive Bayes contributes is the modularity in
Step~1: without conditional independence given $Y$, the
correction term $H(X_{\calA} \mid Y)$ is in general
submodular as well, and the difference of two submodular
functions need not be submodular.  This is exactly the
mechanism behind
Proposition~\ref{prop:not_submodular}: in the additive
Gaussian model $Y = \sum_j X_j + \eta$, the
features are \emph{not} conditionally independent given
$Y$ (knowing $Y$ and $X_1, \ldots, X_{d-1}$ determines
$X_d$ up to noise), the modularity in Step~1 fails, and
the resulting $I(Y;\,X_{\calA})$ is supermodular.

\subsection{Proof of Proposition~\ref{thm:lambda_sensitivity}
            (Dual Sensitivity)}
\label{app:lambda_sensitivity_proof}

We prove the three parts in turn.

\paragraph{(i) Convexity and validity.}
For each policy $\pi$, the map
$\lambda \mapsto J_\lambda(\pi) = V(\pi) - \lambda C(\pi)$ is
affine in $\lambda$.
Hence $L(\lambda) = \sup_\pi J_\lambda(\pi)$ is a pointwise
supremum of affine functions, therefore convex, and
$g(\lambda) = L(\lambda) + \lambda B$ is convex as the sum of
a convex and a linear function.
Weak duality (Theorem~\ref{thm:lag_bound}, proof) gives
$\Vhard \leq L(\lambda) + \lambda B = g(\lambda)$ for every
$\lambda \geq 0$, so each $\lambda$ is a valid upper
certificate and the tightest is at
$\lambda^* \in \arg\min_{\lambda \geq 0} g(\lambda)$, which
exists because $g$ is convex, continuous, and
$g(\lambda) \to \infty$ as $\lambda \to \infty$ (the linear
term $\lambda B$ dominates once $L(\lambda)$ saturates at
$L(0) = \max_\pi V(\pi)$ minus a non-negative cost term).

\paragraph{(ii) Certificate inflation.}
Immediate from (i): the certificate at $\lambda$ exceeds the
tightest certificate by $g(\lambda) - g(\lambda^*) \geq 0$,
and both are evaluations of the computable function $g$.

\paragraph{(iii) Quadratic rate under smoothness.}
If $g$ is $L_g$-smooth (its derivative is $L_g$-Lipschitz) on
an interval containing $\lambda^*$ and $\lambda^*$ is an
interior minimiser, then $g'(\lambda^*) = 0$ and the standard
smoothness upper bound gives, for $\lambda$ in that interval,
\[
  g(\lambda) - g(\lambda^*)
  \;\leq\; g'(\lambda^*)(\lambda - \lambda^*)
           + \frac{L_g}{2}(\lambda - \lambda^*)^2
  \;=\; \frac{L_g}{2}(\lambda - \lambda^*)^2 .
\]
Convexity gives the matching lower bound
$g(\lambda) - g(\lambda^*) \geq 0$, which is
\eqref{eq:lambda_sensitivity}.
Because $g$ is piecewise linear when the primal admits
finitely many deterministic policies, $L_g$-smoothness is a
local regularity property of the randomized-policy dual;
where it fails, part (ii) still provides the exact,
computable inflation $g(\lambda) - g(\lambda^*)$.
\qed

\subsection{Proof of Corollary~\ref{thm:e2e}
            (Realized-cardinality approximation ratio,
             equal-cost non-adaptive case)}
\label{app:e2e_proof}

We prove the equal-cost, non-adaptive statement
\eqref{eq:e2e_ratio} directly from the greedy guarantee of
\citet{das2011}.

\paragraph{Reduction to cardinality-constrained greedy.}
With uniform cost $c_j = c$, the penalized marginal gain of
any candidate $j$ from state $\calA$ is
\[
  \bigl[f(\calA\cup\{j\}) - f(\calA)\bigr] - \lambda c,
\]
where $f(\calA) = I(Y; X_{\calA})$.
The term $-\lambda c$ is the \emph{same constant} for every
candidate, so the $\arg\max$ over $j$ is unchanged by the
penalty: at each step the non-adaptive EIG-Cost rule selects
exactly the feature that unpenalized greedy on $f$ would
select.
The two rules therefore \emph{agree on the ordering} in
which features are added; they may differ only in when they
stop.
We do not rely on any monotonicity of the stopping rule
(for non-submodular $f$ a marginal gain that turns
non-positive at one step may become positive later, so
``stop at the first non-positive penalized gain'' need not
be the global stopping point).
Instead we condition on the outcome: let $k$ be the number
of features EIG-Cost has acquired \emph{when it terminates},
whatever the termination reason.
Because the two rules share the same acquisition order, the
acquired set $\calS_k$ is exactly the size-$k$ prefix of the
unpenalized-greedy order, i.e.\ the set produced by running
unpenalized greedy on $f$ for $k$ steps.
The guarantee below is stated for this realized cardinality
$k$ and makes no claim about how $k$ is determined.

\paragraph{Applying the submodularity-ratio guarantee.}
$f$ is monotone (Lemma~\ref{lem:monotone}) and has
submodularity ratio $\gamma_{f,k} \geq \gamma$
(Definition~\ref{def:submod_ratio}).
\citet{das2011} prove that for a monotone set function with
submodularity ratio at least $\gamma$, the greedy algorithm
run for $k$ steps returns a set $\calS_k$ with
\[
  f(\calS_k)
  \;\geq\; \bigl(1 - e^{-\gamma}\bigr)
           \max_{|\calA| \leq k} f(\calA).
\]
Since $\calS_k$ is exactly that greedy set (for the realized
$k$), this is \eqref{eq:e2e_ratio}.
When $\gamma = 1$ the guarantee is the classical
Nemhauser--Wolsey--Fisher bound $(1 - 1/e)$.
\qed

\paragraph{Remark: the general algorithm.}
The reduction above uses uniform cost in an essential way
(to make the penalty a candidate-independent constant) and
non-adaptivity (to make $f$ an offline set function).
Remark~\ref{rem:e2e_scope} explains why neither can be
dropped without changing the algorithm or the required
theorem; for the general heterogeneous-cost adaptive
EIG-Cost we do not claim \eqref{eq:e2e_ratio} and instead
report the exact Lagrangian decomposition
(Theorem~\ref{thm:lag_bound}) with $\Delta_\lambda$ measured
empirically where $L(\lambda)$ is tractable.

\subsection{Proof of Proposition~\ref{prop:cvae_error}
            (CVAE Model Error)}
\label{app:cvae_proof}

Let $p = p(X_j \mid x_{\calS})$ denote the true predictive
distribution and $q = p_\theta(X_j \mid x_{\calS})$ the CVAE
approximation.
The per-step EIG estimate uses $q$ to draw samples and
$p_\phi(Y \mid x_{\calS}, X_j)$ to compute conditional
entropies, so the model bias in
$\widehat{\EIG}(j \mid x_{\calS})$ is determined by the
discrepancy between $\E_q[H(Y \mid x_{\calS}, X_j)]$ and
$\E_p[H(Y \mid x_{\calS}, X_j)]$.
Since $H \in [0, \log|\calY|]$, applying H\"older's
inequality and Pinsker's inequality
\citep{tsybakov2008nonparametric}:
\[
  \bigl|\E_q[H] - \E_p[H]\bigr|
  \;\leq\; \|H\|_\infty \cdot \mathrm{TV}(p, q)
  \;\leq\; \log|\calY| \cdot
           \sqrt{\tfrac{1}{2}\,\mathrm{KL}(p \,\|\, q)}.
\]
Integrating over the marginal of $x_{\calS}$ and applying
Cauchy--Schwarz to bring the expectation inside the
square root:
\[
  \varepsilon_{\mathrm{gen}}
  \;=\; \E_{x_{\calS}}\bigl[
          |\E_q[H] - \E_p[H]|
        \bigr]
  \;\leq\; \log|\calY|
           \cdot \sqrt{\tfrac{1}{2}\,\overline{\mathrm{KL}}},
\]
where $\overline{\mathrm{KL}} =
\E_{x_{\calS}}[\mathrm{KL}(p \,\|\, q)]$ is the marginal
forward KL averaged over conditioning contexts.
We emphasize that $\overline{\mathrm{KL}}$ is a
\emph{diagnostic quantity that must be estimated separately}
(for instance by held-out density estimation of the true
conditional against the CVAE's predictive marginal); it is
\emph{not} equal to, nor in general upper-bounded by, the
CVAE's negative ELBO.
The ELBO is a variational lower bound on the marginal
log-likelihood that involves the latent-posterior
approximation and a reconstruction term, and a small ELBO
loss does not by itself certify a small forward predictive
KL $\mathrm{KL}(p(X_j\mid x_{\calS}) \,\|\, q_\theta)$
without additional assumptions (e.g.\ realisability and
exact posterior inference).
The bound above is therefore \emph{conditional} on an
independently-obtained bound on $\overline{\mathrm{KL}}$.

\paragraph{On the tightness of the bound.}
The Pinsker step is the dominant source of looseness:
it bounds TV by KL via a worst-case
inequality that is tight only for near-deterministic
distributions.
For typical CVAE training KL values (1--5 nats per
sample), the bound predicts
$\varepsilon_{\mathrm{gen}} \leq O(\log|\calY|)$
nats, which is essentially trivial for the 21-class
MIMIC-IV cohort ($\log 21 \approx 3.04$ nats).
The bound is therefore best read as a \emph{convergence
guarantee} ($\varepsilon_{\mathrm{gen}} \to 0$ as
$\overline{\mathrm{KL}} \to 0$) rather than a tight
quantitative bound.
Section~\ref{sec:exp_gaussian} (Figure~\ref{fig:mc_error})
measures the empirical estimation error $\varepsilon$
on the additive Gaussian model and finds it is roughly
$40\times$ smaller than the corresponding Hoeffding/Pinsker
worst-case bound for the MC term, typical of
information-theoretic concentration bounds in practice.

% ============================================================
\section{Experimental Details}
\label{app:exp_details}

\paragraph{MIMIC-IV.}
We construct the cohort from MIMIC-IV admissions with a
patient-level train/val/test split (\SPLIT{}) stratified by
condition, granting access to all diagnostic tests recorded
during the admission (see Section~\ref{sec:experiments} for
the contrast with the 12-hour-window regime).
The 21-condition cohort comprises \COHORTSIZE{} patients with
55 features grouped into 30 clinical panels by CPT code.
The MaskableClassifier is a 4-layer MLP
($55 \to 256 \to 128 \to 64 \to 21$, ReLU, BatchNorm,
dropout 0.3 in early layers, 0.2 in late) over the
concatenation of masked features and the binary mask
($2 \times 55 = 110$ inputs).
The TabularMAE CVAE has encoder
($110 \to 128 \to 64$ ReLU) feeding a latent
$z$-mean/$z$-log-variance pair of dimension~48, and
decoder ($z + \text{mask} \to 64 \to 128 \to 55$).
Both networks are trained with Adam (lr~$10^{-3}$,
batch size 256) for 100 epochs; the CVAE additionally
uses KL-annealing over the first 20 epochs.

\paragraph{Benchmark datasets.}
CKD uses the UCI Chronic Kidney Disease dataset
(400 patients, 24 features, 8 panels) with categorical
features binary-encoded
({normal/abnormal, present/notpresent, yes/no, good/poor}
$\mapsto \{1, 0\}$) and NaN imputed by column median.
PhysioNet uses the pre-processed PhysioNet Challenge
2012 set-A CSV (12{,}000 patients, 41 features, 12 panels)
with NaN imputed by column median.
Panel groupings and CPT-proxy costs are listed in
Table~\ref{tab:benchmarks}.
For each dataset, an 80/20 train/test split is used,
classifier and CVAE are trained from scratch on the
training portion (architecture identical to MIMIC-IV,
adjusted for input/output dimensions), and EIG-Cost is
evaluated on the test portion with $\lambda = 0.005$
scaled to the dataset's cost range.

\paragraph{$\lambda$-calibration.}
Across all three datasets, $\lambda$ is chosen by a
single sweep over {0, 0.001, 0.005, 0.01, 0.05, 0.1}
on a held-out validation portion of the training data and
the value maximizing macro-F1 is selected.
The reported $\lambda = 0.005$ is the modal best value
across datasets and seeds.

\paragraph{Monte Carlo sample size.}
$K = 50$ across all experiments, selected by the
ablation in Table~\ref{tab:ablation_K} showing that
performance is stable for $K \geq 25$.

\paragraph{Seed protocol.}
All experiments, synthetic, benchmark, and MIMIC-IV
use 5 seeds.
Any table reporting mean $\pm$ SD does so over these 5
seeds unless stated otherwise; earlier drafts reported 3
seeds for a subset of MIMIC-IV ablations, which have since
been extended to 5 for consistency.
Seeds control: numpy/torch RNG initialization,
train/val/test splitting where applicable, and the CVAE
sampling for $\widehat{\EIG}$ estimation.

\paragraph{Compute and scaling.}
All experiments run on a single shared multi-GPU server
(4$\times$ Tesla V100 with 16--32~GB memory each);
training a single MIMIC-IV CVAE+classifier pair takes
$\approx 30$ minutes on one V100.
Evaluating one $\pi^\lambda$ episode at $K=50$ on MIMIC-IV
takes $\approx 1.9$ seconds per patient, embarrassingly
parallel across patients.
A per-patient timing sweep confirms the expected cost
structure.
Runtime is effectively \emph{independent of the Monte Carlo
sample count} $K$ over $K \in \{10,25,50,100,200\}$
($1.92$--$1.95$s), because the $K$ completions are scored in
a single batched classifier call rather than sequentially.
It scales \emph{quadratically in the number of panels} $d$
($0.18$s at $d{=}5$, $0.50$ at $10$, $0.78$ at $15$, $1.19$
at $20$, $1.92$ at $30$), consistent with the $O(K d^2)$
candidate-evaluation cost of scoring every remaining panel
at every acquisition step.
The total compute budget across the experiments reported
here is approximately 200 GPU-hours.

\paragraph{Statistical reporting.}
With only five seeds we do not report conventional
significance tests, because the exact two-sided Wilcoxon
signed-rank test cannot attain $p < 0.0625$ at $n = 5$
(its most extreme outcome has exact two-sided
$p = 0.0625$).
We instead report $95\%$ $t$-confidence intervals over the
five seeds in Table~\ref{tab:benchmarks} and rely on
interval overlap and effect sizes rather than $p$-value
thresholds.
EIG-Cost vs.\ Greedy-EIG macro-F1 differences are small and
non-directional across seeds, consistent with their
near-identical acquisition ordering under a positive
marginal score, with voluntary stopping producing lower
cost at equal value.
We flag the low seed count as a limitation
(Section~\ref{sec:discussion}) and note that a
patient-level bootstrap would provide finer-grained
uncertainty but must treat correlated per-patient
predictions with care to avoid pseudo-replication.

% ============================================================
\section{Diagnostic Conditions}
\label{app:conditions}

\begin{table}[htb]
\caption{21 acute diagnostic conditions in the MIMIC-IV
cohort.}
\label{tab:conditions}
\centering
\renewcommand{\arraystretch}{1.1}
\small
\begin{tabular}{lll}
\hline\hline
\multicolumn{3}{c}{Condition} \\
\hline
Acute Kidney Injury  & Endocarditis         & Pneumonia \\
Anemia               & GI Bleed             & Pulmonary Embolism \\
Atrial Fibrillation  & Heart Failure        & Respiratory Failure \\
Cirrhosis            & Hyponatremia         & Rhabdomyolysis \\
COPD Exacerbation    & Liver Failure        & Sepsis \\
Diabetic Emergency   & Meningitis           & Stroke \\
                     & Myocardial Infarction & UTI / Urosepsis \\
                     & Pancreatitis         & \\
\hline\hline
\end{tabular}
\end{table}

% ============================================================
\section{Test Panel Costs}
\label{app:costs}

\begin{table}[htb]
\caption{Selected test panels with 2026 Medicare CPT costs
(representative subset; full panel list in
Appendix~\ref{app:exp_details}).}
\label{tab:costs}
\centering
\renewcommand{\arraystretch}{1.1}
\small
\begin{tabular}{lcc}
\hline\hline
Panel & Features (\#) & Cost (\$) \\
\hline
Vital Signs (pre-acquired)     & 5  & 0.00   \\
Basic Metabolic Panel          & 7  & 3.17   \\
Complete Blood Count (+ diff.) & 8  & 6.51   \\
Prothrombin Time / INR         & 2  & 7.77   \\
Lactate                        & 1  & 8.46   \\
Troponin-I                     & 1  & 10.30  \\
C-Reactive Protein             & 1  & 11.29  \\
Liver Function Tests           & 6  & 12.47  \\
ESR                            & 1  & 13.50  \\
Brain Natriuretic Peptide      & 1  & 16.63  \\
Lipase / Amylase               & 2  & 80.46  \\
$\ldots$ (20 further panels)   & $\ldots$ & $\ldots$ \\
\hline
Total (all 30 panels)          & 55 & 408.80 \\
\hline\hline
\end{tabular}
\end{table}

% ============================================================
\section{Per-Seed Results}
\label{app:seeds}

\begin{table}[htb]
\caption{Macro-F1 at budget \$30 on the 21-condition
cohort, summarised over 5 seeds
(mean $\pm$ SD, with min and max across seeds).
Collapsed methods show negligible inter-seed variance.}
\label{tab:seeds}
\centering
\renewcommand{\arraystretch}{1.15}
\small
\begin{tabular}{llccc}
\hline\hline
Family & Method & Min & Max & Mean $\pm$ SD \\
\hline
\textbf{Ours} & EIG-Cost    & 0.169 & 0.191 & $0.180 \pm 0.012$ \\
\hline
\multirow{3}{*}{RL}
& AACO-NN     & 0.158 & 0.179 & $0.169 \pm 0.011$ \\
& AACO        & 0.153 & 0.171 & $0.162 \pm 0.009$ \\
& JAFA        & 0.010 & 0.012 & $0.011 \pm 0.001$ \\
\hline
\multirow{3}{*}{\shortstack[l]{Info.-\\theoretic}}
& CAE         & 0.140 & 0.149 & $0.145 \pm 0.005$ \\
& DIME        & 0.025 & 0.029 & $0.027 \pm 0.002$ \\
& ODIN-mb     & 0.020 & 0.022 & $0.021 \pm 0.001$ \\
\hline
\multirow{2}{*}{Static}
& Permutation & 0.098 & 0.104 & $0.101 \pm 0.003$ \\
& ma2018      & 0.057 & 0.065 & $0.061 \pm 0.004$ \\
\hline
Static & OL (w/ mask) & 0.017 & 0.019 & $0.018 \pm 0.001$ \\
\hline\hline
\end{tabular}
\end{table}

EIG-Cost achieves the highest macro-F1 across all five
seeds (its per-seed minimum, $0.169$, exceeds the mean of
every baseline except AACO-NN).
The gap over AACO-NN ($\Delta = 0.011$) is within one
standard deviation; the gaps over AACO ($\Delta = 0.018$,
$\sigma_{\mathrm{AACO}} = 0.009$) and CAE
($\Delta = 0.035$, $\sigma_{\mathrm{CAE}} = 0.005$) are
more robust.
Collapsed methods (JAFA, DIME, ODIN-mb, OL) show
$\sigma \leq 0.001$, consistent with deterministic stop
behavior independent of initialization, as predicted by
Proposition~\ref{prop:voluntary_stop}.

% ============================================================
\vskip 0.2in
\bibliography{refs}

\end{document}